\DeclareMathOperator*{\argmin}{\arg\!\min}
\DeclareMathOperator*{\argmax}{\arg\!\max}
\DeclareMathOperator*{\softmin}{\mathrm{soft}\!\min}
\newcommand{\widesim}[2][1.5]{
  \mathrel{\overset{#2}{\scalebox{#1}[1]{$\sim$}}}
}
\newcommand\blfootnote[1]{%
  \begingroup
  \renewcommand\thefootnote{}\footnote{#1}%
  \addtocounter{footnote}{-1}%
  \endgroup
}
\newcommand*{\skipnumber}[2][1]{%
   {\renewcommand*{\alglinenumber}[1]{}\State #2}%
   \addtocounter{ALG@line}{-#1}}
\begin{document}

\fontsize{11}{13pt}\selectfont

\title{Robustness to Adversarial Perturbations\\in Learning from Incomplete Data}

\author{
Amir~Najafi$^*$
\and
Shin-ichi~Maeda$^\dagger$
\and
Masanori~Koyama$^\dagger$
\and
Takeru~Miyato$^\dagger$
}

\date{}
\maketitle

\newtheorem{thm}{Theorem}[section]
\newtheorem{thm2}{Theorem}
\newtheorem{corl}{Corollary}
\newtheorem{note}[thm2]{Note}
\newtheorem{lemma}{Lemma}
\newtheorem{definition}{Definition}
\newtheorem{remark}{Remark}
\newtheorem{claim}{Claim}

\vspace*{-5mm}
\begin{center}
*~Computer Engineering Department\\
Sharif University of Technology, Tehran, Iran
\\[2mm]
$\dagger$~Preferred Networks Inc., Tokyo, Japan
\end{center}
\vspace*{8mm}

\begin{abstract}
 What is the role of unlabeled data in an inference problem, when the presumed underlying distribution is adversarially perturbed? To provide a concrete answer to this question, this paper unifies two major learning frameworks: Semi-Supervised Learning (SSL) and Distributionally Robust Learning (DRL). We develop a generalization theory for our framework based on a number of novel complexity measures, such as an adversarial extension of Rademacher complexity and its semi-supervised analogue. Moreover, our analysis is able to quantify the role of unlabeled data in the generalization under a more general condition compared to the existing theoretical works in SSL. Based on our framework, we also present a hybrid of DRL and EM algorithms that has a guaranteed convergence rate. When implemented with deep neural networks, our method shows a comparable performance to those of the state-of-the-art on a number of real-world benchmark datasets.
\blfootnote{E-mails: \texttt{najafy@ce.sharif.edu}, \texttt{\{ichi,masomatics,miyato\}@preferred.jp} .}
\end{abstract}


\section{Introduction}
\label{sec:intro}

Robustness to adversarial perturbations has become an essential feature in the design of modern classifiers ---in particular, of deep neural networks. This phenomenon originates from several empirical observations, such as \cite{szegedy2013intriguing} and \cite{nguyen2015deep}, which show deep networks are vulnerable to adversarial attacks in the input space. So far, plenty of novel methodologies have been introduced to compensate for this shortcoming. Adversarial Training (AT) \cite{43405}, Virtual AT \cite{miyato2018virtual} or Distillation \cite{papernot2016distillation} are just examples of some promising methods in this area. The majority of these approaches seek an effective defense against a {\it {point-wise adversary}}, who shifts input data-points toward adversarial directions, in a separate manner. However, as shown by \cite{staib2017distributionally}, a {\it {distributional adversary}} who can shift the data distribution instead of the input data-points is provably more detrimental to learning. This suggests that one can greatly improve the robustness of a classifier by improving its defense against a distributional adversary rather than a point-wise one. 
This motivation has led to the development of Distributionally Robust Learning (DRL) \cite{ben2013robust}, which has attracted intensive research interest over the last few years \cite{shafieezadeh2015distributionally,sinha2018certifying,hu2018does,esfahani2017data}. 

Despite of all the advancements in supervised or unsupervised DRL, the amount of researches tackling this problem from a semi-supervised angle is slim to none \cite{blanchet2017semi}. Motivated by this fact, we set out to propose a distributionally robust method that can handle Semi-Supervised Learning (SSL) scenarios. Our proposed method is an extension of self-learning \cite{grandvalet2005semi,zhu2006semi,chapelle2009semi}, and can cope with all existing learning frameworks, such as neural networks. Intuitively, we first try to infer soft-labels for the unlabeled data, and then search for suitable classification rules that demonstrate low sensitivity to perturbation around these soft-label distributions.

Parts of this paper can be considered as a semi-supervised extension of the general supervised DRL developed in \cite{sinha2018certifying}. Computational complexity of our method, for a moderate label-set size, is only slightly above those of its fully-supervised rivals. To optimize our model, we design a Stochastic Gradient Descent (SGD)-based algorithm with a theoretically-guaranteed convergence rate. In order to address the generalization of our framework, we introduce a set of novel complexity measures such as {\it {Adversarial Rademacher Complexity}} and {\it {Minimal Supervision Ratio}} (MSR), each of which are defined w.r.t. the hypothesis set and probability distribution that underlies input data-points. As long as the ratio of the labeled samples in a dataset (supervision ratio) exceeds MSR, true adversarial risk can be bounded. Also, one can arbitrarily decrease MSR by tuning the model parameters at the cost of increasing the generalization bound; This means our theoretical guarantees hold for all semi-supervised scenarios. We summarize the theoretical contribution of our work in Table \ref{tab:summary}.


We have also investigated the applicability of our method, denoted by SSDRL, via extensive computer experiments on datasets such as MNIST \cite{lecun1998gradient}, SVHN \cite{netzer2011reading}, and CIFAR-10 \cite{krizhevsky2009learning}. When implemented with deep neural networks, SSDRL outperforms rivals such as Pseudo-Labeling (PL) \cite{lee2013pseudo} and the supervised DRL in \cite{sinha2018certifying} (simply denoted as DRL) on all the above-mentioned datasets. In addition, SSDRL demonstrates a comparable performance to that of Virtual Adversarial Training (VAT) \cite{miyato2018virtual} on MNIST and CIFAR-10, while outperforms VAT on SVHN.

The rest of the paper is organized as follows: Section \ref{sec:notations} specifies the notations, and Section \ref{sec:priorworks} reviews the related works. The basic idea behind the proposed method is outlined in Section \ref{sec:proposed:main}, parameter optimization is described in Section \ref{sec:proposed:paramoptim} and generalization is analyzed in Section \ref{sec:proposed:general}. Section \ref{sec:exp} is devoted to experimental results. Finally, Section \ref{sec:future} concludes the paper.

\begin{SCtable}[0.73][t]
  \centering
		\caption{\label{tab:summary}Comparison between the proposed method (SSDRL) and some existing frameworks: DRL of \cite{sinha2018certifying}, Pseudo Labeling (PL) \cite{lee2013pseudo}, and Virtual Adversarial Training (VAT) \cite{miyato2018virtual}.} 
		\begin{tabular}{lcccc}
			\toprule
		     & {DRL} & {PL} & {VAT} & {SSDRL} 
		     \\
            \midrule
            Generalization Bound & \multirow{1}{*}{$\checkmark$} & \multirow{1}{*}{$\times$} & \multirow{1}{*}{$\times$} & \multirow{1}{*}{$\checkmark$}\\
             \midrule
            Convergence Guarantee & \multirow{1}{*}{$\checkmark$} & \multirow{1}{*}{$\times$} & \multirow{1}{*}{$\times$} & \multirow{1}{*}{$\checkmark$}\\
            \midrule
            Adversarial Robustness & \multirow{1}{*}{$\checkmark$} & \multirow{1}{*}{$\times$} & \multirow{1}{*}{$\checkmark$} & \multirow{1}{*}{$\checkmark$}\\
            \midrule
            Semi-Supervised Learning & \multirow{1}{*}{$\times$} & \multirow{1}{*}{$\checkmark$} & \multirow{1}{*}{$\checkmark$} & \multirow{1}{*}{$\checkmark$}\\
			\bottomrule
		\end{tabular}
\end{SCtable}


\subsection{Notations}
\label{sec:notations}

We extend the notations used in \cite{sinha2018certifying}. Assume $\mathcal{Z}$ to be an input space, $\Theta$ to be a parameter set, and $\ell:\mathcal{Z}\times\Theta\rightarrow\mathbb{R}$ a corresponding parametric loss function. Observation space $\mathcal{Z}$ can either be the feature space $\mathcal{X}$ in unsupervised scenarios, or the space of feature-label pairs, i.e., $\mathcal{Z}\triangleq\mathcal{X}\times\mathcal{Y}$, where $\mathcal{Y}$ denotes the set of labels. For simplicity, we only consider finite label-sets. By $M\left(\mathcal{Z}\right)$, we mean the set of all probability measures supported on $\mathcal{Z}$. Assume $c:\mathcal{Z}\times\mathcal{Z}\rightarrow\left[0,+\infty\right)$ to be a non-negative and lower semi-continuous function, where $c\left(\boldsymbol{z},\boldsymbol{z}\right)=0$ for all $\boldsymbol{z}\in\mathcal{Z}$. We occasionally refer to $c$ as {\it {transportation cost}}. The following definition formulates the {\it {Wasserstein}} distance $W_c\left(P,Q\right)$ between two distributions $P,Q\in M\left(\mathcal{Z}\right)$, w.r.t. $c$ \cite{shafieezadeh2015distributionally}:
\\[-3mm]
\begin{definition}[Wasserstein distance]
\label{def:wasserstein}
The {\it {Wasserstein}} distance between two distributions $P$ and $Q$ in $M\left(\mathcal{Z}\right)$, with respect to cost $c$ is defined as:
\begin{align}
\label{eq:WasserDistDef}
W_c\left(P,Q\right)&\triangleq \inf_{\mu\in M\left(\mathcal{Z}^2\right)}\int c\left(\boldsymbol{z},\boldsymbol{z}'\right)\mathrm{d}\mu\left(\boldsymbol{z},\boldsymbol{z}'\right)
\\
\mathrm{subject~to}~&\quad
\mu\left(\cdot,\mathcal{Z}\right)=P~,~\mu\left(\mathcal{Z},\cdot\right)=Q,
\nonumber
\end{align}
where $M\left(\mathcal{Z}^2\right)$ represents the set of all couplings between any two random variables supported on $\mathcal{Z}$. Also, $\mu\left(\mathcal{Z},\cdot\right)$ and $\mu\left(\cdot,\mathcal{Z}\right)$ denote the marginals of $\mu$ taken w.r.t. the first and second variables, respectively.
\end{definition}
$W_c\left(P,Q\right)$ measures the minimal cost of moving $P$ to $Q$, where the cost of moving one unit of mass from $\boldsymbol{z}$ to $\boldsymbol{z}'$ is given by $c\left(\boldsymbol{z},\boldsymbol{z}'\right)$. Also, for $\epsilon\ge0$ and an arbitrary distribution $Q\in M\left(\mathcal{Z}\right)$, we define an $\epsilon$-ambiguity set (or a {\it {Wasserstein $\epsilon$-ball}}) as 
\begin{equation}
\mathcal{B}_{\epsilon}\left(Q\right)
\triangleq
\left\{
P\in M\left(\mathcal{Z}\right)\vert~W_c\left(P,Q\right)\leq \epsilon
\right\}.
\end{equation}
Training dataset is shown by $\boldsymbol{D}\triangleq\left\{\boldsymbol{Z}_1,\ldots,\boldsymbol{Z}_n\right\}$, with samples being drawn i.i.d. from a fixed (and unknown) distribution $P_0\in M\left(\mathcal{Z}\right)$, where $n$ is the dataset size. For a dataset $\boldsymbol{D}$, let $\hat{\mathbb{P}}_{\boldsymbol{D}}\in M\left(\mathcal{Z}\right)$ be the following empirical measure:
\begin{equation}
\hat{\mathbb{P}}_{\boldsymbol{D}}\triangleq
\frac{1}{n}\sum_{i=1}^{n}\delta_{\boldsymbol{Z}_i},
\end{equation}
where $\delta_{\boldsymbol{z}}$ denotes the Dirac delta function at point $\boldsymbol{z}\in\mathcal{Z}$. Accordingly, $\mathbb{E}$ and $\hat{\mathbb{E}}_{\boldsymbol{D}}$ represent the statistical and empirical expectation operators, respectively. For a distribution $P\in M\left(\mathcal{X}\times\mathcal{Y}\right)$, $P_{\boldsymbol{X}}\left(\cdot\right)\triangleq\sum_{y\in\mathcal{Y}}P\left(\cdot,y\right)$ denotes the marginal distribution over $\mathcal{X}$, and $P_{\vert\boldsymbol{X}}\in M\left(\mathcal{Y}\right)$ is the conditional distributions over labels given feature vector $\boldsymbol{X}\in\mathcal{X}$. For the sake of simplicity in notations, for $\boldsymbol{Z}=\left(\boldsymbol{X},y\right)\in\mathcal{X}\times\mathcal{Y}$ and a function $f$, the notations $f\left(\boldsymbol{Z}\right)$ and $f\left(\boldsymbol{X},y\right)$ have been used, interchangeably.
\subsection{Background and Related Works}
\label{sec:priorworks}

DRL attempts to minimize a worst-case risk against an adversary. The adversary has a limited budget to alter the data distribution $Q\in M\left(\mathcal{Z}\right)$, in order to inflict the maximum possible damage. Here, $Q$ can either be the true measure $P_0$ or the empirical one $\hat{\mathbb{P}}_{\boldsymbol{D}}$. The mentioned learning scenario can be modeled by a game between a learner and an adversary whose stationary point is the solution of a minimax problem \cite{hu2018does}. Mathematically speaking, DRL can be formulated as \cite{shafieezadeh2015distributionally,esfahani2017data}:
\begin{equation}
\inf_{\theta\in\Theta}\sup_{P\in\mathcal{B}_{\epsilon}\left(Q\right)}\mathbb{E}_P\left\{
\ell\left(\boldsymbol{Z};\theta\right)
\right\}.
\label{eq:DROorigin}
\end{equation}
{\it {Wasserstein}} metric has been widely used to quantify the strength of adversarial attacks \cite{shafieezadeh2015distributionally,sinha2018certifying,esfahani2017data,blanchet2017semi}, thanks to (i) its fundamental relations to adversarial robustness \cite{cranko2018monge} and  (ii) its mathematically well-studied dual-form properties \cite{esfahani2017data}.
In \cite{shafieezadeh2015distributionally}, authors have reformulated DRL into a convex program for the particular case of logistic regression. Convergence and generalization analysis of DRL have been addressed in \cite{sinha2018certifying} in a general context, while the finding of a proper ambiguity set size, i.e. $\epsilon$, has been tackled in \cite{duchi2016statistics}. An interesting analysis on DRL methods with $f$-divergences is given in \cite{hu2018does}. Sample complexity of DRL has been reviewed by \cite{schmidt2018adversarially} and \cite{cullina2018pac}. We conjecture that there might be close relations between our complexity analysis in Section \ref{sec:proposed:general} and some of the results in the latter studies. However, a careful investigation regarding this issue goes beyond the scope of this paper. 

On the other hand, recent abundance of unlabeled data has made SSL methods widely popular \cite{miyato2018virtual,dai2017good}. See \cite{zhu2006semi} for a comprehensive review on classical SSL approaches. Many robust SSL algorithms have been proposed so far \cite{balsubramani2015scalable,yan2016robust}, however, their notion of robustness is mostly different from the one considered in this paper. In \cite{loog2016contrastive}, author has proposed a pessimistic SSL approach which is guaranteed to have a better, or at least equal, performance when it takes unlabeled data into account.
We show that a special case of our method reduces to an adversarial extension of \cite{loog2016contrastive}. From a theoretical perspective, guarantees on the generalization of SSL can only be provided under certain assumptions on the choice of hypothesis set and the true data distribution \cite{zhu2006semi,chapelle2009semi,singh2009unlabeled}. 
For example, in \cite{chapelle2009semi} a {\it {compatibility}} function is introduced to restrict the relation between a model set and an input data distribution. Also, author of  \cite{rigollet2007generalization} has theoretically analyzed SSL under the so-called {\it {cluster assumption}}, in order to establish an improvement guarantee for a situation where unlabeled data had been experimentally shown to be helpful. 
The fundamental reason behind such assumptions is that lack of any prior knowledge about the information-theoretic relations between a feature vector and its corresponding label, simply makes unlabeled data to be useless for classification. Not to mention that improper assumptions about the relation of feature-label pairs, for example by employing unsuitable hypothesis sets, could actually degrade the classification accuracy in semi-supervised scenarios.
In Section \ref{sec:proposed:general}, we propose a novel {\it {compatibility function}} that works under a general setting and enables us to theoretically establish a generalization bound for our method.

Finally, the only work prior to this paper that also falls in the cross section of DRL and SSL is \cite{blanchet2017semi}. However, the method in \cite{blanchet2017semi} severely restricts the support of adversarially-altered distributions, so that the adversary is left to choose from a set of delta-spikes over only labeled and augmented unlabeled samples. Thus, one cannot expect a considerable improvement in the distributional robustness in this case, because it does not let the adversary to freely perturb training data-points toward arbitrary directions.  


\section{Proposed Framework}
\label{sec:proposed}
From now on, let us assume $\mathcal{Z}\triangleq\mathcal{X}\times\mathcal{Y}$. In a semi-supervised configuration, dataset $\boldsymbol{D}$ consists of two non-overlapping parts: $\boldsymbol{D}_{\mathrm{l}}$ (labeled) and $\boldsymbol{D}_{\mathrm{ul}}$ (unlabeled). It should be noted that learner can only observe partial information about each sample in $\boldsymbol{D}_{\mathrm{ul}}$, namely, its feature vector. Let us denote $\mathcal{I}_{\mathrm{l}}$ and $\mathcal{I}_{\mathrm{ul}}$ as the index sets corresponding to the labeled and unlabeled data points, respectively. Thus, we have $\boldsymbol{D}_{\mathrm{l}}=\left\{\left(\boldsymbol{X}_i,y_i\right)\vert~i\in\mathcal{I}_{\mathrm{l}}\right\}$, and $\boldsymbol{D}_{\mathrm{ul}}=\left\{\boldsymbol{X}_i\vert~i\in\mathcal{I}_{\mathrm{ul}}\right\}$. The unknown labels of the samples in $\boldsymbol{D}_{\mathrm{ul}}$ can be thought as a set of corresponding random variables supported on $\mathcal{Y}$. DRL in \eqref{eq:DROorigin} cannot readily extend to this {\it {partially labeled}} setting, since it needs complete access to all the feature-label pairs in $\boldsymbol{D}$. In order to bypass this barrier, we need to somehow address the additional stochasticity that originates from incorporating unlabeled data in the learning procedure. The following definition can be helpful for this aim:
\\[-3mm]
\begin{definition}
The consistent set of probability distributions $\hat{\mathcal{P}}\left(\boldsymbol{D}\right)\subseteq M\left(\mathcal{Z}\right)$ with respect to a partially-labeled dataset $\boldsymbol{D}=\boldsymbol{D}_{\mathrm{l}}\cup\boldsymbol{D}_{\mathrm{ul}}$ is defined as
\begin{gather}
\hat{\mathcal{P}}\left(\boldsymbol{D}\right)\triangleq
\left\{
\left(\frac{n_{\mathrm{l}}}{n}\right)\hat{\mathbb{P}}_{\boldsymbol{D}_{\mathrm{l}}}
+
\left(\frac{n_{\mathrm{ul}}}{n}\right)\hat{\mathbb{P}}_{\boldsymbol{D}_{\mathrm{ul}}}\cdot\Omega
~\big\vert
~
\Omega\in M^{\mathcal{X}}\left(\mathcal{Y} \right)
\right\},
\nonumber
\end{gather}
where $n_{\mathrm{l}}$ and $n_{\mathrm{ul}}$ are the sizes of $\boldsymbol{D}_{\mathrm{l}}$ and $\boldsymbol{D}_{\mathrm{ul}}$, respectively. Also, $M^{\mathcal{X}}\left(\mathcal{Y} \right)$ denotes the set of all conditional distributions over $\mathcal{Y}$, given values in $\mathcal{X}$.  
\label{def:consistent}
\end{definition}
Distributions in $\hat{\mathcal{P}}\left(\boldsymbol{D}\right)$ have delta-spikes over supervised samples in $\boldsymbol{D}_{\mathrm{l}}$. However, for samples in $\boldsymbol{D}_{\mathrm{ul}}$, singularity is only over the feature vectors while conditional distributions over their corresponding labels are free to be anything, i.e., samples can even have soft-labels. Note that the empirical measure which corresponds to the true complete dataset is also somewhere inside $\hat{\mathcal{P}}\left(\boldsymbol{D}\right)$. Our aim is to choose a suitable measure from $\hat{\mathcal{P}}\left(\boldsymbol{D}\right)$, and then use it for \eqref{eq:DROorigin}.


\subsection{Self-Learning: Optimism vs. Pessimism}
\label{sec:proposed:main}

We focus on a well-known family of SSL approaches, called self-learning  \cite{grandvalet2005semi,amini2002semi}, and then combine it to the framework of DRL. Methods that are built upon self-learning, such as Expectation-Maximization (EM) algorithm \cite{basu2002semi}, aim to transfer the knowledge from labeled samples to unlabeled ones through what is called \textit{pseudo-labeling}. More precisely, a learner is (repeatedly) trained on the supervised portion of a dataset, and then employs its learned rules to assign pseudo-labels to the remaining unlabeled part. This procedure can assign either hard or soft labels to unlabeled features. This way, all these artificially-labeled unsupervised samples can also join in for the training of the learner in the final stages of learning. However, such methods are prone to over-fitting if the information flow from $\boldsymbol{D}_{\mathrm{l}}$ to $\boldsymbol{D}_{\mathrm{ul}}$ is not properly controlled. One way to overcome this issue is to use soft-labeling, which maintains a minimum level of uncertainty within the unlabeled data points. By combining the above arguments with the core idea of DRL in \eqref{eq:DROorigin}, we propose the following learning scheme:
\begin{gather}
\label{eq:main2}
\inf_{\theta\in\Theta}~\inf_{S\in\hat{\mathcal{P}}\left(\boldsymbol{D}\right)}
\left\{
\sup_{P\in\mathcal{B}_{\epsilon}\left(S\right)}
\mathbb{E}_P\left\{
\ell\left(\boldsymbol{X},y;\theta\right)
\right\}
+
\left(\frac{1-\eta}{\lambda}\right)
\hat{\mathbb{E}}_{\boldsymbol{D}_{\mathrm{ul}}}\left\{
\mathbb{H}\left(S_{\vert\boldsymbol{X}}\right)\right\}
\right\},
\end{gather}
where $\lambda$ is a user-defined parameter, $\eta\triangleq{n_{\mathrm{l}}}/{n}$ is called the {\it {supervision ratio}}, and $\mathbb{H}\left(\cdot\right)$ denotes the Shannon entropy. For now, let us assume $\lambda<0$.

Minimization over ${S\in\hat{\mathcal{P}}\left(\boldsymbol{D}\right)}$ acts as a {\it {knowledge transfer}} module and finds the optimal empirical distribution in $\hat{\mathcal{P}}\left(\boldsymbol{D}\right)$ for the model selection module, i.e. $\inf_{\theta\in\Theta}$. Again, note that distributions in $\hat{\mathcal{P}}\left(\boldsymbol{D}\right)$ differ from each other only in the way they assign labels, or soft-labels, to the unlabeled data. According to \eqref{eq:main2}, learner has obviously chosen to be {\it {optimistic}} w.r.t. the hypothesis set $\Theta$ and its corresponding loss function $\ell$. In other words, for any $\theta\in\Theta$, learner is instructed to pick the labels that are more likely to reduce the loss function $\ell\left(\cdot;\theta\right)$ for the unlabeled data. This strategy forms the core idea of self-learning. Note that a pessimistic strategy suggests the opposite, i.e. to pick the less likely labels (those with large loss values) and not to trust the loss function. At the end of this section, we explain more about the {\it {pessimistic}} learner.

The negative regularization term $\frac{1-\eta}{\lambda}
\hat{\mathbb{E}}_{\boldsymbol{D}_{\mathrm{ul}}}\left\{
\mathbb{H}\left(S_{\vert\boldsymbol{X}}\right)\right\}$ prevents hard decisions for labels and promotes soft-labeling by bounding the Shannon entropy of label-conditionals from below. A smaller $\left\vert\lambda\right\vert$ gives softer labels. In the extreme case, choosing $\lambda=-\infty$ ends up in an adversarial version of the self-training in \cite{zhu2006semi}. It should be noted that \eqref{eq:main2} considers all the data in $\boldsymbol{D}_{\mathrm{l}}\cup\boldsymbol{D}_{\mathrm{ul}}$ for the purpose of distributional robustness. In fact, the adversary has access to all the feature vectors in $\boldsymbol{D}$, which in turn forces the learner to experience adversarial attacks near all these points. This way, learner is instructed to show less sensitivity near all training data, just as one may expect from a semi-supervised DRL.

We show that \eqref{eq:main2} can be efficiently solved given that some smoothness conditions hold for $\ell$ and $c$. Before that, Theorem \ref{thm:sslDual} shows that the optimization corresponding to the knowledge transfer module has an analytic solution, which implies the computational cost of \eqref{eq:main2} is only slightly higher than those of its fully-supervised counterparts, such as \cite{sinha2018certifying}.
\\[-3mm]
\begin{definition}
\label{def:softmin}
For $\boldsymbol{q}\in\mathbb{R}^{\mathcal{Y}}$ and $\lambda\in\mathbb{R}\cup\left\{\pm\infty\right\}$, soft-minimum of $\boldsymbol{q}$ with respect to $\lambda$ is defined as
\begin{equation}
\softmin_{y\in\mathcal{Y}}^{\left(\lambda\right)}\left(\boldsymbol{q}\right)
\triangleq
\frac{1}{\lambda}\log\left(
\frac{1}{\left\vert\mathcal{Y}\right\vert}\sum_{y\in\mathcal{Y}}e^{\lambda q_y}
\right).
\end{equation}
\end{definition}
\begin{thm2}[Lagrangian-Relaxation]
Assume a continuous loss $\ell:\mathcal{Z}\times\Theta\rightarrow\mathbb{R}$ and continuous $c:\mathcal{Z}\times\mathcal{Z}\rightarrow\mathbb{R}_{\ge0}$, parameters $\epsilon\ge0$ and $\lambda\in\mathbb{R}\cup\left\{\pm\infty\right\}$, and a partially-labeled dataset $\boldsymbol{D}$ with size $n$. For $\gamma\ge0$, let us define the empirical {\it {Semi-Supervised Adversarial Risk (SSAR)}}, denoted by $\hat{R}_{\mathrm{SSAR}}\left(\theta;\boldsymbol{D}\right)$, as
\begin{equation}
\label{eq:SSLmainMin}
\hat{R}_{\mathrm{SSAR}}\left(\theta;\boldsymbol{D}\right)
~\triangleq~
\frac{1}{n}\sum_{i\in\mathcal{I}_{\mathrm{l}}}\phi_{\gamma}\left(\boldsymbol{X}_i,y_i;\theta\right)
+
\frac{1}{n}\sum_{i\in\mathcal{I}_{\mathrm{ul}}}
\softmin_{y\in\mathcal{Y}}^{\left(\lambda\right)}\left\{
\phi_{\gamma}\left(\boldsymbol{X}_i,y;\theta\right)
\right\} + \gamma\epsilon,
\end{equation}
where $\phi_{\gamma}\left(\boldsymbol{X},y;\theta\right)$, called adversarial loss, is defined as
\begin{equation}
\label{eq:phiGammaDef}
\phi_{\gamma}\left(\boldsymbol{X},y;\theta\right)
\triangleq
\sup_{\boldsymbol{z}'\in\mathcal{Z}}\ell\left(\boldsymbol{z}';\theta\right)
-\gamma c\left(\boldsymbol{z}',\left(\boldsymbol{X},y\right)\right).
\end{equation}
Let $\theta^*\in\Theta$ be a minimizer of \eqref{eq:main2} for a given set of parameters $\epsilon\ge0$ and $\lambda<0$.
Then, there exists $\gamma\ge0$ such that $\theta^*$ is also a minimizer of \eqref{eq:SSLmainMin} with the same corresponding parameters $\epsilon$ and $\lambda$.
\label{thm:sslDual}
\end{thm2}
Proof of Theorem \ref{thm:sslDual} is given in Appendix \ref{sec:appendix:thm}. Note that $\softmin$ equals to : (i) $\min$ operator for $\lambda=-\infty$, (ii) average for $\lambda=0$, and (iii) $\max$ operator for $\lambda=+\infty$. Also, $\epsilon$ and $\gamma$ are non-negative dual parameters and fixing either of them uniquely determines the other one. Due to this one-to-one relation, one can adjust $\gamma$ (for example via cross-validation), instead of $\epsilon$. See \cite{sinha2018certifying} for a similar discussion about this issue.

A more subtle look at \eqref{eq:SSLmainMin} shows that in the dual context of the proposed optimization problem, one is free to also consider positive values for $\lambda$. Choosing $\lambda>0$ promotes those labels that produce larger adversarial loss values for unlabeled data. In other words, the sign of $\lambda$ indicates  {\it {optimism}} ($\lambda\leq0$), or {\it {pessimism}} ($\lambda>0$) during the label assignment. The choice between optimism vs. pessimism depends on the {\it {compatibility}} of the model set $\Theta$ with the true distribution $P_0$. In Section \ref{sec:proposed:general}, we show that enabling $\lambda$ to take values in $\mathbb{R}$ rather than $\mathbb{R}^{-}$ is crucial for establishing a generalization bound for \ref{eq:SSLmainMin}. In other words, for {\it {very bad}} hypothesis sets w.r.t. a particular input distribution, one must choose to be pessimistic to be able to generalize well. To see some situations where pessimism in Semi-Supervised Learning can help, reader may refer to \cite{loog2016contrastive}.

\subsection{Numerical Optimization}
\label{sec:proposed:paramoptim}

We propose a numerical optimization scheme for solving \eqref{eq:SSLmainMin} or equivalently \eqref{eq:main2}, which has a convergence guarantee. A hurdle in applying SGD to \eqref{eq:SSLmainMin} is the fact that the value of $\phi_{\gamma}\left(\boldsymbol{z};\theta\right)$ is itself the output of a maximization problem. Also, the loss function $\ell$ is not necessarily convex w.r.t. $\theta$, e.g. neural networks, and hence achieving the global minimum of \eqref{eq:SSLmainMin} is not feasible in general. The former problem has already been solved in supervised DRL, as long as we focus on a sufficiently small $\epsilon$ \cite{sinha2018certifying}.
\\[-3mm]
\begin{lemma}
Consider the setting described in Theorem \ref{thm:sslDual}. Assume $\ell\left(\boldsymbol{z};\theta\right)$ is differentiable w.r.t. $\boldsymbol{z}$, and $\nabla_{\boldsymbol{z}}\ell\left(\cdot;\theta\right)$ is $L_{zz}$-Lipschitz all over $\mathcal{Z}\times\Theta$, for some $L_{zz}\ge0$. Also, assume transportation cost $c$ is $1$-strongly convex in its first argument. Then, if $\gamma>L_{zz}$, the program
\begin{equation}
\sup_{\boldsymbol{z}'\in\mathcal{Z}}~\ell\left(\boldsymbol{z}';\theta\right)
-\gamma c\left(\boldsymbol{z}',\left(\boldsymbol{X},y\right)\right)
\label{eq:innerMaxEq}
\end{equation}
becomes $\left(\gamma-L_{zz}\right)$-strongly concave for all $\left(\boldsymbol{X},y\right)\in\mathcal{Z}$.
\label{lemma:innerMAxConcave}
\end{lemma}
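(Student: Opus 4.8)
The plan is to show that the objective in \eqref{eq:innerMaxEq}, as a function of $\boldsymbol{z}'$, is the sum of a function whose gradient is Lipschitz and a strongly concave function, and then invoke the standard fact that $L_{zz}$-Lipschitz gradient implies $\ell(\cdot;\theta)$ is ``$L_{zz}$-weakly concave'' while $\gamma$-strong convexity of $c$ in its first slot makes $-\gamma c(\cdot,(\boldsymbol{X},y))$ be $\gamma$-strongly concave; adding these gives $(\gamma - L_{zz})$-strong concavity whenever $\gamma > L_{zz}$. Concretely, I would fix $\theta$ and $(\boldsymbol{X},y)$, write $g(\boldsymbol{z}') \triangleq \ell(\boldsymbol{z}';\theta) - \gamma c(\boldsymbol{z}',(\boldsymbol{X},y))$, and aim to verify the strong-concavity inequality
\begin{equation}
g\bigl(t\boldsymbol{z}_1' + (1-t)\boldsymbol{z}_2'\bigr) \;\ge\; t\,g(\boldsymbol{z}_1') + (1-t)\,g(\boldsymbol{z}_2') + \frac{\gamma - L_{zz}}{2}\,t(1-t)\,\bigl\|\boldsymbol{z}_1' - \boldsymbol{z}_2'\bigr\|^2
\end{equation}
for all $\boldsymbol{z}_1',\boldsymbol{z}_2'\in\mathcal{Z}$ and $t\in[0,1]$.

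The key steps, in order: (i) From $L_{zz}$-Lipschitzness of $\nabla_{\boldsymbol{z}}\ell(\cdot;\theta)$, invoke the standard descent-lemma-type bound $\bigl|\ell(\boldsymbol{z}_1';\theta) - \ell(\boldsymbol{z}_2';\theta) - \langle \nabla_{\boldsymbol{z}}\ell(\boldsymbol{z}_2';\theta), \boldsymbol{z}_1' - \boldsymbol{z}_2'\rangle\bigr| \le \tfrac{L_{zz}}{2}\|\boldsymbol{z}_1' - \boldsymbol{z}_2'\|^2$; equivalently, $\ell(\boldsymbol{z}';\theta) + \tfrac{L_{zz}}{2}\|\boldsymbol{z}'\|^2$ is convex, so $\ell(\boldsymbol{z}';\theta) - \tfrac{L_{zz}}{2}\|\boldsymbol{z}'\|^2$ plus a linear term is concave — more cleanly, $\ell(\cdot;\theta)$ satisfies the ``$\tfrac{L_{zz}}{2}\|\cdot\|^2$-perturbed concavity'' estimate above. (ii) From $1$-strong convexity of $c(\cdot,(\boldsymbol{X},y))$, we get $c(t\boldsymbol{z}_1' + (1-t)\boldsymbol{z}_2',(\boldsymbol{X},y)) \le t\,c(\boldsymbol{z}_1',(\boldsymbol{X},y)) + (1-t)\,c(\boldsymbol{z}_2',(\boldsymbol{X},y)) - \tfrac{1}{2}t(1-t)\|\boldsymbol{z}_1' - \boldsymbol{z}_2'\|^2$, hence $-\gamma c(\cdot,(\boldsymbol{X},y))$ is $\gamma$-strongly concave. (iii) Add the two midpoint/chord inequalities: the first contributes $-\tfrac{L_{zz}}{2}t(1-t)\|\cdot\|^2$ (a concavity deficit) and the second contributes $+\tfrac{\gamma}{2}t(1-t)\|\cdot\|^2$, so $g$ is $(\gamma - L_{zz})$-strongly concave, which is a genuine strong-concavity modulus precisely when $\gamma > L_{zz}$. (iv) Note the argument is pointwise in $(\boldsymbol{X},y)$ so it holds for all $(\boldsymbol{X},y)\in\mathcal{Z}$ as claimed, and I should remark that $\mathcal{Z}$ being convex (or the relevant statements being understood on its convex hull) is implicitly needed for the chord inequalities to make sense.

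I do not expect a serious obstacle here; this is essentially the supervised argument of \cite{sinha2018certifying} restated, and the only mild subtlety is bookkeeping the sign conventions — making sure that ``gradient Lipschitz'' is used in the $+\tfrac{L_{zz}}{2}\|\cdot\|^2$ direction that \emph{subtracts} from concavity, rather than accidentally in the favorable direction. A second minor point to handle carefully is that Lipschitzness of $\nabla_{\boldsymbol{z}}\ell$ is assumed ``all over $\mathcal{Z}\times\Theta$,'' so the relevant one-dimensional restriction $s\mapsto \ell(\boldsymbol{z}_2' + s(\boldsymbol{z}_1'-\boldsymbol{z}_2');\theta)$ indeed has an $L_{zz}\|\boldsymbol{z}_1'-\boldsymbol{z}_2'\|^2$-Lipschitz derivative along the segment, which is what feeds the descent-lemma bound in step (i); integrating this and combining with step (ii) yields the claim.
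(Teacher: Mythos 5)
Your proposal is correct and follows essentially the same route as the paper, whose proof is only sketched as ``based on Taylor's expansion series'' (following Lemma 1 of \cite{sinha2018certifying}): the quadratic upper perturbation from the $L_{zz}$-Lipschitz gradient of $\ell$ combined with the quadratic gain from the $1$-strong convexity of $c$ yields the $(\gamma-L_{zz})$ modulus. Your additional remarks --- the sign bookkeeping and the implicit convexity of $\mathcal{Z}$ along segments --- are sensible and do not change the argument.
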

The proof of Lemma \ref{lemma:innerMAxConcave} is based on Taylor's expansion series. By using a modified version of Danskin's theorem for minimax problems \cite{bonnans2013perturbation}, and followed by additional smoothness conditions on $\ell$, an efficient computation of the gradient of $\hat{R}_{\mathrm{SSAR}}$ in \eqref{eq:SSLmainMin} w.r.t. $\theta\in\Theta$ is as follows: 
\\[-3mm]
\begin{lemma}
Assume loss function $\ell:\mathcal{Z}\times\Theta\rightarrow\mathbb{R}$, $c:\mathcal{Z}\times\mathcal{Z}\rightarrow\mathbb{R}_{\ge0}$ and $\gamma\ge0$, such that conditions in Lemma \ref{lemma:innerMAxConcave} hold all over $\mathcal{Z}\times\Theta$. Assume $\ell$ is differentiable w.r.t. $\theta$, and let $\boldsymbol{g}_{\theta}\left(\boldsymbol{z}\right)\triangleq \nabla_{\theta}
\ell\left(\boldsymbol{z};\theta\right)$. For a fixed $\theta\in\Theta$ and $i\in\mathcal{I}_{\mathrm{l}}$, define $\boldsymbol{z}^{*}_i\left(\theta\right)$ as the maximizer of \eqref{eq:innerMaxEq} for $\left(\boldsymbol{X}_i,y_i\right)$.
Similarly, let $\boldsymbol{z}^{*}_i\left(y;\theta\right)$ to represent the maximizer of 
\begin{equation}
\label{eq:SSLDROinnermax}
J_i\left(y;\theta\right)
\triangleq
\sup_{\boldsymbol{z}'\in\mathcal{Z}}~\ell\left(\boldsymbol{z}';\theta\right)-\gamma c\left(\boldsymbol{z}',\left(\boldsymbol{X}_i,y\right)\right)
,~~
y\in\mathcal{Y},i\in\mathcal{I}_{\mathrm{ul}}.
\end{equation} 
Then, the gradient of \eqref{eq:SSLmainMin} w.r.t. $\theta\in\Theta$ can be attained as
\begin{align}
\label{eq:finalSSLDROderiv}
\nabla_{\theta}
\hat{R}_{\mathrm{SSAR}}\left(\theta;\boldsymbol{D}\right)  
=
\frac{1}{n}\sum_{i\in\mathcal{I}_{\mathrm{l}}}\boldsymbol{g}_{\theta}\left(\boldsymbol{z}^{*}_i\left(\theta\right)\right)
+\frac{1}{n}\sum_{i\in\mathcal{I}_{\mathrm{ul}}}\sum_{y\in\mathcal{Y}}q(y;\theta)
\boldsymbol{g}_{\theta}\left(\boldsymbol{z}^{*}_i\left(y;\theta\right)\right), 
\end{align}
where $q(y;\theta) \triangleq \exp(\lambda J_i\left(y;\theta\right)) / \left(\sum_{y'\in\mathcal{Y}}\exp(\lambda J_i\left(y';\theta\right)) \right)$.
\label{lemma:SSLDROderiv}
\end{lemma}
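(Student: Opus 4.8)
The plan is to differentiate $\hat{R}_{\mathrm{SSAR}}$ in \eqref{eq:SSLmainMin} term by term. The additive constant $\gamma\epsilon$ drops out under $\nabla_\theta$ (both $\gamma$ and $\epsilon$ are fixed independently of $\theta$), and the labeled and unlabeled sums are finite, so by linearity it suffices to compute (i) $\nabla_\theta\phi_{\gamma}\left(\boldsymbol{X}_i,y_i;\theta\right)$ for $i\in\mathcal{I}_{\mathrm{l}}$, and (ii) $\nabla_\theta\softmin_{y\in\mathcal{Y}}^{\left(\lambda\right)}\left\{\phi_{\gamma}\left(\boldsymbol{X}_i,y;\theta\right)\right\}$ for $i\in\mathcal{I}_{\mathrm{ul}}$. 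Both reduce to differentiating the value function of the inner maximization \eqref{eq:phiGammaDef}: for $i\in\mathcal{I}_{\mathrm{ul}}$ one has $\phi_{\gamma}\left(\boldsymbol{X}_i,y;\theta\right)=J_i\left(y;\theta\right)$, while $\phi_{\gamma}\left(\boldsymbol{X}_i,y_i;\theta\right)$ is the value of the same supremum \eqref{eq:innerMaxEq} at the fixed labeled pair.

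First I would check that the inner suprema are attained at unique points. By Lemma \ref{lemma:innerMAxConcave}, the hypothesis $\gamma>L_{zz}$ makes $\boldsymbol{z}'\mapsto\ell\left(\boldsymbol{z}';\theta\right)-\gamma c\left(\boldsymbol{z}',\left(\boldsymbol{X},y\right)\right)$ strongly concave, so (using also the finiteness of $\mathcal{Y}$) the maximizers $\boldsymbol{z}^{*}_i\left(\theta\right)$ and $\boldsymbol{z}^{*}_i\left(y;\theta\right)$ exist and are unique. I would then invoke the modified Danskin theorem for minimax problems \cite{bonnans2013perturbation}: since $\left(\boldsymbol{z}',\theta\right)\mapsto\ell\left(\boldsymbol{z}';\theta\right)-\gamma c\left(\boldsymbol{z}',\left(\boldsymbol{X},y\right)\right)$ is continuous, differentiable in $\theta$ with continuous $\theta$-gradient, and has a unique $\boldsymbol{z}'$-maximizer for every $\theta$, the value function is differentiable in $\theta$ and
\begin{equation*}
\nabla_\theta J_i\left(y;\theta\right)
=\nabla_\theta\!\left[\ell\left(\boldsymbol{z}';\theta\right)-\gamma c\left(\boldsymbol{z}',\left(\boldsymbol{X}_i,y\right)\right)\right]\big|_{\boldsymbol{z}'=\boldsymbol{z}^{*}_i\left(y;\theta\right)}
=\boldsymbol{g}_\theta\!\left(\boldsymbol{z}^{*}_i\left(y;\theta\right)\right),
\end{equation*}
the last equality because $c$ carries no $\theta$-dependence. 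The identical argument at $\left(\boldsymbol{X}_i,y_i\right)$ gives $\nabla_\theta\phi_{\gamma}\left(\boldsymbol{X}_i,y_i;\theta\right)=\boldsymbol{g}_\theta\!\left(\boldsymbol{z}^{*}_i\left(\theta\right)\right)$, which is the labeled contribution in \eqref{eq:finalSSLDROderiv}.

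For the unlabeled terms I would expand the softmin via Definition \ref{def:softmin}, $\softmin_{y\in\mathcal{Y}}^{\left(\lambda\right)}\left\{\phi_{\gamma}\left(\boldsymbol{X}_i,y;\theta\right)\right\}=\frac{1}{\lambda}\log\!\left(\frac{1}{\left\vert\mathcal{Y}\right\vert}\sum_{y\in\mathcal{Y}}e^{\lambda J_i\left(y;\theta\right)}\right)$, which is a smooth function of the finitely many scalars $\left\{J_i\left(y;\theta\right)\right\}_{y\in\mathcal{Y}}$, each differentiable in $\theta$ by the previous step. Applying the chain rule, the outer $1/\lambda$ cancels the $\lambda$ coming down from the exponents and $1/\left\vert\mathcal{Y}\right\vert$ cancels between numerator and denominator, leaving $\sum_{y\in\mathcal{Y}}\frac{e^{\lambda J_i\left(y;\theta\right)}}{\sum_{y'\in\mathcal{Y}}e^{\lambda J_i\left(y';\theta\right)}}\nabla_\theta J_i\left(y;\theta\right)=\sum_{y\in\mathcal{Y}}q\left(y;\theta\right)\boldsymbol{g}_\theta\!\left(\boldsymbol{z}^{*}_i\left(y;\theta\right)\right)$. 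Multiplying the labeled and unlabeled contributions by $1/n$ and summing over $\mathcal{I}_{\mathrm{l}}$ and $\mathcal{I}_{\mathrm{ul}}$ produces \eqref{eq:finalSSLDROderiv}.

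The hard part is the middle step. Verifying the hypotheses of the envelope theorem requires existence and uniqueness of the inner maximizer over a possibly unbounded $\mathcal{Z}$ — this is precisely where $\gamma>L_{zz}$ and the $1$-strong convexity of $c$ enter, via Lemma \ref{lemma:innerMAxConcave} — together with upgrading the directional-derivative conclusion to genuine differentiability, which uses continuity of $\nabla_{\boldsymbol{z}}\ell$ and $\nabla_\theta\ell$ on $\mathcal{Z}\times\Theta$. One should also note that \eqref{eq:finalSSLDROderiv} as written assumes $\lambda$ finite (with $\lambda=0$ understood in the limit, so $q\left(y;\theta\right)\equiv1/\left\vert\mathcal{Y}\right\vert$); for $\lambda=\pm\infty$ the softmin degenerates to $\min_{y}$ or $\max_{y}$ and the identity persists only where the optimizing label is unique, and otherwise in a subdifferential sense.
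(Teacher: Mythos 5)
Your proposal is correct and follows essentially the same route as the paper: the paper also differentiates $\phi_{\gamma}$ via the modified (inf-compact/strong-concavity) Danskin theorem of \cite{bonnans2013perturbation}, using Lemma \ref{lemma:innerMAxConcave} to secure a unique inner maximizer, and then applies the chain rule to the $\softmin$, which produces exactly the softmax weights $q\left(y;\theta\right)$ (denoted $\beta_y\left(\theta\right)$ in the paper's Lemma \ref{lemma:softminLip}, inside the proof of Theorem \ref{thm:sgdConv}). Your closing caveats about finite $\lambda$ and the degenerate cases $\lambda=\pm\infty$ are consistent with how the paper treats those regimes separately.
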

\begin{algorithm}[t]
\caption{Stochastic Gradient Descent for SSDRL}
\label{alg:SSLDROsgd}
\begin{algorithmic}[1]
\State Inputs: $\boldsymbol{D}, \gamma, \lambda, \left(k\leq n, \delta, \alpha, T\right)$
\State Initialize $\theta_0\in\Theta$, and set $t\leftarrow 0$.
	\vspace*{2mm}
\For {$t=0\rightarrow T-1$}
	\State Randomly select index set $\mathcal{I}\subseteq\left[n\right]$ with size $k$.
	\For {$i\in\mathcal{I}_{\mathrm{l}}\cap\mathcal{I}$}
		\State Compute a $\delta$-approx of $\boldsymbol{z}^{*}_i\left(\theta_t\right)$ from Lemma \ref{lemma:SSLDROderiv}.
	\EndFor
	\For {$\left(i,y\right)\in\left(\mathcal{I}_{\mathrm{ul}}\cap\mathcal{I}\right)\times\mathcal{Y}$}
		\State Compute a $\delta$-approx of $\boldsymbol{z}^{*}_i\left(y;\theta_t\right)$ from \eqref{eq:SSLDROinnermax}						
	\EndFor
	\State Compute the sub-gradient of $\hat{R}_{\mathrm{SSAR}}\left(\theta;\boldsymbol{D}\right)$ from \eqref{eq:finalSSLDROderiv} at point $\theta=\theta_t$
	\vspace{-4.5mm}
	\skipnumber[2]{\State (using only samples in $\mathcal{I}$), and denote it with $\partial_{\theta}\hat{R}_{\mathrm{SSAR}}\left(\theta_t;\boldsymbol{D}\right)$.}
	\State $\theta_{t+1} \leftarrow \mathrm{Proj}_{\Theta}\left(\theta_t - \alpha \partial_{\theta}\hat{R}_{\mathrm{SSAR}}\left(\theta_t;\boldsymbol{D}\right)\right)$
\EndFor
\State Output: $\theta^*\leftarrow\theta_T$
\end{algorithmic}
\end{algorithm} 
Proof of Lemma \ref{lemma:SSLDROderiv} is included in that of Theorem \ref{thm:sgdConv} and can be found in Appendix \ref{sec:appendix:thm}. Given the formulation in Lemma \ref{lemma:SSLDROderiv}, one can simply apply the mini-batch SGD to solve for \eqref{eq:main2} via Algorithm \ref{alg:SSLDROsgd} (the semi-supervised extension of \cite{sinha2018certifying}). The set of constants such as the maximum iteration number $T\in\mathbb{N}$, $\delta$, $\alpha$ and mini-batch size $k\leq n$ are all user-defined. Due to the strong concavity of \eqref{eq:SSLDROinnermax} under the conditions of Lemma \ref{lemma:innerMAxConcave}, $\delta$ can be chosen arbitrarily small. Other parameters such as $\gamma$ and $\lambda$ should be adjusted via cross-validation. The computational complexity of Algorithm \ref{alg:SSLDROsgd} is no more than $\eta+\left\vert\mathcal{Y}\right\vert\left(1-\eta\right)$ times of that of \cite{sinha2018certifying}, where the latter can only handle supervised data\footnote{In scenarios where $\left\vert\mathcal{Y}\right\vert$ is very large, one can employ heuristic methods to reduce the set of {\it {possible labels}} for an unlabeled data sample and gain more efficiency at the expense of degradation in performance}. Note that Algorithm \ref{alg:SSLDROsgd} reduces to \cite{sinha2018certifying} in fully-supervised scenarios, and coincides with Pseudo-Labeling and EM algorithm when $\left(\gamma=\infty,\lambda=-\infty\right)$ and $\left(\gamma=\infty,\lambda=-1\right)$, respectively. The following theorem guarantees the convergence of Algorithm \ref{alg:SSLDROsgd} to a local minimizer of \eqref{eq:SSLmainMin}.
\\[-3mm]
\begin{thm2}
\label{thm:sgdConv}
Assume the loss function $\ell$, transportation cost $c$, $\gamma\ge0$ and $\left\vert\lambda\right\vert<\infty$ to satisfy the conditions of Lemma \ref{lemma:SSLDROderiv}. Also, assume $\ell$ is differentiable w.r.t. both parameters $\boldsymbol{z}$ and $\theta$, with Lipschitz gradients. Also, assume $\left\Vert\nabla_{\theta}\ell\left(\boldsymbol{z};\theta\right)\right\Vert_2\leq\sigma$, for some $\sigma\ge0$ all over $\mathcal{Z}\times\Theta$. Let $\theta_0\in\Theta$ to be an initial hypothesis, and denote $\theta^*\in\Theta$ as a local minimizer of \eqref{eq:main2} or \eqref{eq:SSLmainMin}. Assume the partially-labeled dataset $\boldsymbol{D}$ to include $n$ i.i.d. samples drawn from $P_0\in M\left(\mathcal{X}\times\mathcal{Y}\right)$. Also, let
$\Delta\hat{R}
\triangleq
\hat{R}_{\mathrm{SSAR}}\left(\theta_0;\boldsymbol{D}\right)-
\hat{R}_{\mathrm{SSAR}}\left(\theta^*;\boldsymbol{D}\right)$. Then, for the fixed step size $\alpha^*$
\begin{equation}
\alpha^*
\triangleq
\frac{1}{\sigma^2}
\sqrt{\frac{\Delta\hat{R}}
{T\left(\frac{B}{\sigma^2}
+\left(1-{\eta}\right)\left\vert\lambda\right\vert\left\vert\mathcal{Y}\right\vert\right)}},
\end{equation}
the outputs of Algorithm \ref{alg:SSLDROsgd} with parameter set $k=1$, $\delta>0$, $\alpha=\alpha^*$ after $T$ iterations, say $\theta_1,\ldots,\theta_T$, satisfy the following inequality:
\begin{gather}
\label{eq:sgdBound}
\frac{1}{T}\sum_{t=1}^{T}\mathbb{E}\left\{
\big\Vert
\nabla_{\theta}
\hat{R}_{\mathrm{SSAR}}\left(\theta_t;\boldsymbol{D}\right)
\big\Vert^2_2
\right\}
\leq
4\sigma^2\sqrt{\frac{\Delta\hat{R}}{T}
\left(\frac{B}{\sigma^2}
+\left(1-{\eta}\right)\left\vert\lambda\right\vert\left\vert\mathcal{Y}\right\vert\right)}
+C\delta,
\end{gather}
where constants $B$ and $C$ only depend on $\gamma$ and Lipschitz constants of $\ell$. Also, ${\eta}\triangleq n_{\mathrm{l}}/n$, and the expectation in \eqref{eq:sgdBound} is w.r.t. dataset $\boldsymbol{D}$ and the randomness of Algorithm \ref{alg:SSLDROsgd}.
\end{thm2}
The proof of Theorem \ref{thm:sgdConv} with explicit formulations of constants $B$ and $C$ are given in Appendix \ref{sec:appendix:thm}. Theorem \ref{thm:sgdConv} guarantees a convergence rate of $O\left(T^{-1/2}\right)$ 
for Algorithm \ref{alg:SSLDROsgd}, if one neglects $\delta$. Note that the presence of $\delta$ is necessary since one cannot find the exact maximizer of \eqref{eq:SSLDROinnermax} in finite steps. However, due to Lemma \ref{lemma:innerMAxConcave}, $\delta$ can be chosen infinitesimally small. According to Theorem \ref{thm:sgdConv}, choosing a very large $\left\vert\lambda\right\vert$ reduces the convergence rate, since the derivative of $\softmin^{\left(\lambda\right)}$ starts to diverge. In fact, the limiting cases of $\lambda=\pm\infty$ represent two hard-label strategies with combinatoric structures. Convergence rates for such methods are not well-studied in the literature \cite{zhu2006semi}. However, convergence guarantees (even without establishing a convergence rate) can still be useful for these limiting cases. Theorem \ref{thm:hardConv} (Appendix) guarantees the convergence of Algorithm \ref{alg:SSLDROsgd} in hard-decision regimes, i.e. $\lambda=\pm\infty$.

Another interesting question is: can we guarantee the convexity of \eqref{eq:main2} or \eqref{eq:SSLmainMin}, given that loss function $\ell$ is twice differentiable and {\it {strictly convex}} w.r.t. $\theta$?  Although the convexity of $\ell$ does not hold in many cases of interest, e.g. neural networks, a careful convexity analysis of our method is still important. Theorem \ref{corl:Convexity} (Appendix) provides a sufficient condition for convexity of \eqref{eq:SSLmainMin}, when $\ell$ is strictly convex. The condition requires $\lambda\ge\lambda_{\min}$ with $\lambda_{\min}\left(\ell,\gamma,\left\vert\mathcal{Y}\right\vert\right)$ being a negative function. Note that when $\lambda=-\infty$, the r.h.s. of \eqref{eq:SSLmainMin} equals to the minimum of a finite number of convex functions ---which is not necessarily convex. On the other hand, a non-negative value of $\lambda$ is always {\textit{safe}} for this purpose, because the $\softmin$ operator in this case preserves convexity.

\subsection{Generalization Guarantees}
\label{sec:proposed:general}
This section addresses the statistical generalization of our method. More precisely, we intend to bound the {\it {true adversarial risk}}, i.e. 
$\sup_{P\in\mathcal{B}_{\epsilon}\left(P_0\right)}\mathbb{E}_{P}\left\{\ell\left(\boldsymbol{Z};{\theta^*}\right)\right\}$, where ${\theta^*}$ denotes the optimizer of the empirical risk in \eqref{eq:SSLmainMin}. To this aim, two major concerns need to be addressed: (i) we are training our model against an adversary, and (ii) our training dataset is only partially labeled. For (i), we introduce a novel complexity measure w.r.t. the hypothesis set $\mathcal{L}\triangleq\left\{\ell\left(\cdot;\theta\right)\vert~\theta\in\Theta\right\}$ and data distribution $P_0$, which extends the existing generalization analyses into an adversarial setting. For (ii), we establish a novel {\it {compatibility}} condition among $\mathcal{L}$, $P_0$ and $\eta$ that deals with the semi-supervised aspect of our work.

\subsubsection{Adversarial Complexity Measures}

Conventional Rademacher complexity, denoted by $\mathcal{R}_n\left(\mathcal{F}\right)$, is a tool to measure the richness of a function set $\mathcal{F}$ in classical learning theory \cite{mohri2012foundations}. In fact, this measure tells us about how much a function set is able to learn {\it {noise}}, and thus is exposed to overfitting on small datasets. We give a novel adversarial extension for Rademacher complexity which also appears in our generalization bound at the end of this section. Moreover, we show that our complexity measure converges to zero when $n\rightarrow\infty$, for all function sets with a finite VC-dimension, regardless of the strength of adversary. Before that, let us define the set of $\epsilon$-Monge maps $\mathcal{A}_{\epsilon}$ as the following function set:
\begin{equation}
\label{eq:MongeMapDef}
\mathcal{A}_{\epsilon}\triangleq
\left\{a:\mathcal{Z}\rightarrow\mathcal{Z}\vert~c\left(\boldsymbol{z},a\left(\boldsymbol{z}\right)\right)\leq\epsilon,~\forall\boldsymbol{z}\in\mathcal{Z}\right\}.
\end{equation}
Then, the Semi-Supervised Monge (SSM) Rademacher complexity can be defined as follows:
\\[-3mm]
\begin{definition}[SSM Rademacher Complexity]
\label{def:SSM_main}
For $\mathcal{Z}\triangleq\mathcal{X}\times\mathcal{Y}$,
assume a function set $\mathcal{F}\subseteq\mathbb{R}^{\mathcal{Z}}$ and a distribution $P_0\in M\left(\mathcal{Z}\right)$. Then, for $\epsilon\ge0$, a transportation cost $c$ and $n\in\mathbb{N}$, let us define
\begin{align*}
g_{\mathrm{l}}\left(n\right)
&\triangleq\mathbb{E}_{\boldsymbol{Z}_{1:n},\boldsymbol{\sigma}}\left\{
\sup_{f\in\mathcal{F}}~\frac{1}{n}\sum_{i=1}^{n}\sigma_i
\left[
\sup_{a\in\mathcal{A}_{\epsilon}}~
f\left(a\left(\boldsymbol{Z}_i\right)\right)
\right]
\right\}\quad\quad\mathrm{and}
\\
g_{\mathrm{ul}}\left(n\right)
&\triangleq
\sum_{y\in\mathcal{Y}}
\mathbb{E}_{\boldsymbol{X}_{1:n},\boldsymbol{\sigma}}\left\{
\sup_{f\in\mathcal{F}}~\frac{1}{n}\sum_{i=1}^{n}\sigma_i
\left[
\sup_{a\in\mathcal{A}_{\epsilon}}~
f\left(a\left(\boldsymbol{X}_i,y\right)\right)
\right]
\right\},
\end{align*}
where $\boldsymbol{Z}_{1:n}\widesim{i.i.d.}P_0$ and $\boldsymbol{X}_{1:n}\widesim{i.i.d.}P_{0_{\boldsymbol{X}}}$. $\mathcal{A}_{\epsilon}$ represents the set of $\epsilon$-Monge maps. Also, $\boldsymbol{\sigma}\in\left\{-1,+1\right\}^n$ indicates a vector of independent Rademacher random variables. Then, for a supervision ratio $\eta\in\left[0,1\right]$, the SSM Rademacher complexity of $\mathcal{F}$ is defined as
\begin{align*}
\mathcal{R}_{n,\left(\epsilon,\eta\right)}^{\left(\mathrm{SSM}\right)}\left(
\mathcal{F}
\right)\triangleq
\eta
g_{\mathrm{l}}\left(
\lceil n\eta\rceil
\right)+\left(1-\eta\right)
g_{\mathrm{ul}}\left(
\lceil n\left(1-\eta\right)\rceil
\right).
\end{align*}
\end{definition}
By setting $\epsilon=0$ and $\eta=1$, the above definition simply reduces to the classical Rademacher complexity $\mathcal{R}_n$. We define a function set to be {\it {learnable}}, if $\mathbb{R}_n$ decreases to zero as one increases $n$. Similarly, a function class $\mathcal{F}$ is said to be {\it {adversarially learnable}} w.r.t. parameters $\left(\epsilon,\eta\right)$, if
\begin{equation}
\lim_{n\rightarrow\infty}\mathcal{R}^{\left(\mathrm{SSM}\right)}_{n,\left(\epsilon,\eta\right)}\left(\mathcal{F}\right)=0.
\end{equation}
The above definition is necessary when $\epsilon>0$, since learnability of a function class w.r.t. some distribution $P_0$ does not necessarily guarantee its {\it {adversarial learnability}}. In fact, an adversary can shift the data points and forces the learner to experience regions in $\mathcal{X}\times\mathcal{Y}$ that cannot be accessed by $P_0$ alone. However, one may be concerned about how to numerically compute this measure in practice? The main difference between $\mathcal{R}_n$ and SSM Rademacher complexity is that the latter alters input samples (or distribution) by an adversary. Fortunately, several distribution-free bounds have been established on $\mathcal{R}_n$ so far \cite{mohri2012foundations}, which work for a variety of function classes of practical interest, e.g. classifiers with a bounded VC-dimension (including neural networks), polynomial regression tools with a bounded degree, and etc. 

We show that in case of having a distribution-free bound on the Rademacher complexity of $\mathcal{F}$, the SSM Rademacher complexity can be bounded as well. Mathematically speaking, assuming there exists an asymptotically decreasing upper-bound $\Delta\left(n\right)$ such that $\mathcal{R}_{n}\left(\mathcal{F}\right)\leq \Delta\left(n\right),~
\forall P_0\in M\left(\mathcal{X}\times\mathcal{Y}\right)$. Then for all $\eta\in\left[0,1\right]$ and $\epsilon\ge0$ the following holds (Lemma \ref{lemma:distFreeBounds}):
\begin{equation}
\mathcal{R}^{\left(\mathrm{SSM}\right)}_{n,\left(\epsilon,\eta\right)}\left(\mathcal{F}\right)
\leq
\eta \Delta\left(\lceil n\eta\rceil\right)
+ \left(1-\eta\right)\left\vert\mathcal{Y}\right\vert 
\Delta\left(\lceil n\left(1-\eta\right)\rceil\right),
\end{equation}
where the r.h.s. of the above equation always converges to zero as $n\rightarrow\infty$. This includes the vast majority of classifier families that are being used in real-world applications, e.g. neural networks, support vector machines, random forests and etc. Just as an example, consider the $0$-$1$ loss for a family of classifiers with a VC-dimension of $\mathrm{dim}\left(\Theta\right)$. Then, due to Dudley's entropy bound and Haussler's upper-bound \cite{mohri2012foundations}, there exists constant $C$ such that
\begin{equation}
\Delta\left(n\right)\leq
C\sqrt{\frac{\mathrm{dim}\left(\Theta\right)}{n}},~~\mathrm{and~so}\quad~~
\mathcal{R}^{\left(\mathrm{SSM}\right)}_{n,\left(\epsilon,\eta\right)}\left(\mathcal{F}\right)
\leq C\sqrt{\frac{\mathrm{dim}\left(\Theta\right)}{n}}
\left(\sqrt{\eta}+\sqrt{1-\eta}\left\vert\mathcal{Y}\right\vert\right),
\end{equation}
regardless of $\epsilon$ or the distribution $P_0$ (again, check Lemma \ref{lemma:distFreeBounds}). An interesting implication of this result is that by assuming a bounded VC-dimension for a function set, one can guarantee its learnability even in an adversarial setting where an adversary can apply arbitrarily powerful attacks. However, as long as one is interested in a function set whose classical complexity measures cannot be bounded regardless of the data distribution, not much can be said on adversarial learnability without directly computing $g_{\mathrm{l}}$ and $g_{\mathrm{ul}}$ in Definition \ref{def:SSM_main}. 

\subsubsection{Minimum Supervision Ratio}

As discussed earlier in Section \ref{sec:priorworks}, generalization guarantees for SSL frameworks generally require a {\it {compatibility}} assumption on the hypothesis set $\mathcal{F}$ and data distribution $P_0$. In Appendix \ref{sec:proposed:asymptotic} (and in particular, Definition \ref{def:compatibility}), a new compatibility function, denoted by Minimum Supervision Ratio (MSR), is introduced which has the following functional form:
\begin{equation*}
\mathrm{MSR}_{\left(\mathcal{F},P_0\right)}\left(\lambda,\mathrm{margin}\right):
\mathbb{R}\cup\left\{\pm\infty\right\}\times\mathbb{R}_{\ge0}\rightarrow\left[0,1\right].
\end{equation*}
Intuitively, $\mathrm{MSR}_{\left(\mathcal{F},P_0\right)}$ quantifies the strength of information theoretic relation between the marginal measure $P_{0_{\boldsymbol{X}}}$ and the conditional $P_{0_{\vert\boldsymbol{X}}}$. It also measures the suitability of function set $\mathcal{F}$ to learn such relations. As it will be shown in Theorem \ref{thm:generalBound1}, in order to bound the true risk when unlabeled data are involved, one needs $\eta\ge \mathrm{MSR}_{\left(\mathcal{F},P_0\right)}\left(\lambda,\mathrm{margin}\right)$, for some $\lambda$ and $\mathrm{margin}\ge0$. First argument, $\lambda$, denotes the pessimism of the learner and the second one, $\mathrm{margin}\ge0$, specifies a safety margin for small-size datasets. $\mathrm{MSR}$ is an increasing function w.r.t. $\mathrm{margin}$, while it decreases with $\lambda$. In particular,  $\mathrm{MSR}_{\left(\mathcal{F},P_0\right)}\left(+\infty,\mathrm{margin}\right)=0$, for all $\mathrm{margin}\ge0$.

For negative values of $\lambda$ (optimistic learning), MSR remains small as long as there exists a strong dependency between the distribution of feature vectors $P_{0_{\boldsymbol{X}}}$ and label conditionals $P_{0_{\vert\boldsymbol{X}}}$. This dependency can be obtained, for example, by the {\it {cluster assumption}}. However, MSR does not require such explicit assumptions and thus is able to impose a compatibility condition on the pair $\left(\mathcal{F},P_0\right)$ in a more fundamental way compared to existing works in SSL theory.
Additionally, some loss functions in $\mathcal{F}$ need to be capable of capturing such dependencies, e.g. at least one loss function in $\mathcal{F}$ should resemble the true negative log-likelihood $-\log P_0\left(\boldsymbol{X},y\right)$. Conversely, absence of any dependency between $P_{0_{\boldsymbol{X}}}$ and $P_{0_{\vert\boldsymbol{X}}}$, or the lack of sufficiently ``good" loss functions in $\mathcal{F}$ increases the MSR toward $1$, which forces the learner to choose a large $\lambda$ (in the extreme case $+\infty$) to be able to use the generalization bound of Theorem \ref{thm:generalBound1}. Not to mention that a large $\lambda$ increases the empirical loss which then loosens the bound. This fact, however, should not be surprising since improper usage of unlabeled data is known to be harmful to the generalization instead of improving it.
Based on previous discussions, Theorem \ref{thm:generalBound1} gives a generalization bound for our proposed framework in \eqref{eq:SSLmainMin}:
\\[-3mm]
\begin{thm2}[Generalization]
\label{thm:generalBound1}
For a feature-label space $\mathcal{Z}\triangleq\mathcal{X}\times\mathcal{Y}$ and a parameter space $\Theta$, assume the set of continuous functions $\mathcal{L}\triangleq\left\{\ell\left(\cdot;\theta\right)\vert~\theta\in\Theta\right\}$, with $\ell\left(\cdot;\theta\right):\mathcal{Z}\rightarrow\mathbb{R}$ and $\left\Vert \ell\right\Vert_{\infty}\leq B$ for some $B\ge0$. For $\gamma\ge0$, $\boldsymbol{z}\in\mathcal{Z}$ and $\theta\in\Theta$ let
\begin{equation}
\phi_{\gamma}\left(\boldsymbol{z};\theta\right)\triangleq
\sup_{\boldsymbol{z}'\in\mathcal{Z}}~\ell\left(\boldsymbol{z}';\theta\right)-\gamma c\left(\boldsymbol{z}',\boldsymbol{z}\right),
\end{equation}
and $\Phi\triangleq 
\left\{
\phi_{\gamma}\left(\cdot;\theta\right)~\big\vert~\theta\in\Theta
\right\}$,
where $c:\mathcal{Z}\times\mathcal{Z}\rightarrow\mathbb{R}_{\ge0}$ is a transportation cost. For a supervision ratio $\eta\in\left[0,1\right]$, 
assume a partially labeled dataset $\boldsymbol{D}=\left\{\left(\boldsymbol{X}_i,y_i\right)\right\}^n_{i=1}$ including $n$ i.i.d. samples drawn from $P_0\in M\left(\mathcal{Z}\right)$, where labels can be observed with probability of $\eta$, independently. For $0<\delta\leq1$ and $\lambda\in\mathbb{R}\cup\left\{\pm\infty\right\}$, assume $\eta$ satisfies the following condition:
\begin{equation}
\hspace{-0mm}
\eta\ge
\mathrm{MSR}_{\left(\Phi,P_0\right)}\left(\lambda,
4B\sqrt{\frac{\log
\left(
{1}/{\delta}
\right)
}{2n}}+
4\mathcal{R}^{\left(\mathrm{SSM}\right)}_{n,\left(\epsilon,\eta\right)}\left(\mathcal{L}\right)
\right).
\label{eq:generalThmCondition}
\end{equation}
Then, with probability at least $1-\delta$, the following bound holds for all $\epsilon\ge0$:
\begin{align}
\label{eq:finalGeneralBound}
\sup_{P\in\mathcal{B}_{\epsilon}\left(P_0\right)}\mathbb{E}_P\left\{\ell\left(\boldsymbol{Z};\theta^*\right)\right\}
&\leq
\hat{R}_{\mathrm{SSAR}}\left(\theta^*;\boldsymbol{D}\right)+
2B\sqrt{\frac{\log\left({1}/{\delta}\right)}{2n}}+
2\mathcal{R}^{\left(\mathrm{SSM}\right)}_{n,\left(\epsilon,\eta\right)}\left(\mathcal{L}\right),
\end{align}
where $\theta^*$ is the minimizer of $\hat{R}_{\mathrm{SSAR}}\left(\theta;\boldsymbol{D}\right)$.
\end{thm2}
Proof of Theorem \ref{thm:generalBound1} is given in Appendix \ref{sec:appendix:thm}. Condition in \eqref{eq:generalThmCondition} can always be satisfied based on Lemma \ref{lemma:compExistLemma}, as long as $\lambda$ and $n$ are sufficiently large and $\mathcal{L}$ is adversarially learnable. 
A strongly-compatible pair of hypothesis set $\Phi$ and data distribution $P_0$ should encourage optimism, where learner can choose small (generally negative) $\lambda$ values.
However, in some situations increasing $\lambda$ might be necessary for \eqref{eq:generalThmCondition} to hold; In fact, for a weakly-compatible $\left(\Phi,P_0\right)$, $\lambda$ must be positive or even $+\infty$ (the latter always satisfies \eqref{eq:generalThmCondition} regardless of $n$ or $\eta$). Note that choosing a  larger $\lambda$ increases the empirical risk $\hat{R}_{\mathrm{SSAR}}\left(\theta^*;\boldsymbol{D}\right)$, which then increases our bound in \eqref{eq:finalGeneralBound}. Interestingly, $\lambda=+\infty$ coincides with the setting of \cite{loog2016contrastive}, which makes it as a special case of our analysis. 

For a fixed $\epsilon$, $\gamma$ should be tuned to minimize the upper-bound for a better generalization. On the other hand, for every $\gamma$ there exists $\epsilon\ge0$ where \eqref{eq:finalGeneralBound} becomes asymptotically tight. More importantly, the limiting cases of Theorem \ref{thm:generalBound1}, i.e. $\epsilon=0$ and $\eta=1$, provide us with a new  generalization bound for non-robust SSL, and an already-established bound for supervised DRL in \cite{sinha2018certifying}, respectively.

\section{Experimental Results}
\label{sec:exp}

In this section, we demonstrate our experimental results on a number of real-world datasets and also compare our method with some state-of-the-art rival methodologies. We have chosen our loss function set 
$\left\{\ell\left(\cdot;\theta\right)\vert\theta\in\Theta\right\}$ as a particular family of Deep Neural Networks (DNN). Architecture and other specifications about our DNNs are explained in details in Appendix \ref{sec:appendix:exp}. Throughout this section, our method is denoted by SSDRL and the rival frameworks are Virtual Adversarial Training (VAT) \cite{miyato2018virtual}, Pseudo-Labeling (PL) \cite{lee2013pseudo}, and the fully-supervised DRL of \cite{sinha2018certifying}, simply denoted as DRL. We have also implemented a fast version of SSDRL, called F-SSDRL, where for each unlabeled training sample only a limited number of {\it {more favorable}} labels are considered for Algorithm \ref{alg:SSLDROsgd}. Here by {\it {more favorable}} labels, we refer to those labels that correspond to smaller non-robust loss values of $\ell\left(\cdot;\theta\right)$. As a result, F-SSDRL runs much faster than SSDRL without much degradation in performance. Surprisingly, we found out that F-SSDRL often yields better performances in practice compared to SSDRL (also see Appendix \ref{sec:appendix:exp} for more details).

\begin{figure*}[t]
	\centering
    \begin{subfigure}{0.32\textwidth}
		\includegraphics[width=1.0\textwidth]{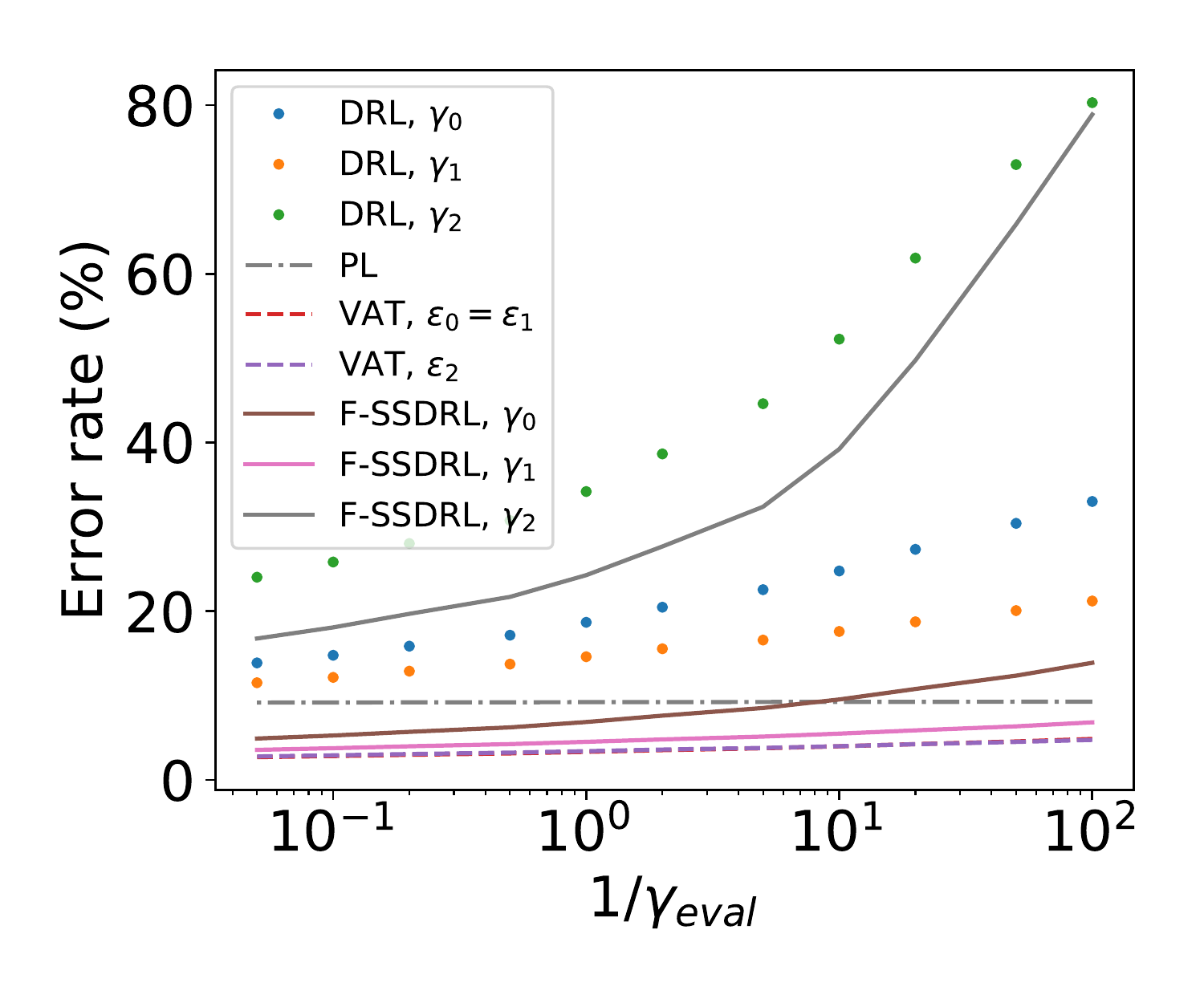}
		\vspace*{-7mm}
        \caption{\label{fig:1:compare_methods_mnist} MNIST}
    \end{subfigure}
    \begin{subfigure}{0.32\textwidth}
		\includegraphics[width=1.0\textwidth]{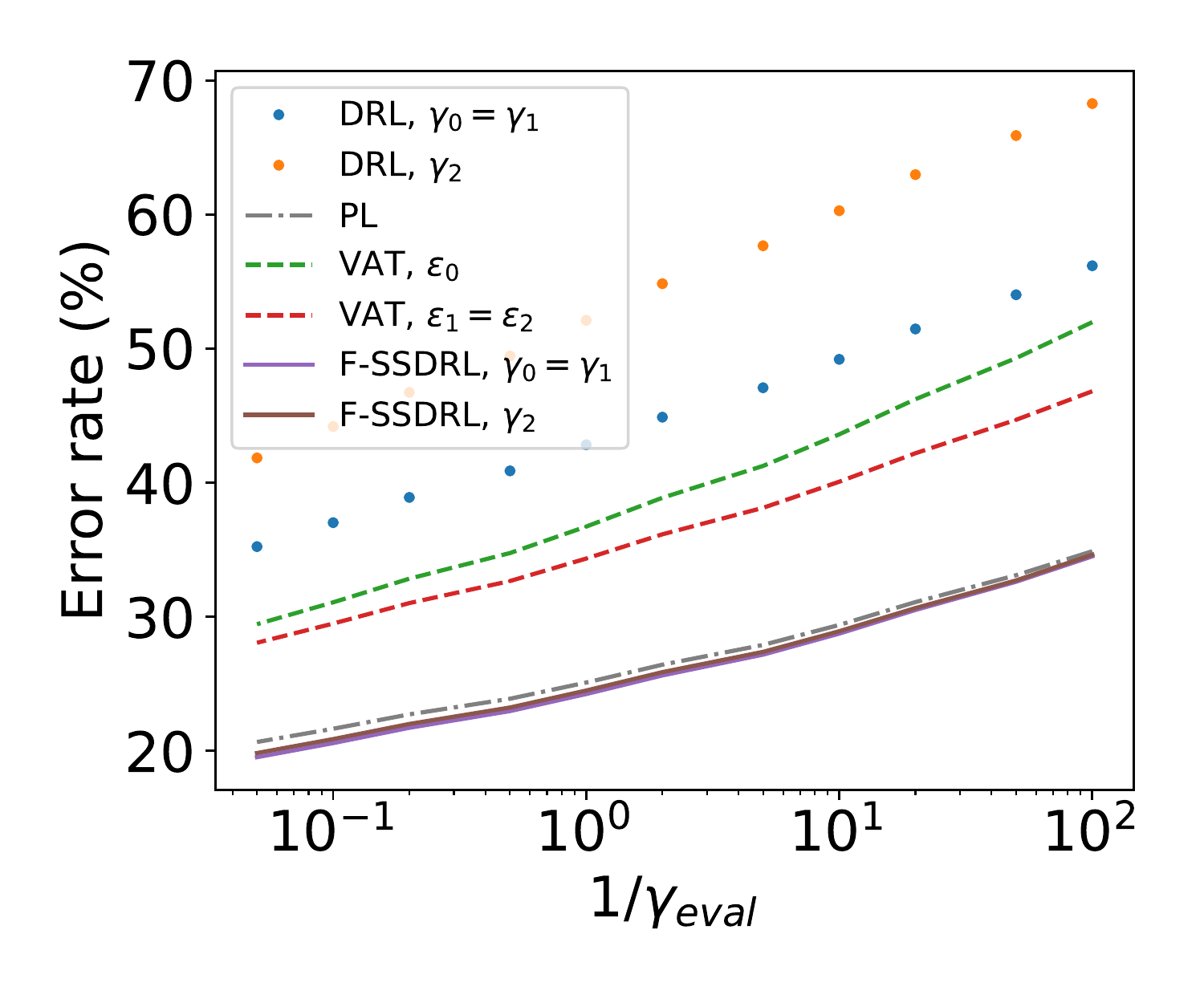}
		\vspace*{-7mm}
        \caption{\label{fig:1:compare_methods_svhn} SVHN}
    \end{subfigure}
    \begin{subfigure}{0.32\textwidth}
		\includegraphics[width=1.0\textwidth]{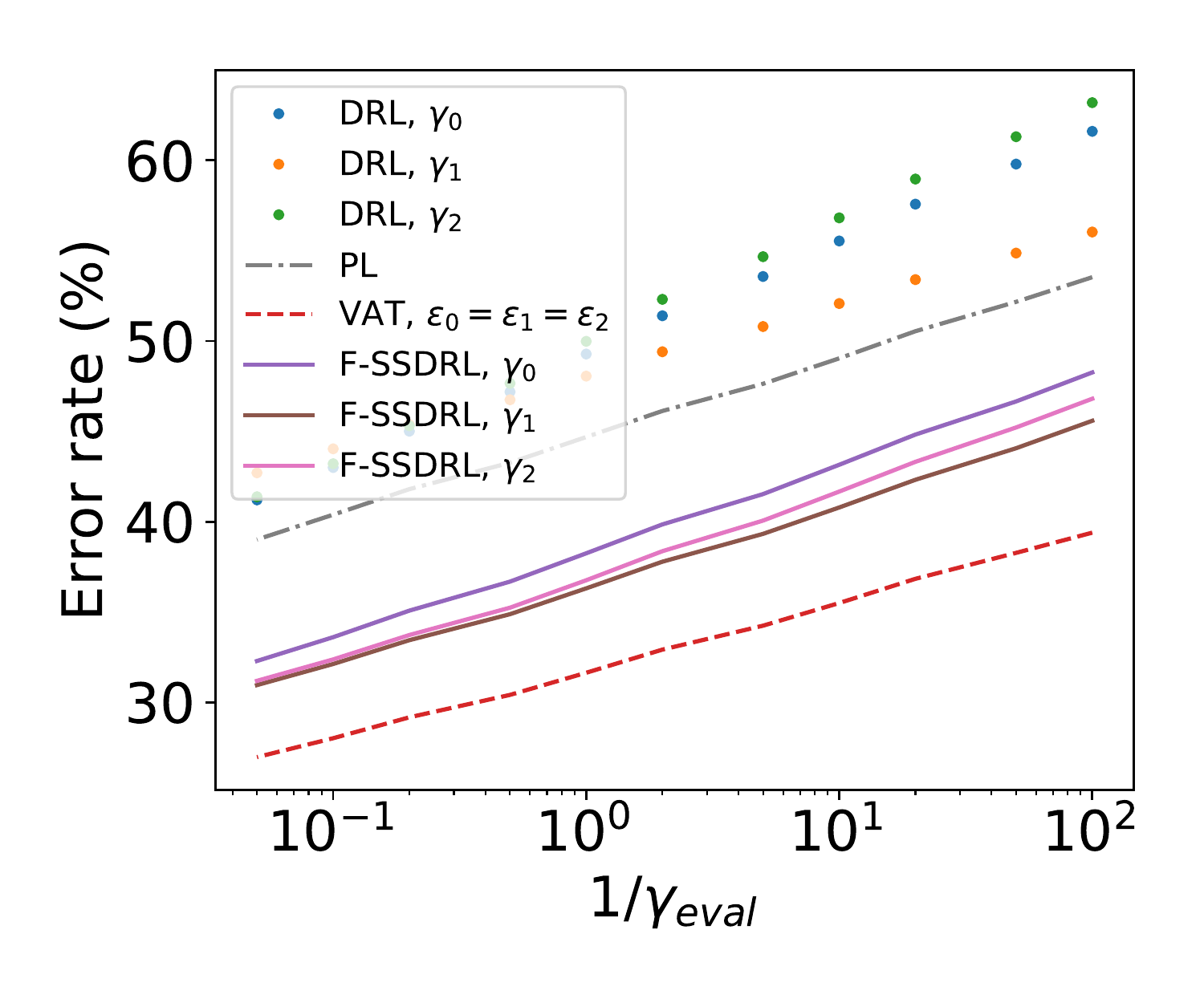}
		\vspace*{-7mm}
        \caption{\label{fig:1:compare_methods_cifar10} CIFAR-10}
    \end{subfigure}
    \caption{Comparison of the test error-rates on adversarial examples attained via \cite{sinha2018certifying} among different methods.}
    \label{fig:mainAcc1}
\end{figure*}

\begin{figure*}[t]
	\centering
    \begin{subfigure}{0.32\textwidth}
		\includegraphics[width=1.0\textwidth]{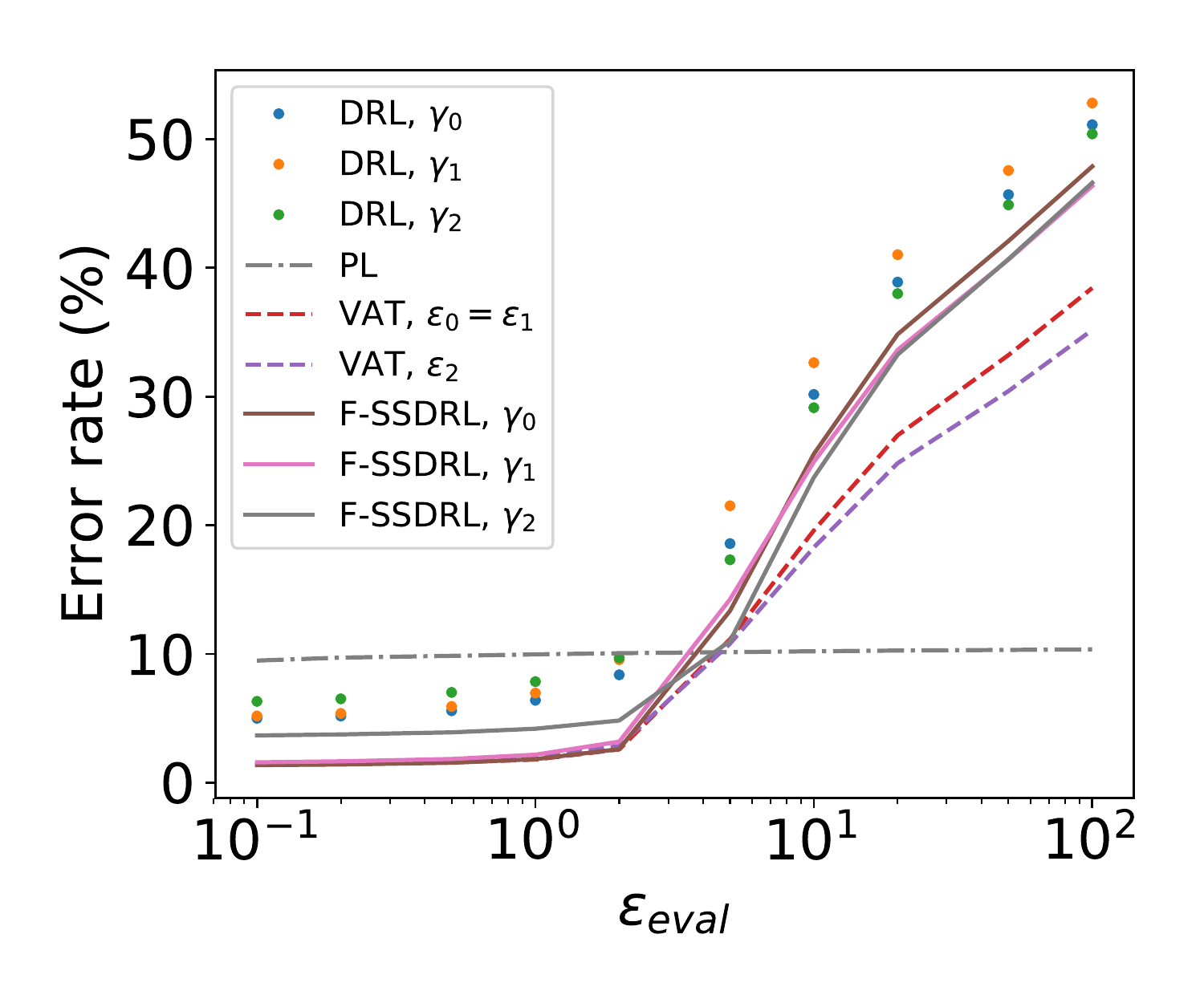}
		\vspace*{-7mm}
        \caption{\label{fig:2:compare_methods_mnist} MNIST}
    \end{subfigure}
    \begin{subfigure}{0.32\textwidth}
		\includegraphics[width=1.0\textwidth]{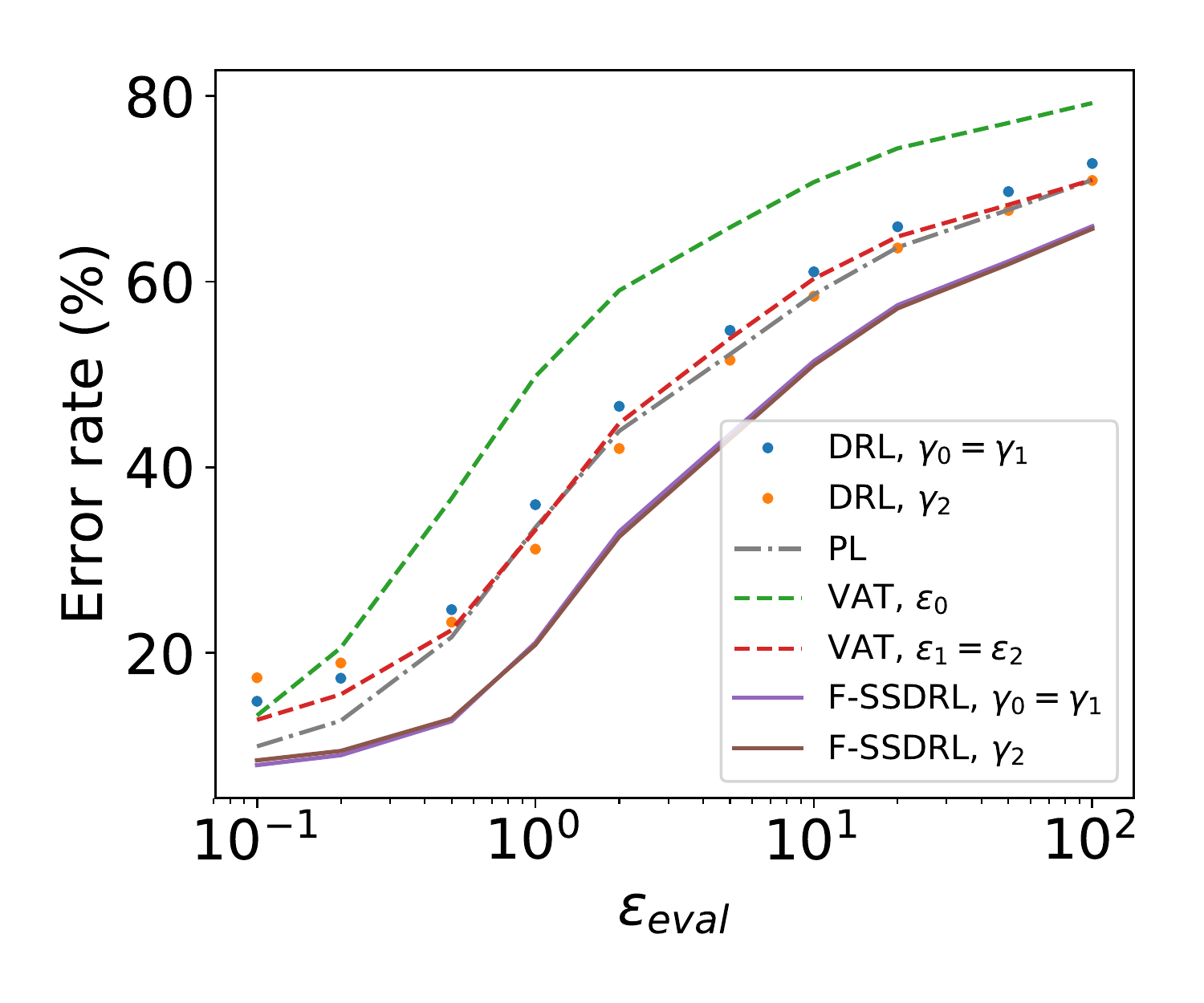}
		\vspace*{-7mm}
        \caption{\label{fig:2:compare_methods_svhn} SVHN}
    \end{subfigure}
    \begin{subfigure}{0.32\textwidth}
		\includegraphics[width=1.0\textwidth]{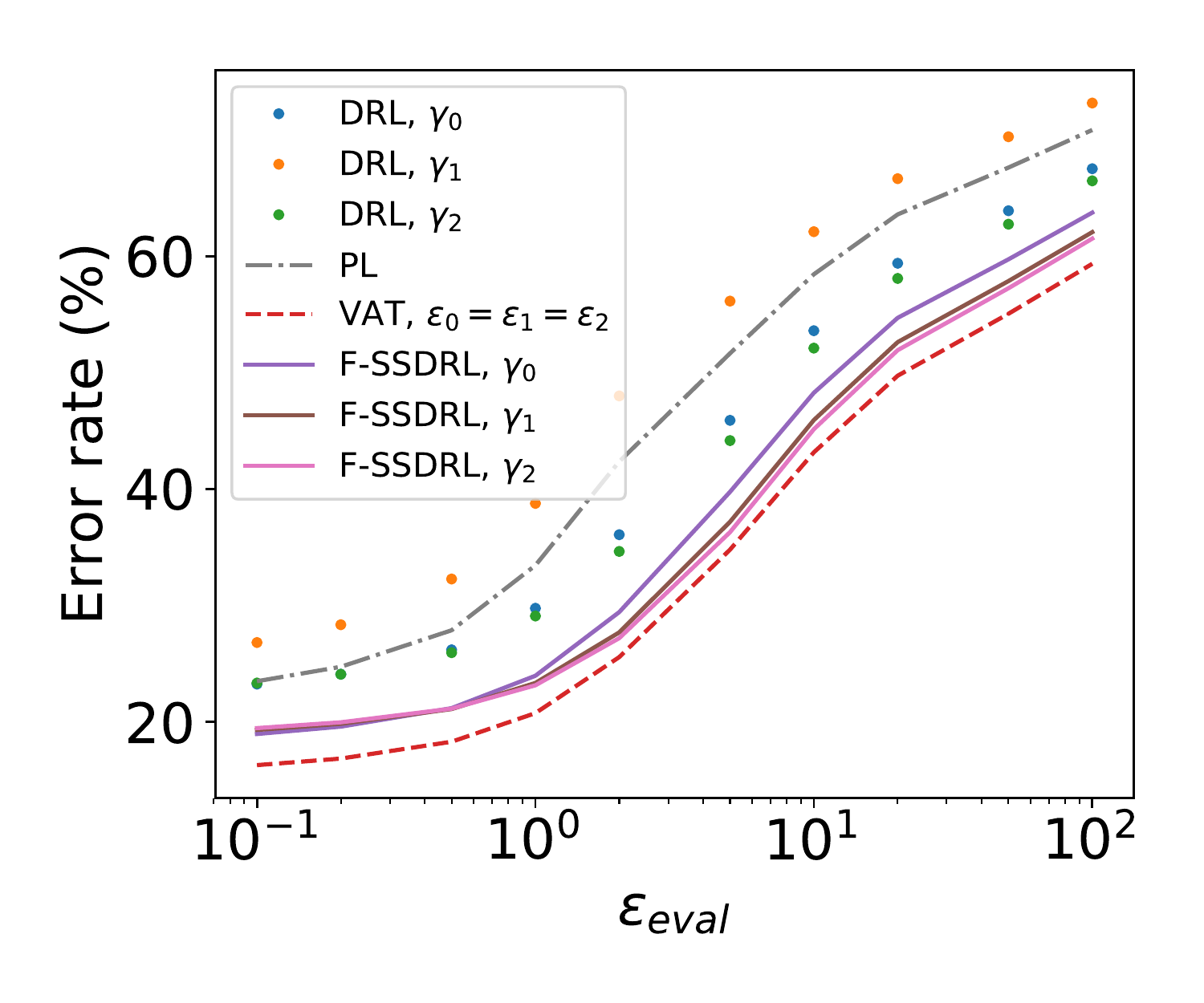}
		\vspace*{-7mm}
        \caption{\label{fig:2:compare_methods_cifar10} CIFAR-10}
    \end{subfigure}
    \caption{Comparison of the test error-rates on adversarial examples calculated by PGM \cite{madry2017towards}, under $\ell_2$-norm constraint.}
    \label{fig:mainAcc2}
\end{figure*}

Figure \ref{fig:mainAcc1} shows the misclassification rate vs. $\gamma^{-1}$ on adversarial test examples attained by \eqref{eq:innerMaxEq} (same attack strategy as \cite{sinha2018certifying}). Recall $\gamma$ as the dual-counterpart of the Wasserstein radius $\epsilon$ in \eqref{eq:main2}. Thus, $\gamma^{-1}$ somehow quantifies the strength of adversarial attacks, as suggested by \cite{sinha2018certifying}. Results have been depicted for MNIST, SVHN and CIFAR-10 datasets. Figure \ref{fig:mainAcc2} demonstrates the same procedure for adversarial examples generated by Projected-Gradient Method (PGM) \cite{madry2017towards}; In this case, the error-rate is depicted vs. PGM's {\it {strength of attack}}, i.e. $\varepsilon$. For VAT and SSDRL, curves have been shown for different choices of hyper-parameters, i.e., $\left(\gamma_i~\mathrm{or}~\varepsilon_i\right),i=1,2,3$, which correspond to  the lowest error rates on: ($i=1$) clean examples, ($i=2$) adversarial examples by \cite{sinha2018certifying}, and ($i=3$) adversarial examples by PGM, respectively. Values of $\left(\gamma_i,\varepsilon_i\right)$, and the choices of $\lambda$, transportation cost $c$, and the supervision ratio $\eta$ with more details on the experiments can be found in Appendix \ref{sec:appendix:exp}.

\if0
\begin{figure*}[t]
	\centering
    \begin{subfigure}{0.32\textwidth}
		\includegraphics[width=1.0\textwidth]{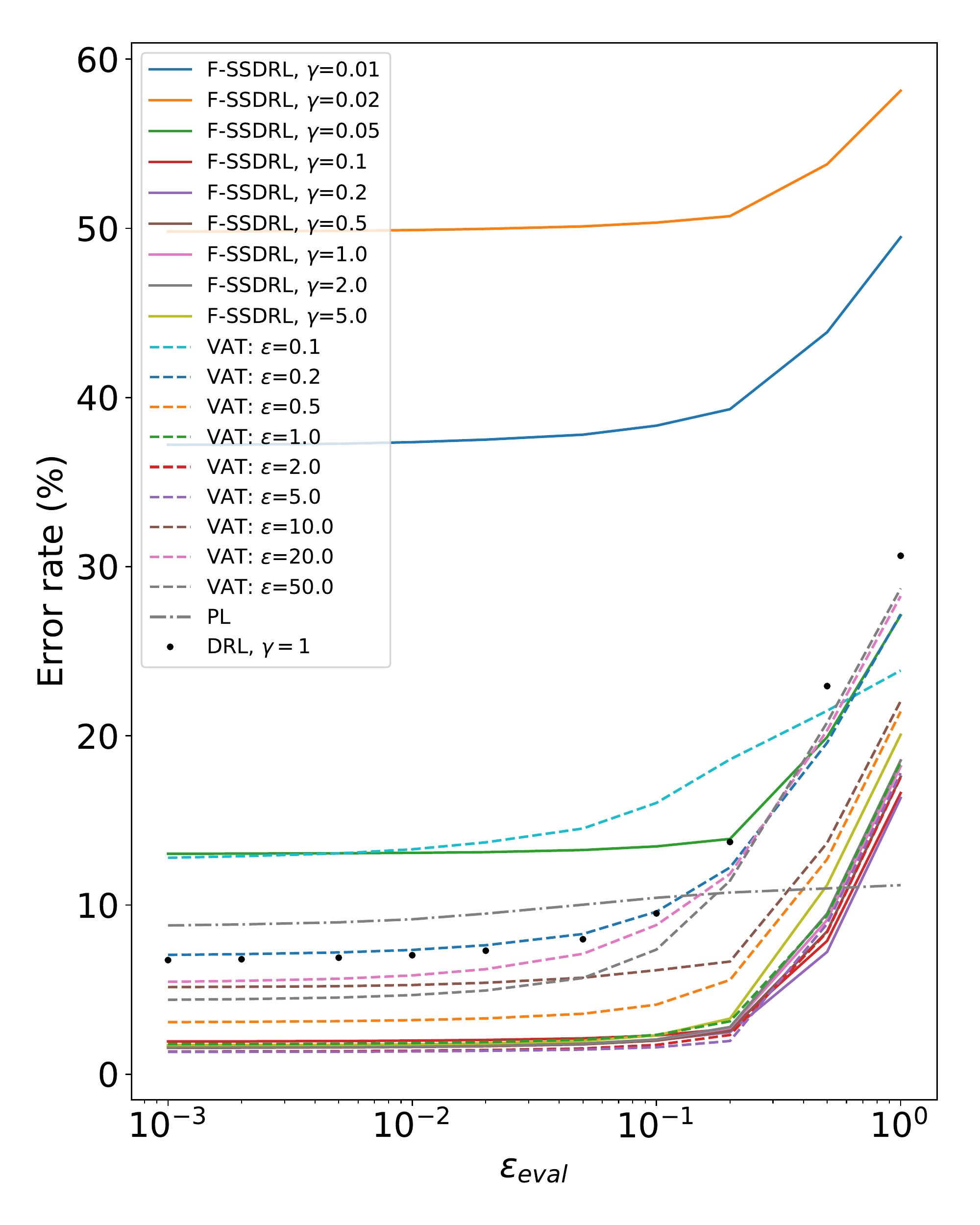}
        \caption{\label{fig:compare_methods_mnist} MNIST}
    \end{subfigure}
    \begin{subfigure}{0.32\textwidth}
		\includegraphics[width=1.0\textwidth]{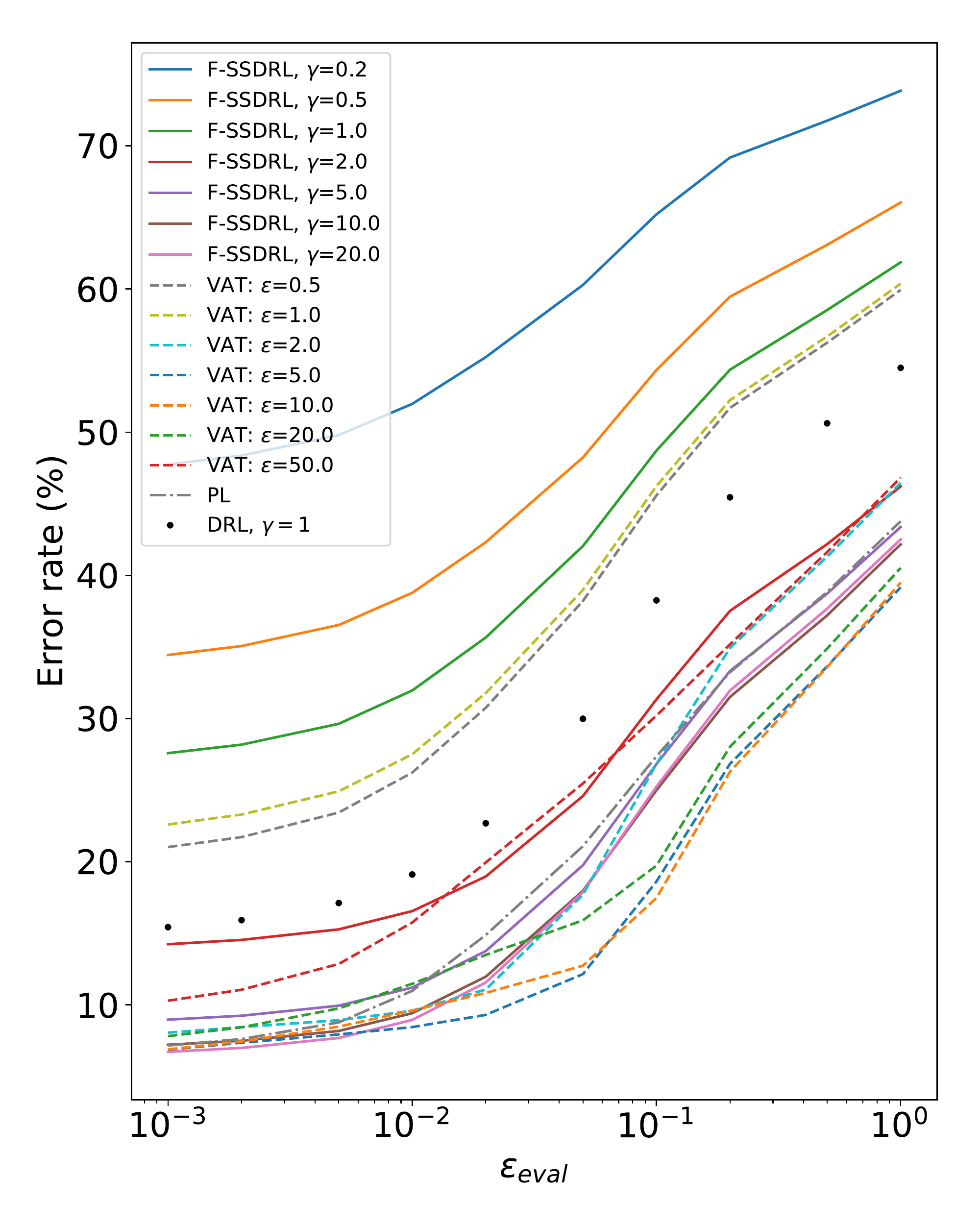}
        \caption{\label{fig:compare_methods_svhn} SVHN}
    \end{subfigure}
    \begin{subfigure}{0.32\textwidth}
		\includegraphics[width=1.0\textwidth]{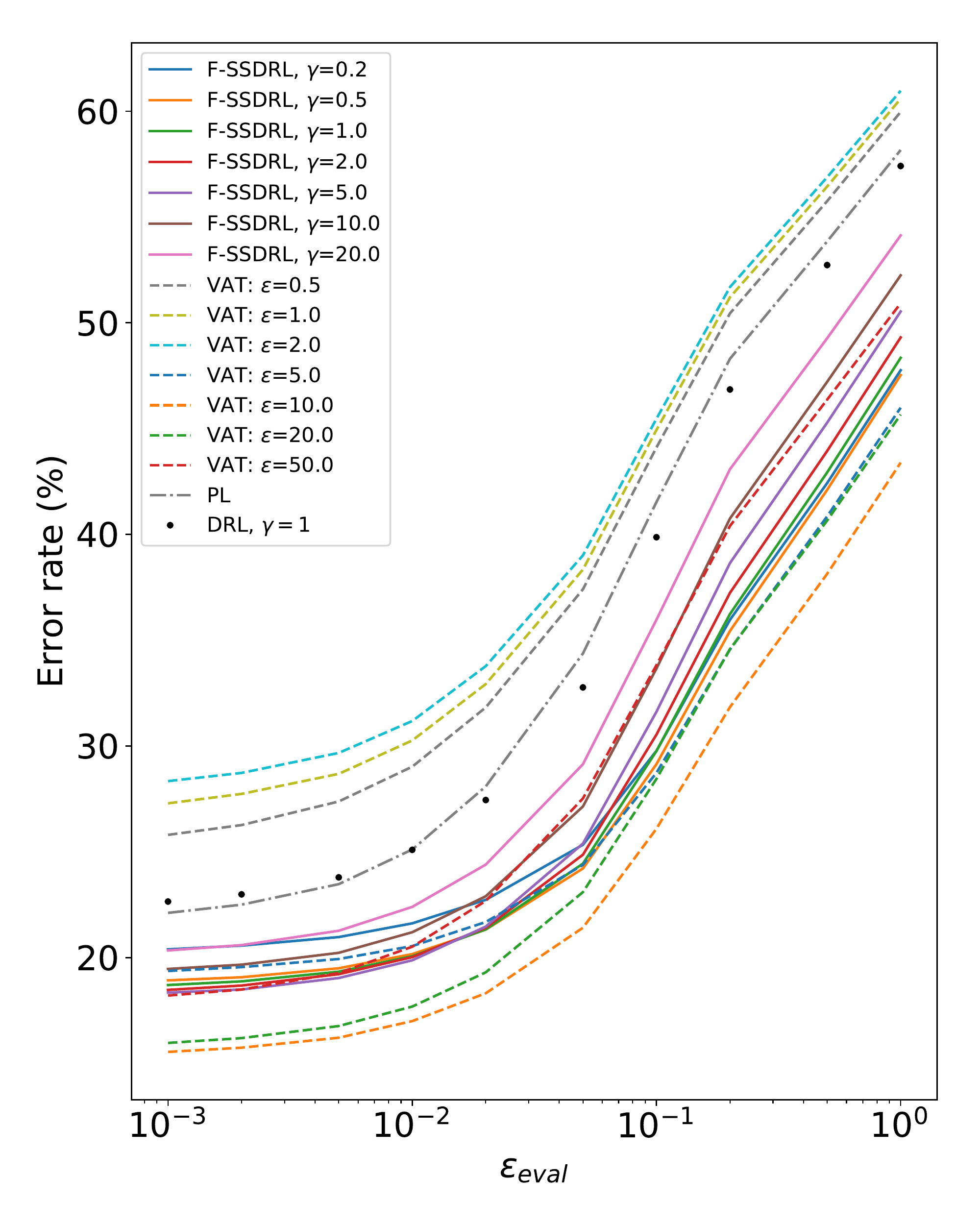}
        \caption{\label{fig:compare_methods_cifar10} CIFAR-10}
    \end{subfigure}
    \caption{\label{fig:mnist} The comparison of the test accuracy on adversarial examples calculated by  projected-gradient method (PGM)~\cite{madry2017towards} under $L_{\infty}$ norm constraint. $\lambda$ is set to -1 for SSDRL on MNIST. }
\end{figure*}
\fi

According to Figures \ref{fig:mainAcc1} and \ref{fig:mainAcc2}, the proposed method is always superior to DRL and PL. Also, SSDRL outperforms VAT on SVHN dataset regardless of the attack type, while it has a comparable error-rate on MNIST and CIFAR-10 based on Figures \ref{fig:1:compare_methods_mnist} and \ref{fig:2:compare_methods_cifar10}, respectively. The superiority over DRL highlights the fact that exploitation of unlabeled data has improved the performance. However, SSDRL under-performs VAT on MNIST and  CIFAR-10 datasets if the order of attacks are reversed. According to Figure \ref{fig:2:compare_methods_mnist}, accuracy of PL degrades quite slowly as PGM's $\varepsilon$ increases, although the loss values increase in Figure \ref{fig:mnist_pgd-l2-adv_val_loss}. This phenomenon is due to the fact that the adversarial directions for increasing the {\it {loss}} and {\it {error-rate}} are not correlated in this particular case.

\begin{SCtable}[0.62][b]
  \centering
		\caption{\label{tab:semisup_wodataaug} Test error-rates on clean examples. For DRL, VAT and F-SSDRL, rows 1 to 3 correspond to the parameter ($\gamma_i$ for DRL and  F-SSDRL, and $\varepsilon_i$ for VAT) that yields the lowest error rates on: ($i=1$) clean examples, ($i=2$)  adversarial examples by \cite{sinha2018certifying}, and ($i=3$) adversarial examples by PGM, respectively. }
		\begin{tabular}{lrrr}
			\toprule
			\textbf{Method} & \multicolumn{3}{c}{\textbf{Test Error-Rate}(\%) }  \\[1mm]
		     & MNIST & SVHN & CIFAR-10 \\
            \midrule
            DRL \\
             \;$\gamma_1$ & 4.67$\pm$0.38 & 10.89$\pm$0.53 & 21.62$\pm$0.40 \\
             \;$\gamma_2$ & 4.77$\pm$0.15 & 10.89$\pm$0.53 & 23.77$\pm$0.65 \\
             \;$\gamma_3$ & 5.95$\pm$0.13 & 14.45$\pm$0.93 & 21.97$\pm$0.35 \\
             \midrule
            PL         &8.70$\pm$0.47&6.39$\pm$0.46&21.19$\pm$0.25\\
             \midrule
            VAT \\
            \;$\varepsilon_1$ & 1.30$\pm$0.04 & 5.47$\pm$0.33 & 15.19$\pm$0.55 \\
            \;$\varepsilon_2$ & 1.30$\pm$0.04 & 7.01$\pm$0.24 & 15.19$\pm$0.55\\
            \;$\varepsilon_3$ & 1.32$\pm$0.10 & 7.01$\pm$0.24 & 15.19$\pm$0.55 \\
            \midrule
            F-SSDRL & & & \\
            \;$\gamma_1$ & 1.29$\pm$0.09 & 6.19$\pm$0.22 & 17.94$\pm$0.20 \\
            \;$\gamma_2$ & 1.51$\pm$0.03 & 6.19$\pm$0.22 & 18.35$\pm$0.24 \\
            \;$\gamma_3$ & 3.58$\pm$1.13 & 6.74$\pm$0.22 & 18.64$\pm$0.16 \\
			\bottomrule
		\end{tabular}
\end{SCtable}

\begin{figure*}[t]
	\centering
    \begin{subfigure}[b]{0.32\textwidth}
		\includegraphics[width=1.0\textwidth]{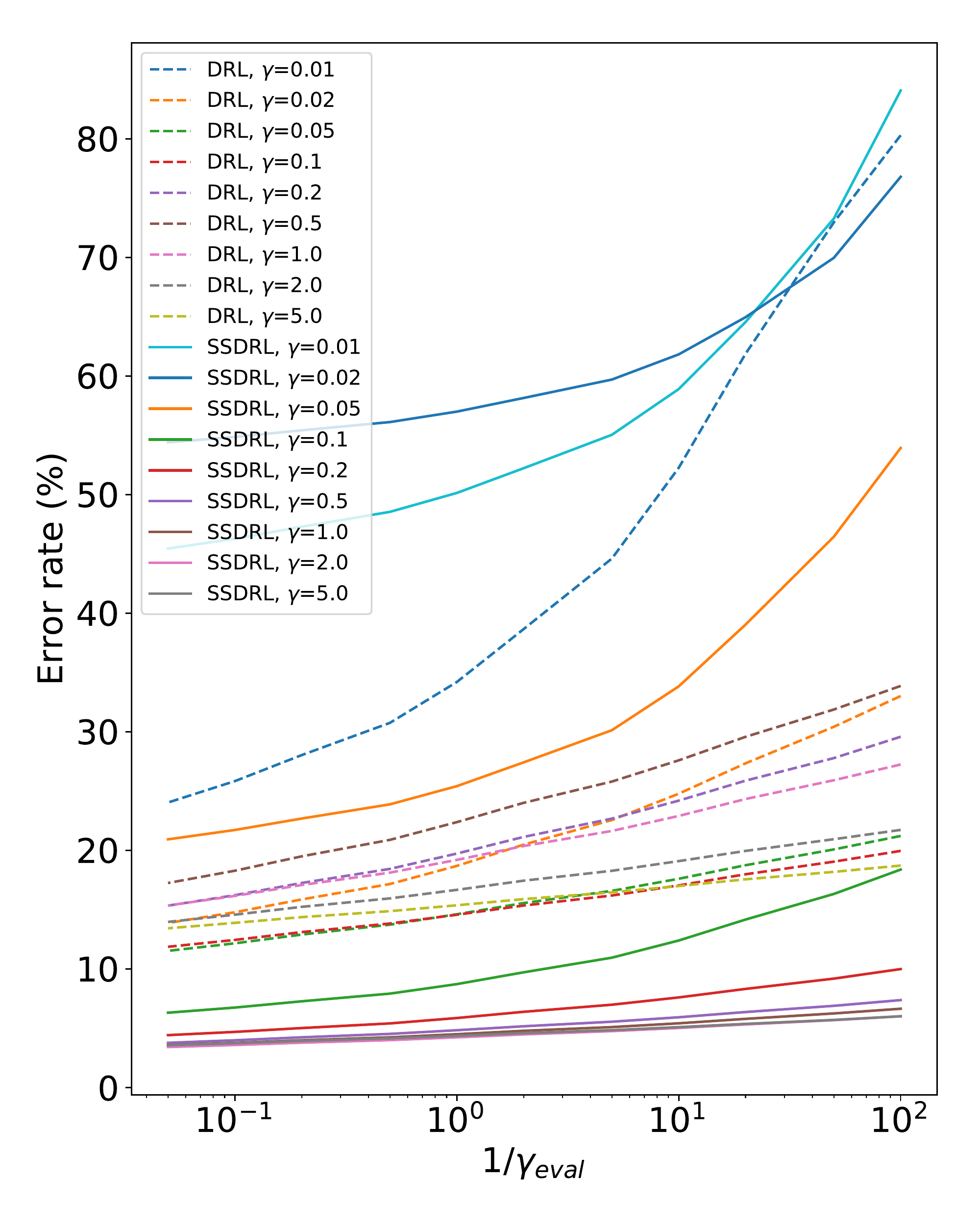}
        \caption{\label{fig:compare_drl_and_ssdrl}}
    \end{subfigure}    
    \begin{subfigure}[b]{0.32\textwidth}
		\includegraphics[width=1.0\textwidth]{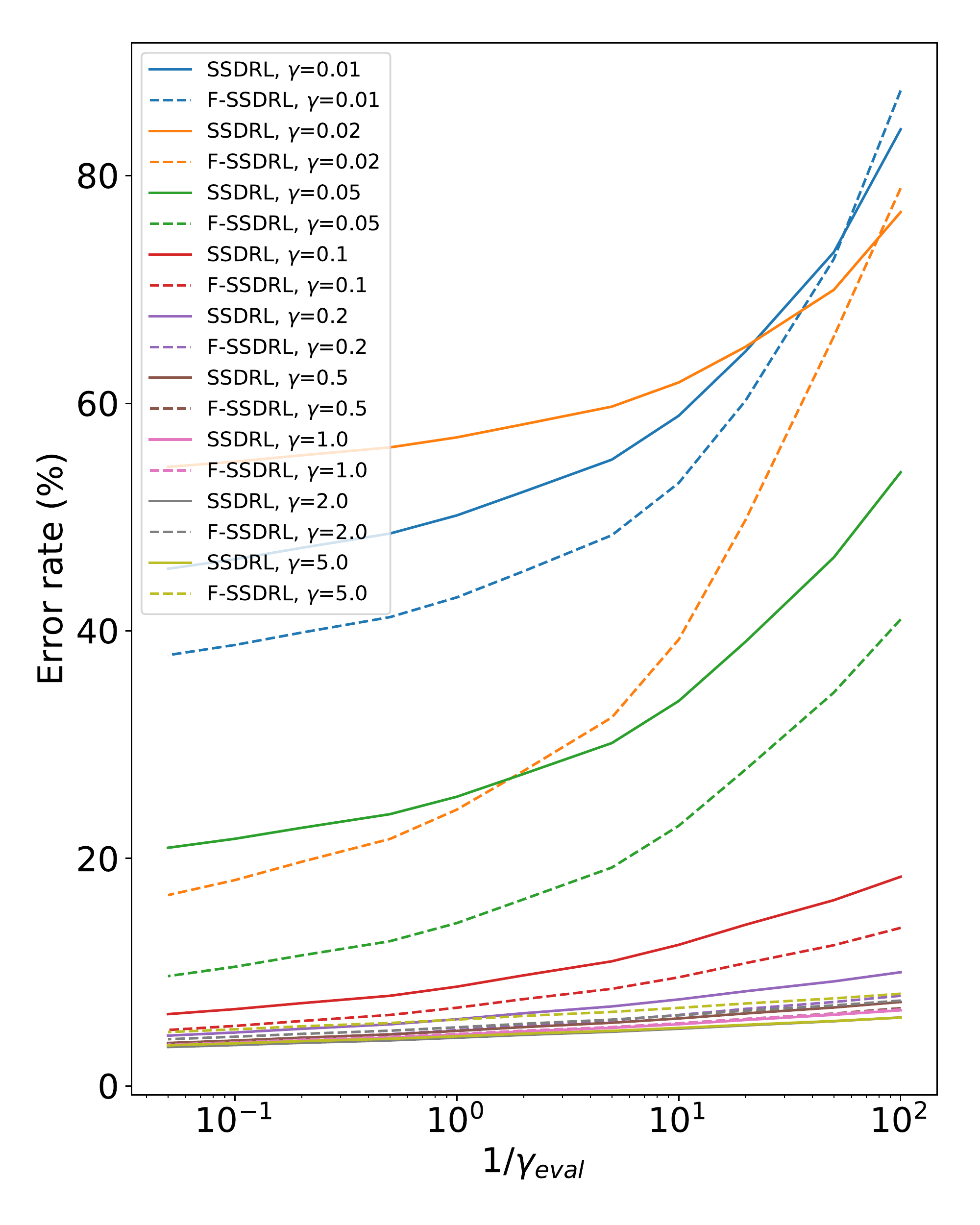}
        \caption{\label{fig:compare_with_fast}}
    \end{subfigure}
    \begin{subfigure}[b]{0.32\textwidth}
		\includegraphics[width=1.0\textwidth]{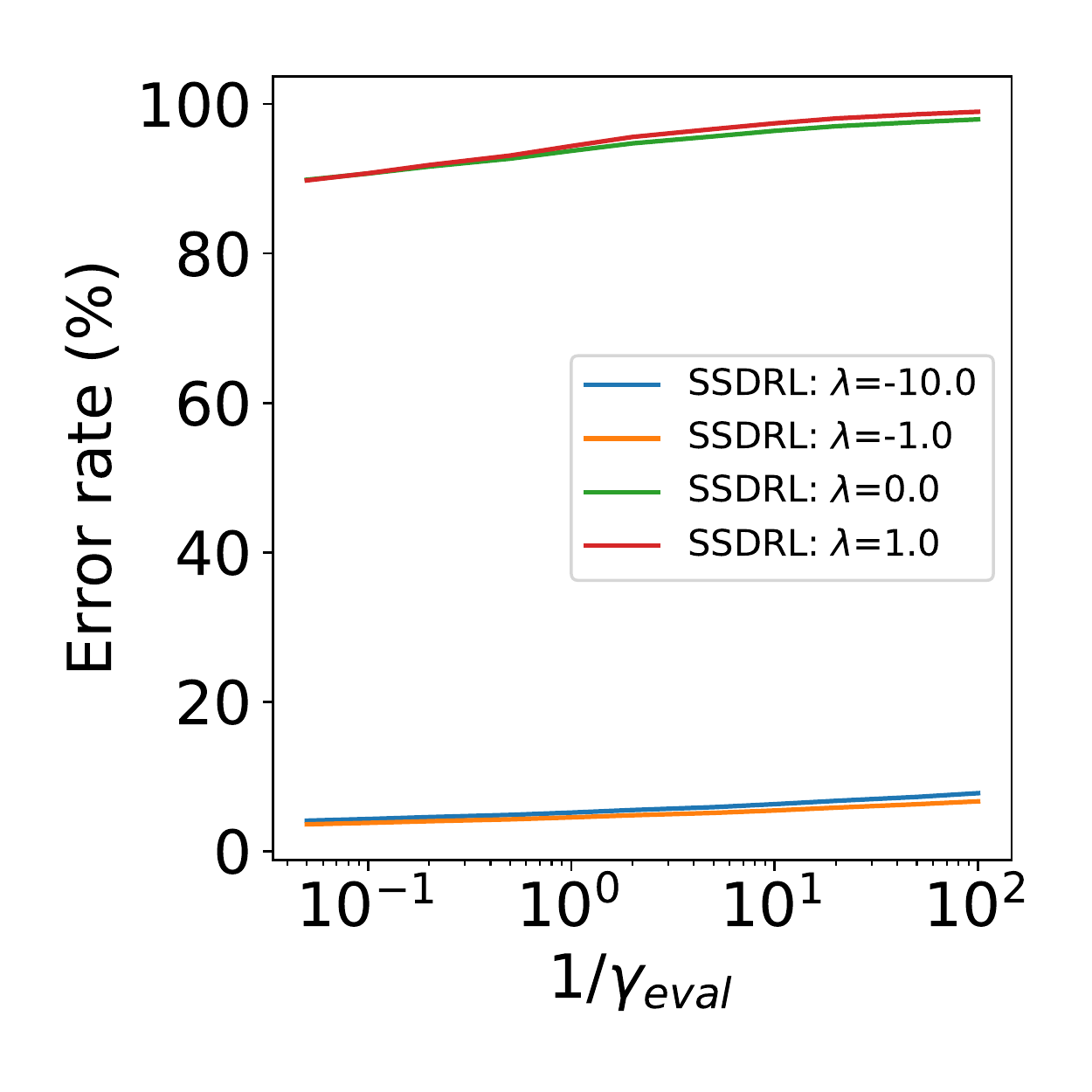}
        \caption{\label{fig:lambda}}
    \end{subfigure}
    \caption{\label{fig:mnist} Error rates on adversarial examples generated via the algorithm in \cite{sinha2018certifying} vs. $\gamma^{-1}_{\mathrm{eval}}$ on the MNIST dataset.}
\end{figure*}

Figure \ref{fig:mnist} depicts the error-rate corresponding to DRL, SSDRL and F-SSDRL as a function of $\gamma^{-1}$, on adversarial examples in the MNIST dataset which are generated via the maximization problem $\argmax_{\boldsymbol{z}'}~\ell\left(\boldsymbol{z}';\theta\right)-\gamma c\left(\boldsymbol{z}';\cdot\right)$ (as described in \cite{sinha2018certifying}). Unlike Figures \ref{fig:mainAcc1} and \ref{fig:mainAcc2}, we have shown the results for a range of values of $\gamma$ and $\lambda$, in order to experimentally measure the sensitivity of our method to these hyper-parameters. Also, we have performed the same procedure for DRL for the sake of comparison. In particular, Figure \ref{fig:compare_drl_and_ssdrl} shows the comparison between DRL and SSDRL (with $\lambda$ set to $-1$ for SSDRL) and different values of $\gamma$. As it is evident for the majority of cases ($\gamma \geq 0.05$), SSDRL performs much better than DRL. This result indicates that employing the unlabeled data samples improves the generalization, which is highly favorable. Figure \ref{fig:compare_with_fast} depicts the comparison between F-SSDRL and the original SSDRL (again $\lambda$ is set to -1 for SSDRL). Figure \ref{fig:lambda} shows the effect of varying $\lambda$ (with $\gamma$ fixed to $1$). Surprisingly, the error-rate experiences a drastic jump when one changes the sign of $\lambda$, which indicates a trade-off between {\it {optimism}} and {\it {pessimism}}. This result might be related to the fact that for the case of MNIST dataset, learned neural networks on the labeled part of the dataset are sufficiently reliable, and thus encourage the user to employ an optimistic approach (i.e., setting a negative $\lambda$) in order to improve the performance. However, while the sign of $\lambda$ is fixed, error-rate does not show that much sensitivity to the magnitude of $\lambda$, which can be noted as a point of strength for SSDRL.

Table \ref{tab:semisup_wodataaug} shows the test error-rates on clean examples for F-SSDRL, VAT, PL and DRL on MNIST, SVHN and CIFAR-10 datasets. In fact, Table \ref{tab:semisup_wodataaug} characterizes the non-adversarial generalization that can be attained via distributional robustness. Again, F-SSDRL outperforms both PL and DRL in almost all experimental settings. It also surpasses VAT on SVHN dataset. F-SSDRL under-performs VAT on MNIST and CIFAR-10, however, the difference in error-rates remains small and the two methods have close performances.

\begin{figure*}[t]
	\centering
    \begin{subfigure}{0.32\textwidth}
		\includegraphics[width=1.0\textwidth]{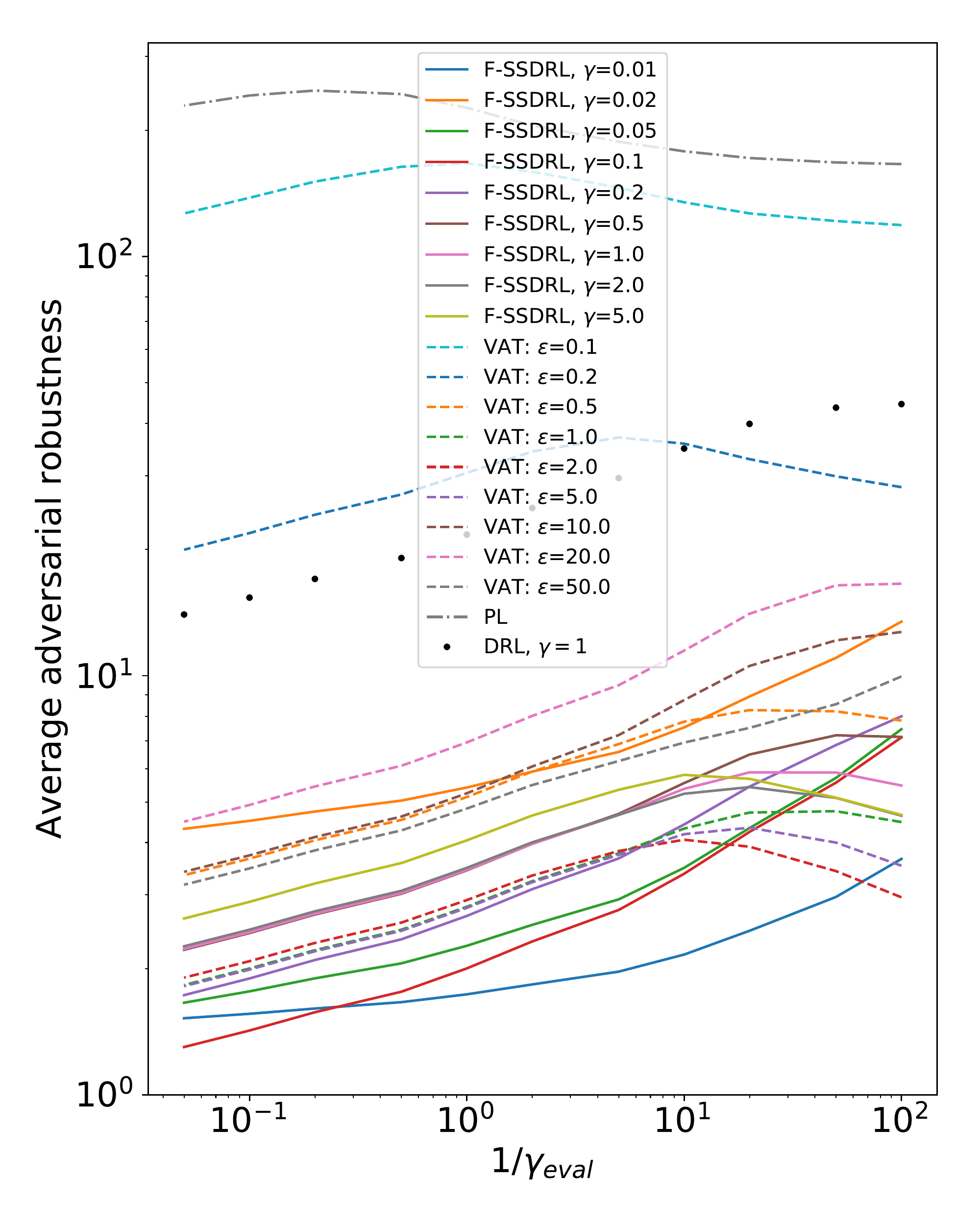}
        \caption{\label{fig:mnist_drl-phi} MNIST}
    \end{subfigure}
    \begin{subfigure}{0.32\textwidth}
		\includegraphics[width=1.0\textwidth]{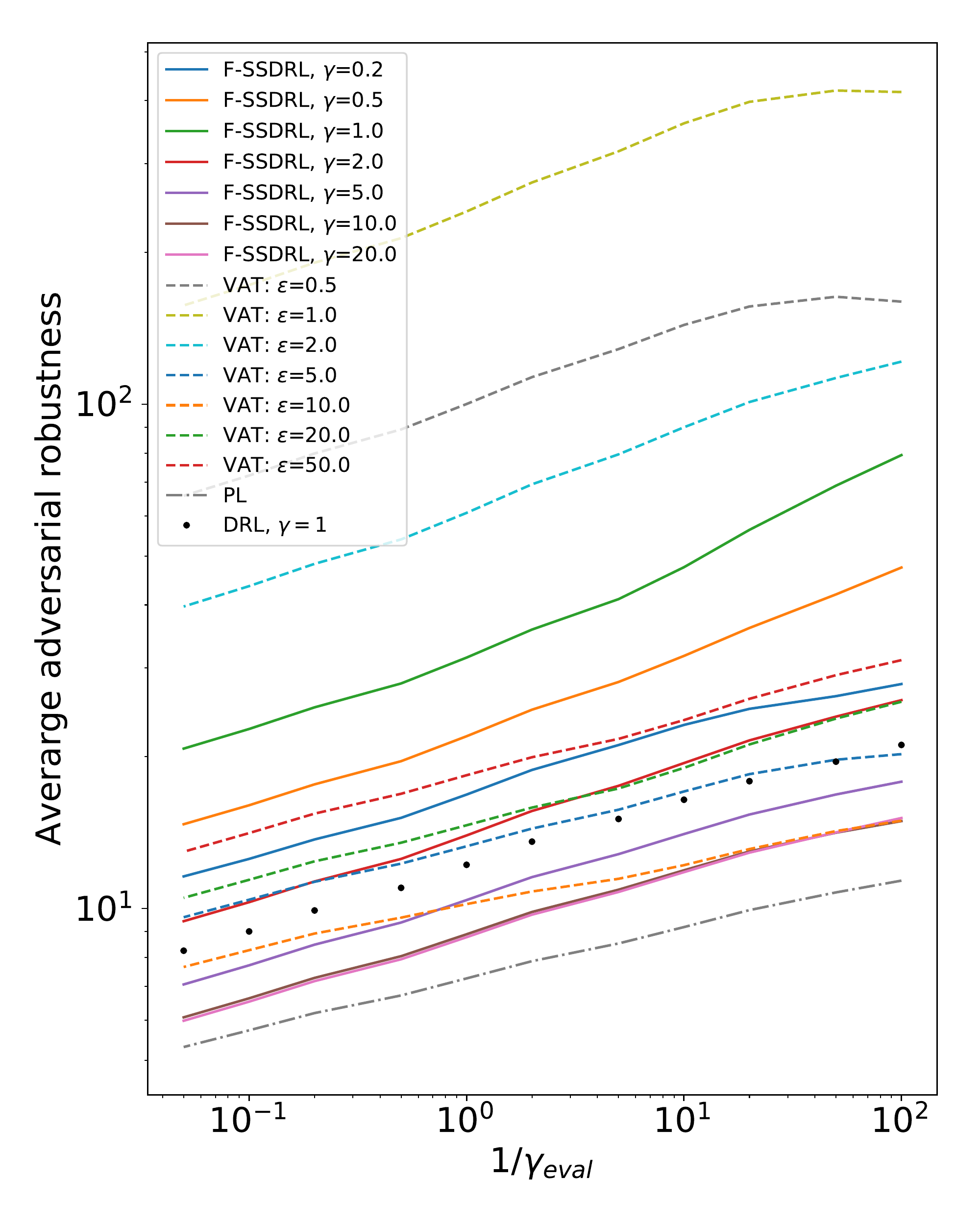}
        \caption{\label{fig:svhn_drl-phi} SVHN}
    \end{subfigure}
    \begin{subfigure}{0.32\textwidth}
		\includegraphics[width=1.0\textwidth]{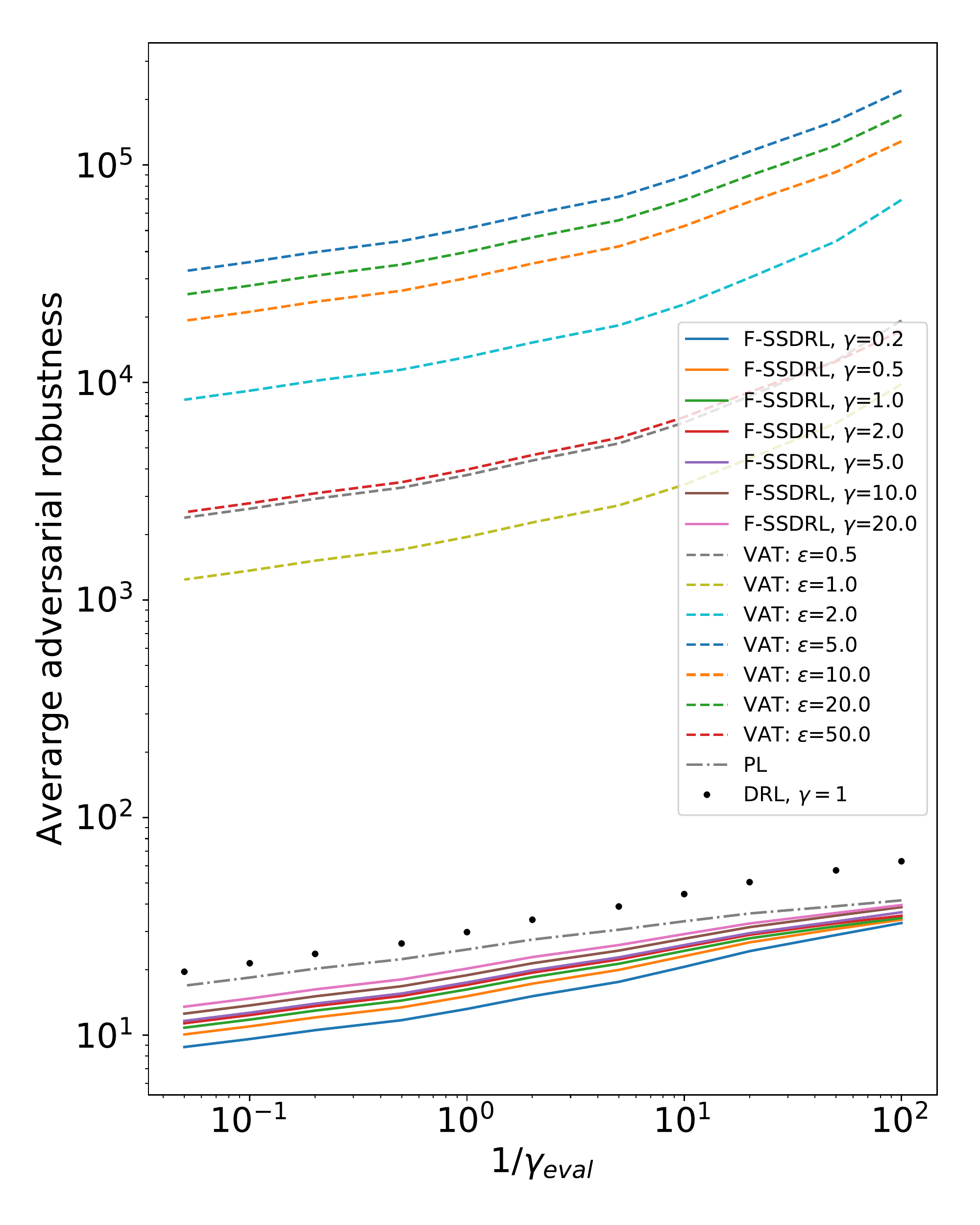}
        \caption{\label{fig:cifar10_drl-phi} CIFAR-10}
    \end{subfigure}
    \caption{\label{fig:drl-phi} Comparison of the average adversarial loss among different methods.}
\end{figure*}

\begin{figure*}[t]
	\centering
    \begin{subfigure}{0.32\textwidth}
		\includegraphics[width=1.0\textwidth]{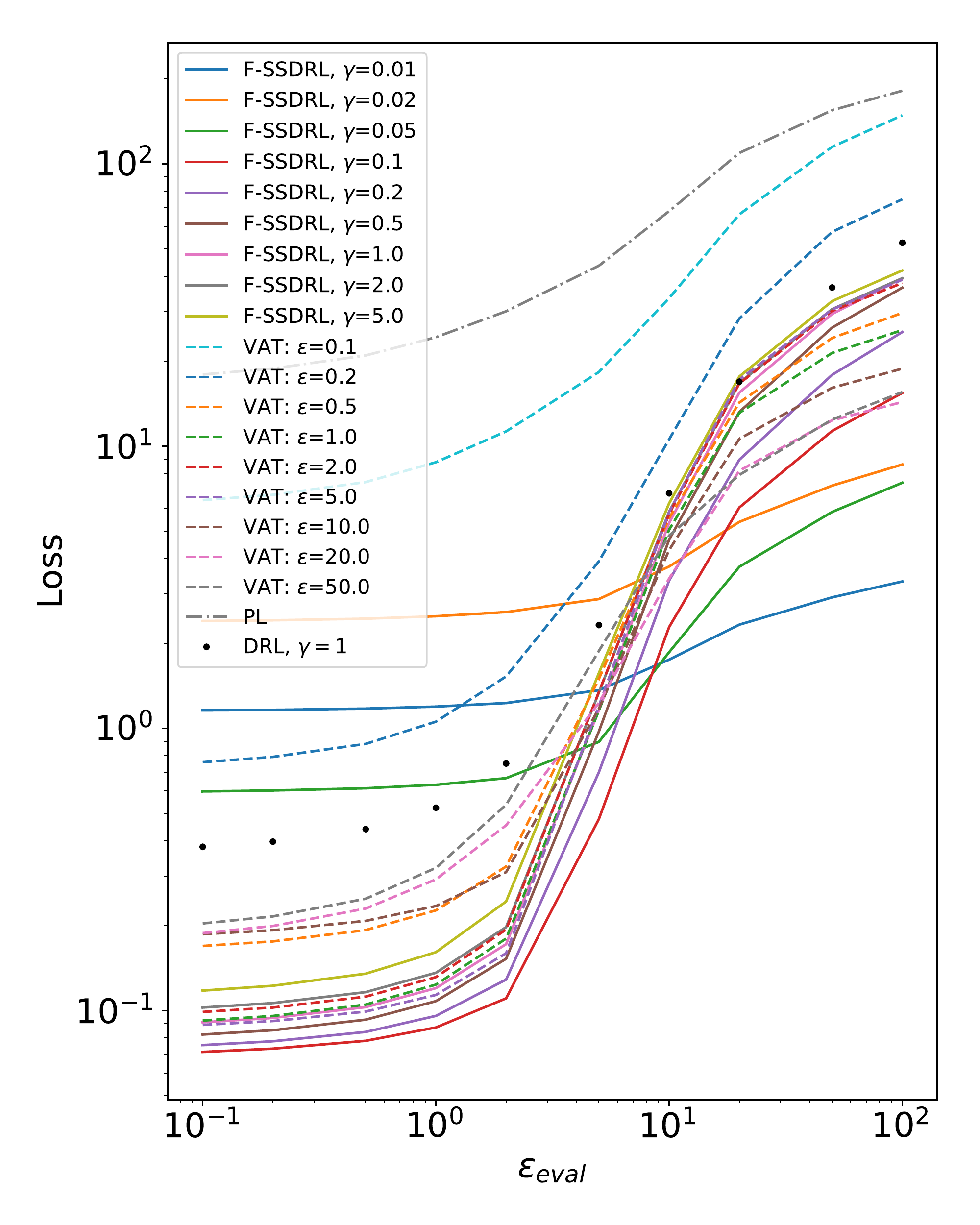}
        \caption{\label{fig:mnist_pgd-l2-adv_val_loss} MNIST}
    \end{subfigure}
    \begin{subfigure}{0.32\textwidth}
		\includegraphics[width=1.0\textwidth]{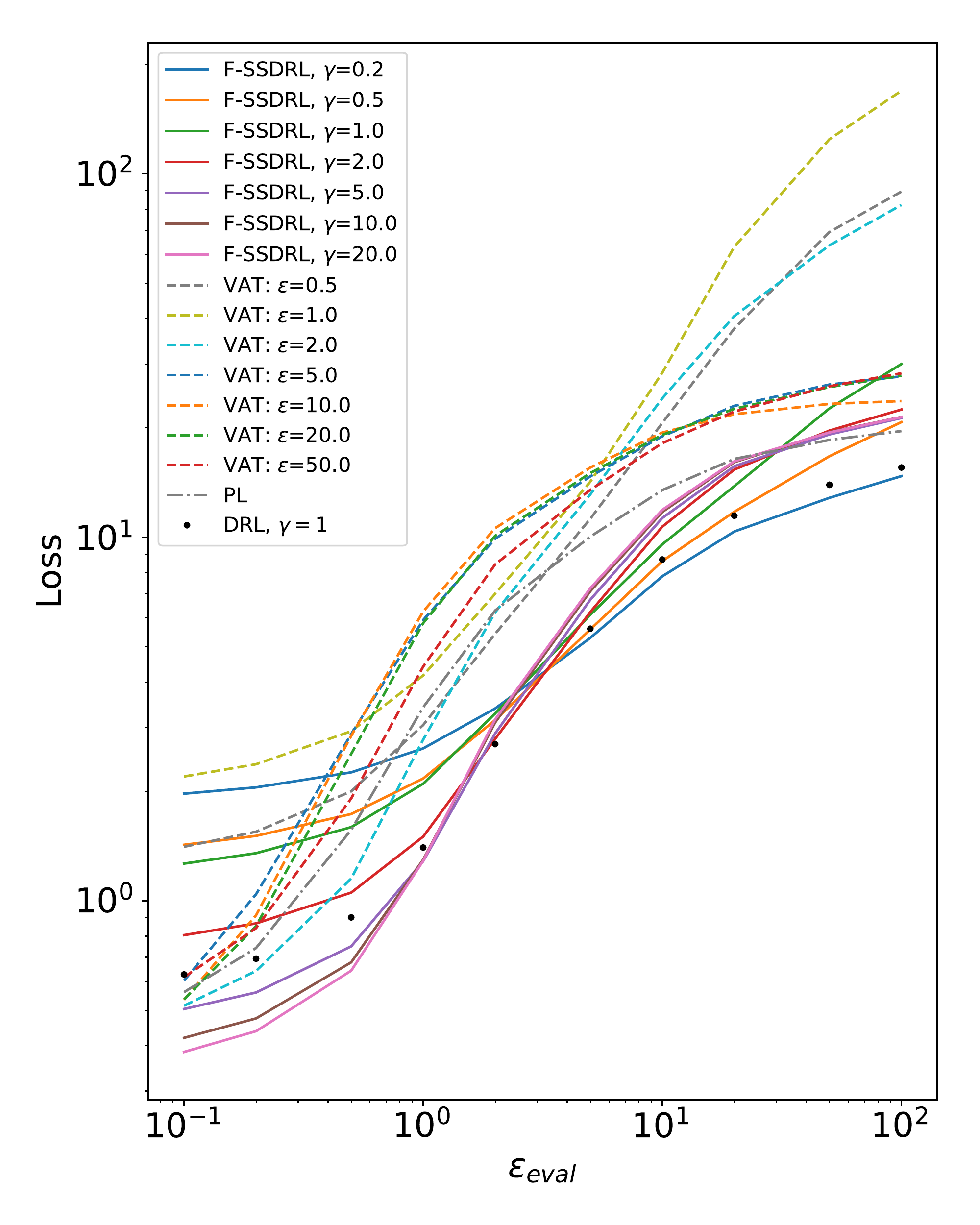}
        \caption{\label{fig:svhn_pgd-l2-adv_val_loss} SVHN}
    \end{subfigure}
    \begin{subfigure}{0.32\textwidth}
		\includegraphics[width=1.0\textwidth]{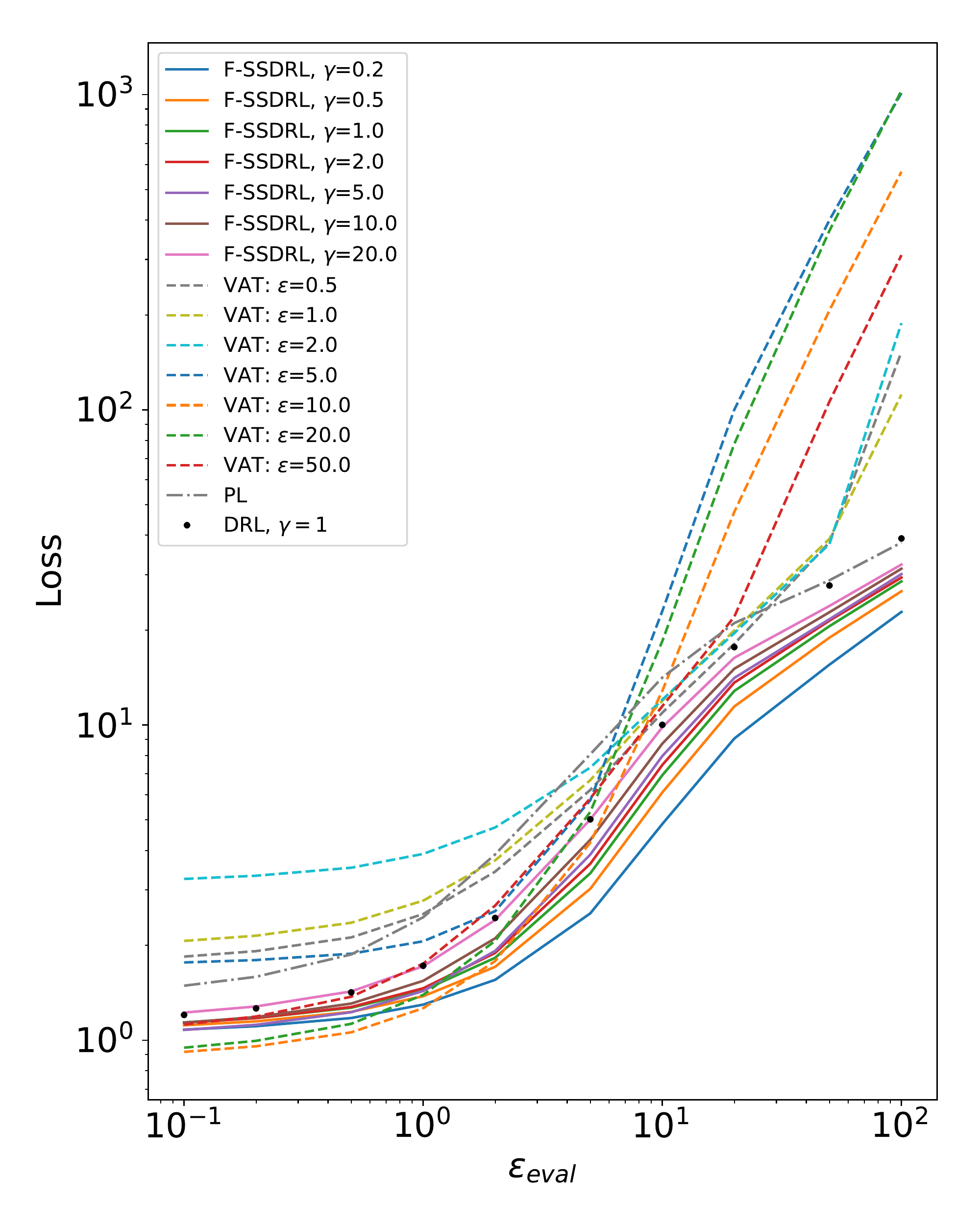}
        \caption{\label{fig:cifar10_pgd-l2-adv_val_loss} CIFAR-10}
    \end{subfigure}
    \caption{\label{fig:pgd-l2-adv_val_loss} Comparison of the loss on adversarial examples calculated by projected-gradient method (PGM)~\cite{madry2017towards} under $\ell_2$ norm constraint.}
\end{figure*}

So far, the performance of SSDRL has been demonstrated w.r.t. its misclassification rate. We have also provided extensive experimental results on the value of adversarial loss $\phi_{\gamma}$, which are crucial for the computation of our generalization bound in Section \ref{sec:proposed:general}. Figure \ref{fig:drl-phi} shows the average adversarial loss, i.e. $\frac{1}{n_{\mathrm{test}}}\sum_{i\in\mathrm{test}}\phi_{\gamma}\left(\boldsymbol{z}_i\right)$, for different methods and on different datasets. $\lambda$ is set to $-1$ for SSDRL. Again, it should be noted that the adversarial examples used in Figures \ref{fig:drl-adv_val_acc} and \ref{fig:drl-phi} are generated via the procedure described in \cite{sinha2018certifying}. Figure \ref{fig:pgd-l2-adv_val_loss} is the counterpart of Figure \ref{fig:drl-phi}, where the attack strategy is replaced with Projected-Gradient Method (PGM). As a result, adversarial loss values have been depicted as a function of PGM's strength of attack, i.e. $\varepsilon$. As can be seen, SSDRL (or its fast version F-SSDRL) are always among the few methods that generate the smallest adversarial loss values, regardless of the strength of attacks. This means that the proposed method can establish a reliable certificate of robustness for test samples via Theorem \ref{thm:generalBound1}. Note that VAT, another method that performs well in practice in terms of error-rate, does not have any theoretical guarantees.

\section{Conclusions}
\label{sec:future}
This paper aims to investigate the applications of distributionally robust learning in partially labeled datasets. The core idea is to focus on a well-known semi-supervised technique, known as self-learning, and make it robust to adversarial attacks. A novel framework, called SSDRL, has been proposed which builds upon an existing general scheme in supervised DRL. SSDRL encompasses many existing methods such as Pseud-Labeling (PL) and EM algorithm as its special cases. Computational complexity of our method is shown to be only slightly higher than those of its supervised counterparts. We have also derived convergence and generalization guarantees for SSDRL, where for the latter, a number of novel complexity measures have been introduced. We have proposed an adversarial extension of the Rademacher complexity in classical learning theory, and showed that it can be bounded for a broad range of learning frameworks, including neural networks, that have a finite VC-dimension. Moreover, our theoretical analysis reveals a more fundamental way to quantify the role unlabeled data in the generalization through a new complexity measure called Minimum Supervision Ratio (MSR). This is in contrast to many existing works that need more restrictive conditions such as {\it {cluster~assumption}} to be applicable. Extensive computer simulation on real-world benchmark datasets demonstrate a comparable-to-superior performance for our method compared with those of the state-of-the-art. In future, one may attempt to improve the generalization bounds, for example, by finding empirical estimations for MSR function. Fitting a broader range of SSL methods into the core idea of Section \ref{sec:proposed:main} could be another good research direction.

\bibliographystyle{IEEEtran}
\bibliography{IEEEabrv,ref}


\appendix
\numberwithin{equation}{section}
\numberwithin{thm}{section}
\numberwithin{thm2}{section}
\numberwithin{lemma}{section}
\numberwithin{definition}{section}
\numberwithin{corl}{section}
\numberwithin{figure}{section}

\section{Additional Simulations and Experimental Settings}
\label{sec:appendix:exp}
This section presents a number of additional experiments w.r.t. the proposed method and shows more comparison with rival methodologies. We also give an extensive description of the experimental setting that we have used for our computer simulations.

\subsection{Additional Simulations}

Figure \ref{fig:drl-adv_val_acc} is a complete version of Figure \ref{fig:mainAcc1} from Section \ref{sec:exp}, where the performances of SSDRL, fully-supervised DRL, PL and VAT are extensively investigated on three benchmark datasets, i.e. MNIST, SVHN and CIFAR-10. SSDRL and VAT have been tested with a variety of their corresponding hyper-parameters $\gamma$ and $\epsilon$. Figure \ref{fig:pgd-l2-adv_val_acc} is the counterpart of Figure \ref{fig:drl-adv_val_acc}, where the {\it {attack}} strategy is replaced with Projected-Gradient Method (PGM). Again, error-rates have been depicted as a function of PGM's {\it {attack~strength}}, i.e. $\varepsilon$. Even though more variation in hyper-parameters has been considered, we have not observed any significant sensitivity that is caused by a slight change of parameter values. As a result, one can say that DRL, SSDRL and VAT are all stable algorithms w.r.t. to their parameter values, at least up to some certain levels. 

\begin{figure*}[t]
	\centering
    \begin{subfigure}{0.32\textwidth}
		\includegraphics[width=1.0\textwidth]{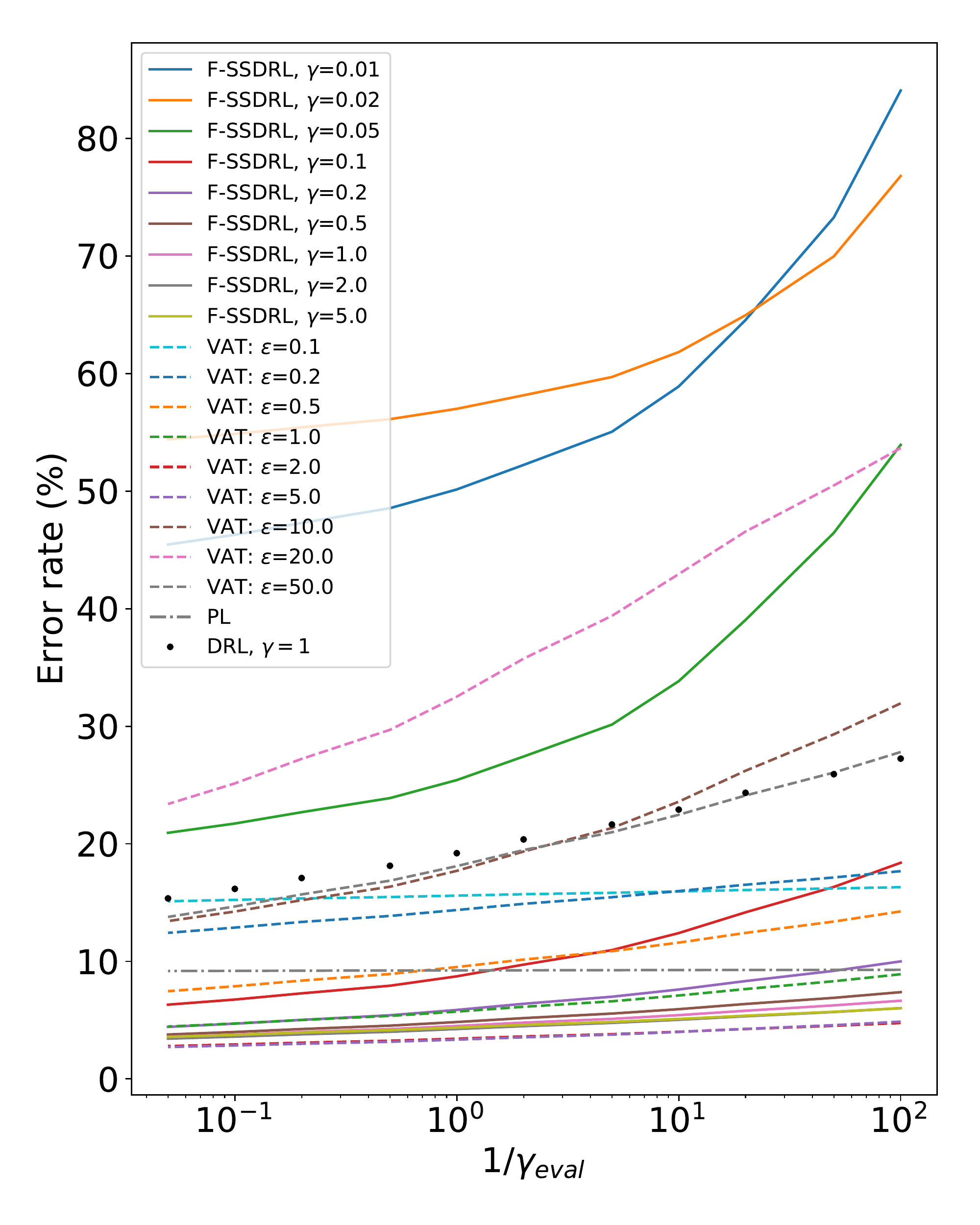}
        \caption{\label{fig:mnist_drl-adv_val_acc} MNIST}
    \end{subfigure}
    \begin{subfigure}{0.32\textwidth}
		\includegraphics[width=1.0\textwidth]{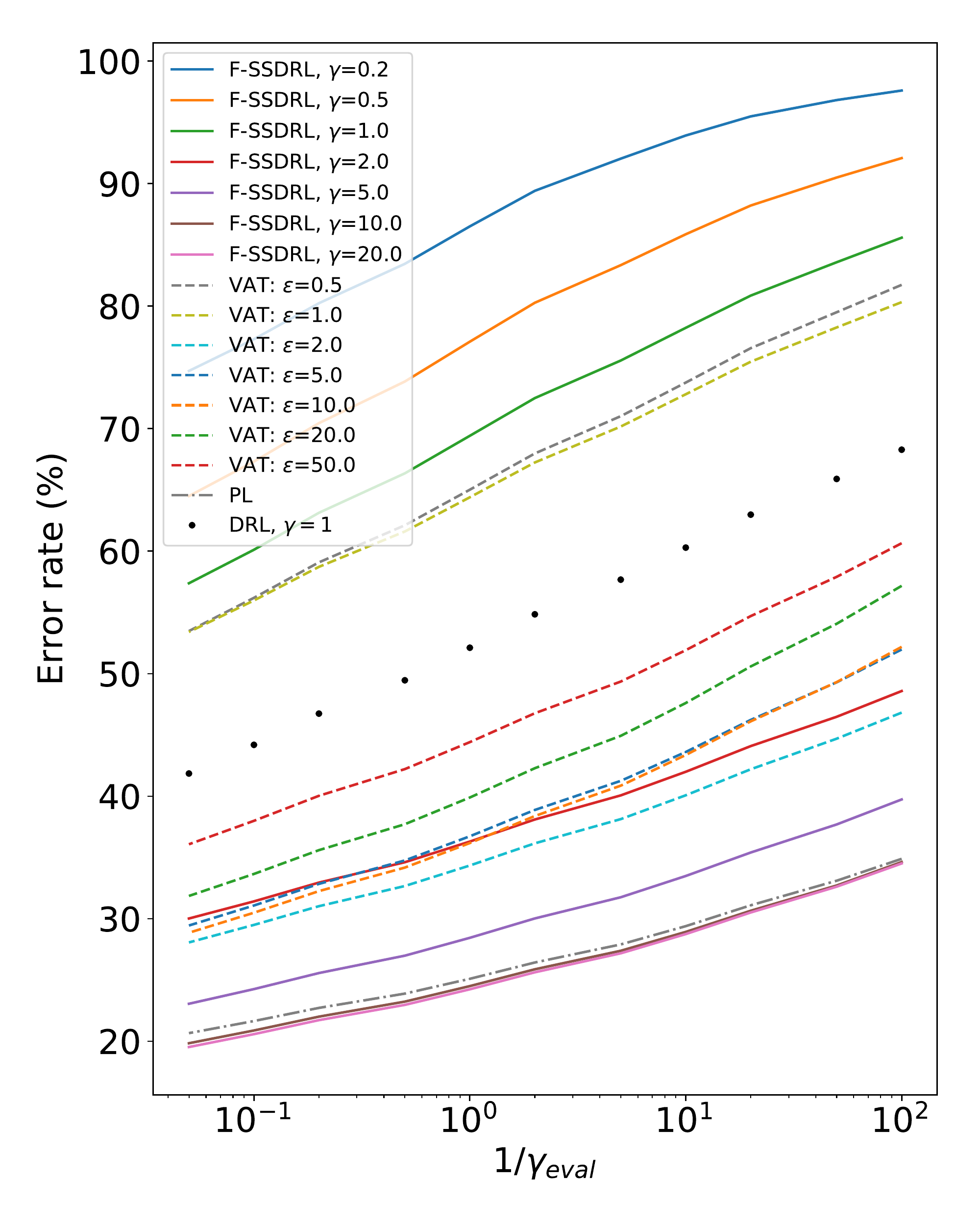}
        \caption{\label{fig:svhn_drl-adv_val_acc} SVHN}
    \end{subfigure}
    \begin{subfigure}{0.32\textwidth}
		\includegraphics[width=1.0\textwidth]{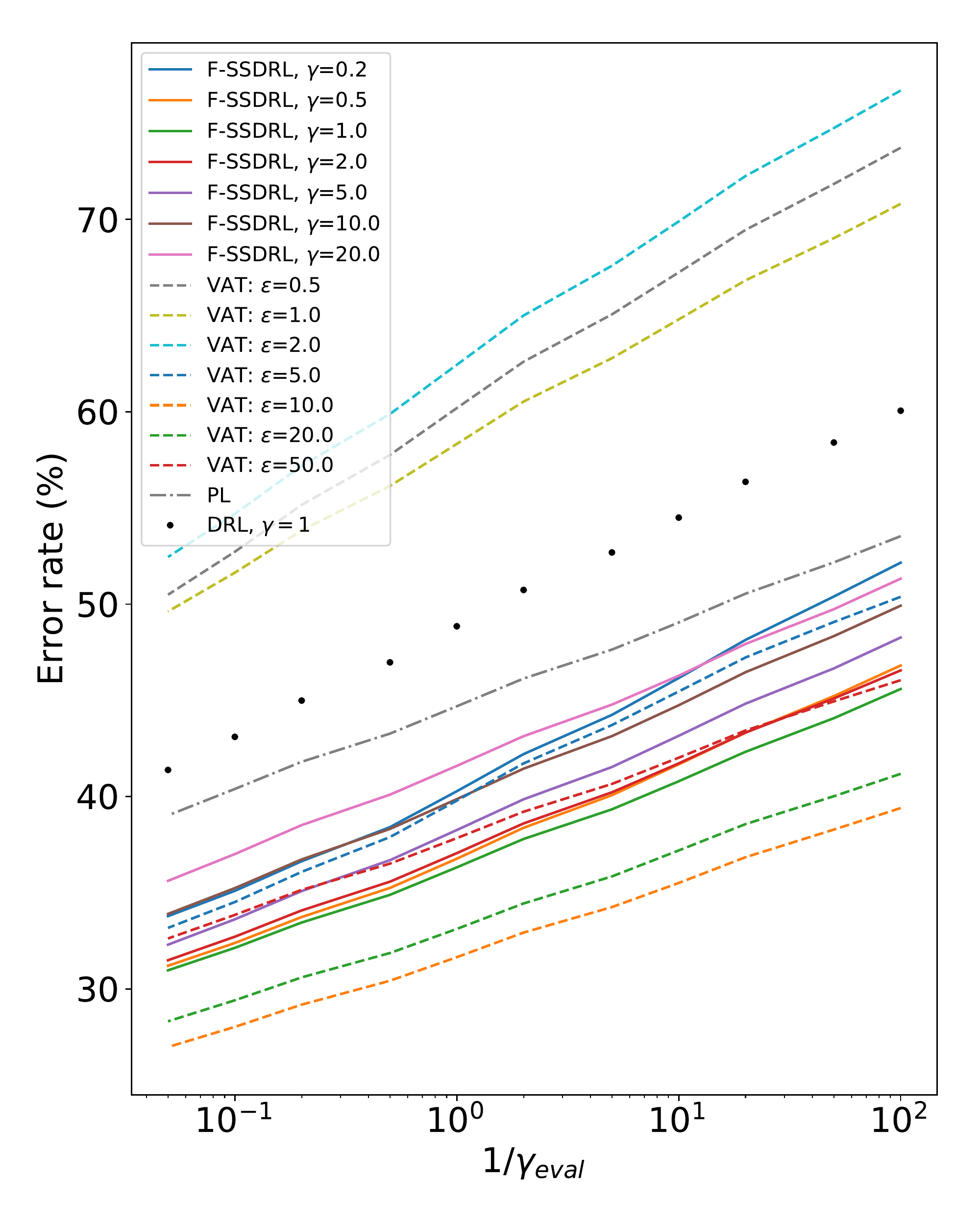}
        \caption{\label{fig:cifar10_drl-adv_val_acc} CIFAR10}
    \end{subfigure}
    \caption{\label{fig:drl-adv_val_acc} Comparison of test error rates of SSDRL, DRL, PL and VAT on the adversarial examples generated via \cite{sinha2018certifying} on different datasets. $\lambda$ is set to $-1$.}
\end{figure*}
\begin{figure*}[t]
	\centering
    \begin{subfigure}{0.32\textwidth}
		\includegraphics[width=1.0\textwidth]{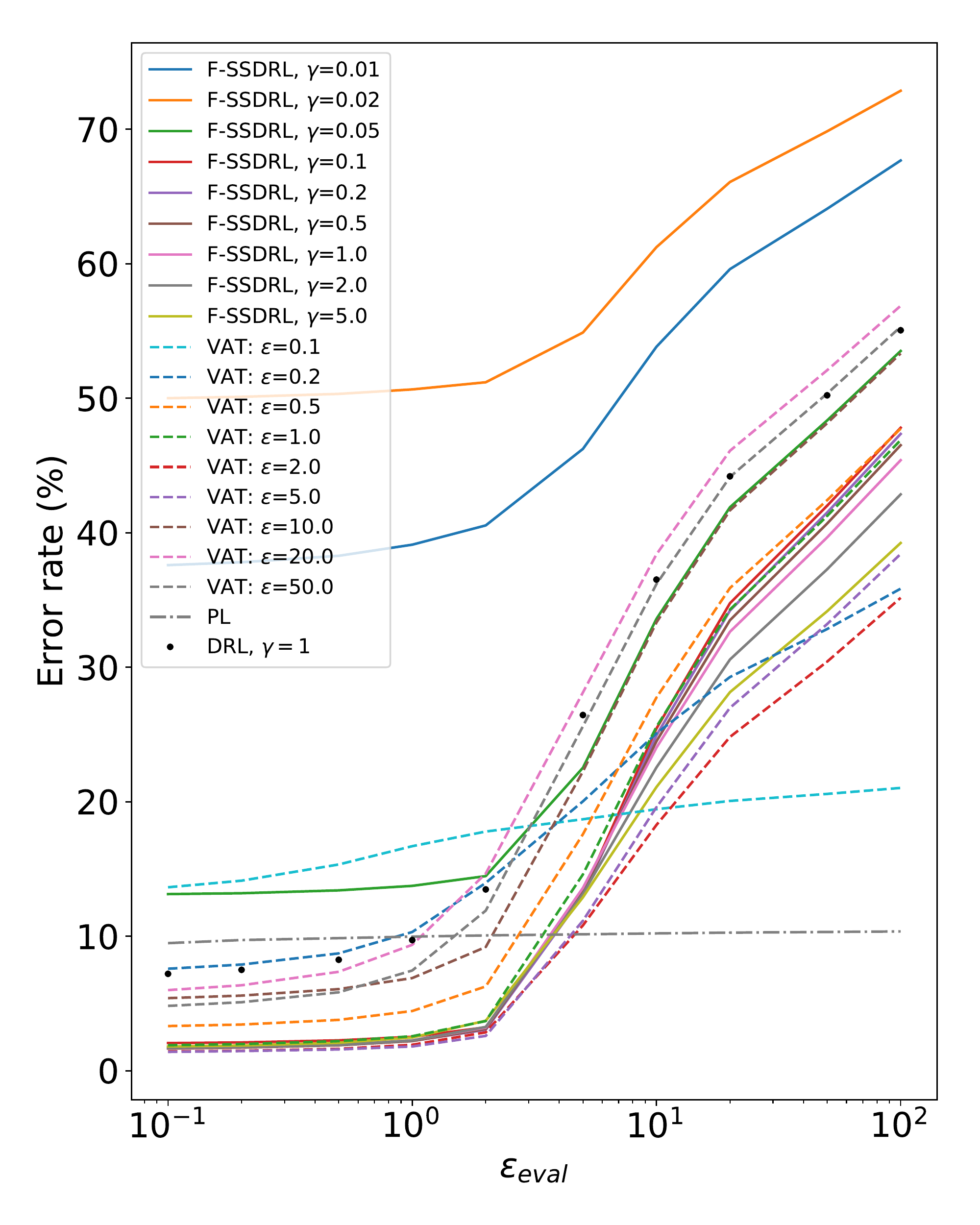}
        \caption{\label{fig:mnist_pgd-l2-adv_val_acc} MNIST}
    \end{subfigure}
    \begin{subfigure}{0.32\textwidth}
		\includegraphics[width=1.0\textwidth]{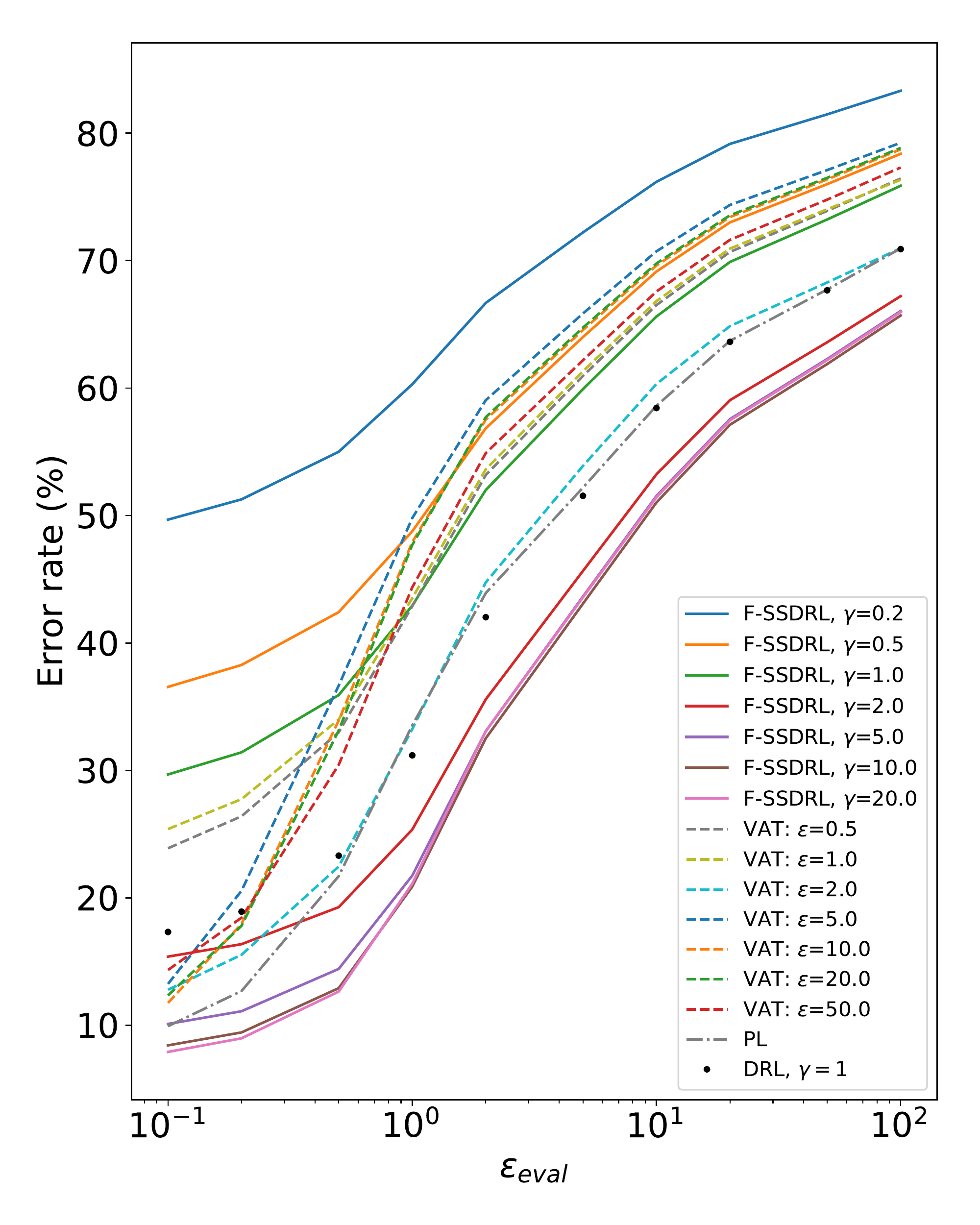}
        \caption{\label{fig:svhn_pgd-l2-adv_val_acc} SVHN}
    \end{subfigure}
    \begin{subfigure}{0.32\textwidth}
		\includegraphics[width=1.0\textwidth]{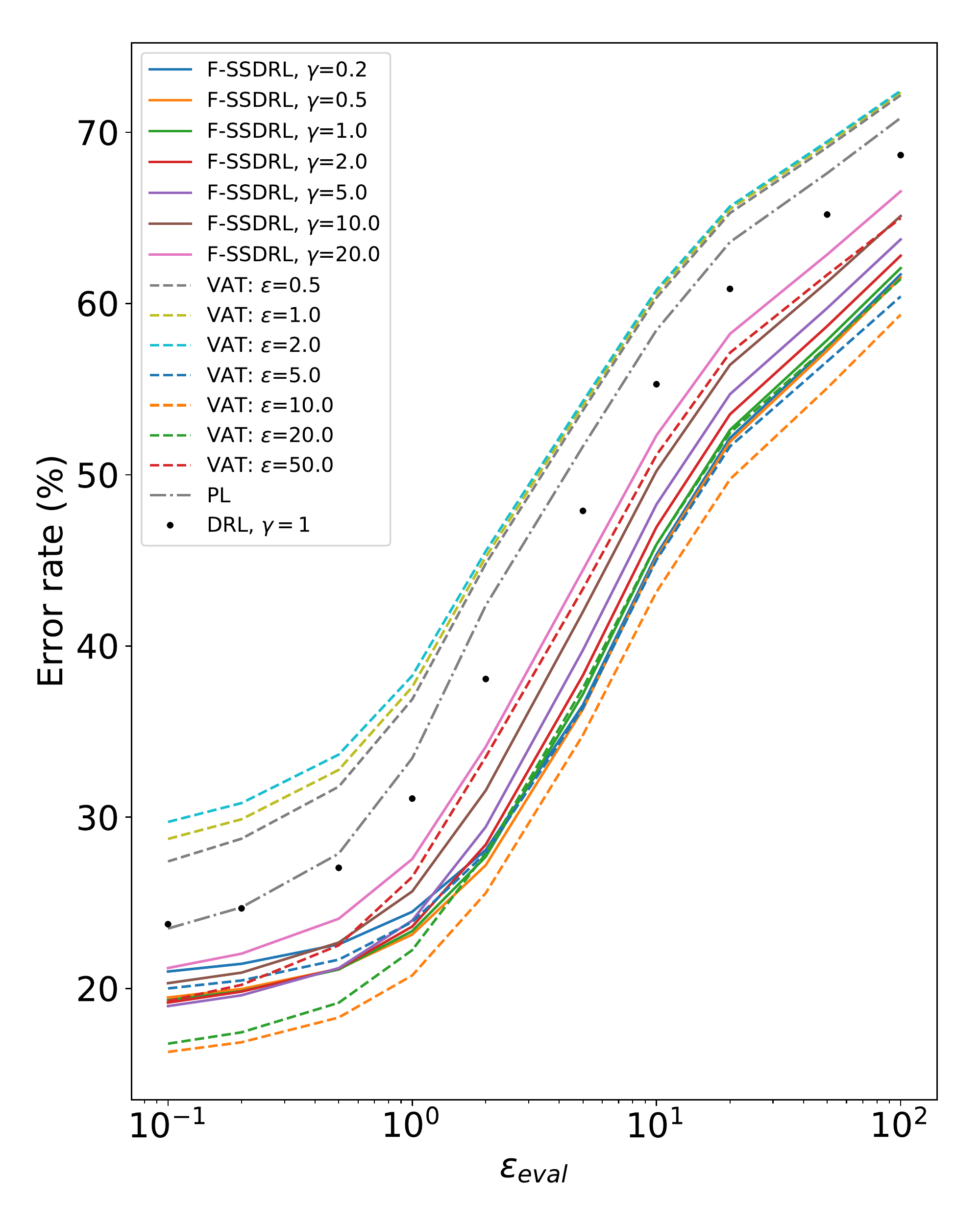}
        \caption{\label{fig:cifar10_pgd-l2-adv_val_acc} CIFAR10}
    \end{subfigure}
    \caption{\label{fig:pgd-l2-adv_val_acc} Comparison of the test error rates on adversarial examples computed by Projected-Gradient Method (PGM)~\cite{madry2017towards} under $\ell_2$ norm constraint.}
\end{figure*}

Figures \ref{fig:clean_drls} and \ref{fig:clean_vat} represent the performance (again in terms of error-rate) over clean examples from different datasets, and for SSDRL and VAT, respectively. In Figure \ref{fig:clean_drls}, different values of $\gamma$ have been used for training and the test error-rate is depicted as a function of $\gamma^{-1}$. Also, $\lambda$ is set to $-1$ for SSDRL. Apparently, SSDRL (or F-SSDRL), for a particular range of parameters, overfits during the training stage on MNIST and as a result its performance is degraded when compared to that of DRL. However, SSDRL outperforms DRL (its fully-supervised counterpart) on SVHN and CIFAR-10 datasets. Also, SSDRL and VAT have comparable performances on clean examples, specifically on SVHN and CIFAR-10 datasets. This observation is in agreement with Table \ref{tab:semisup_wodataaug}.

\begin{figure*}[t]
	\centering
    \begin{subfigure}{0.32\textwidth}
		\includegraphics[width=1.0\textwidth]{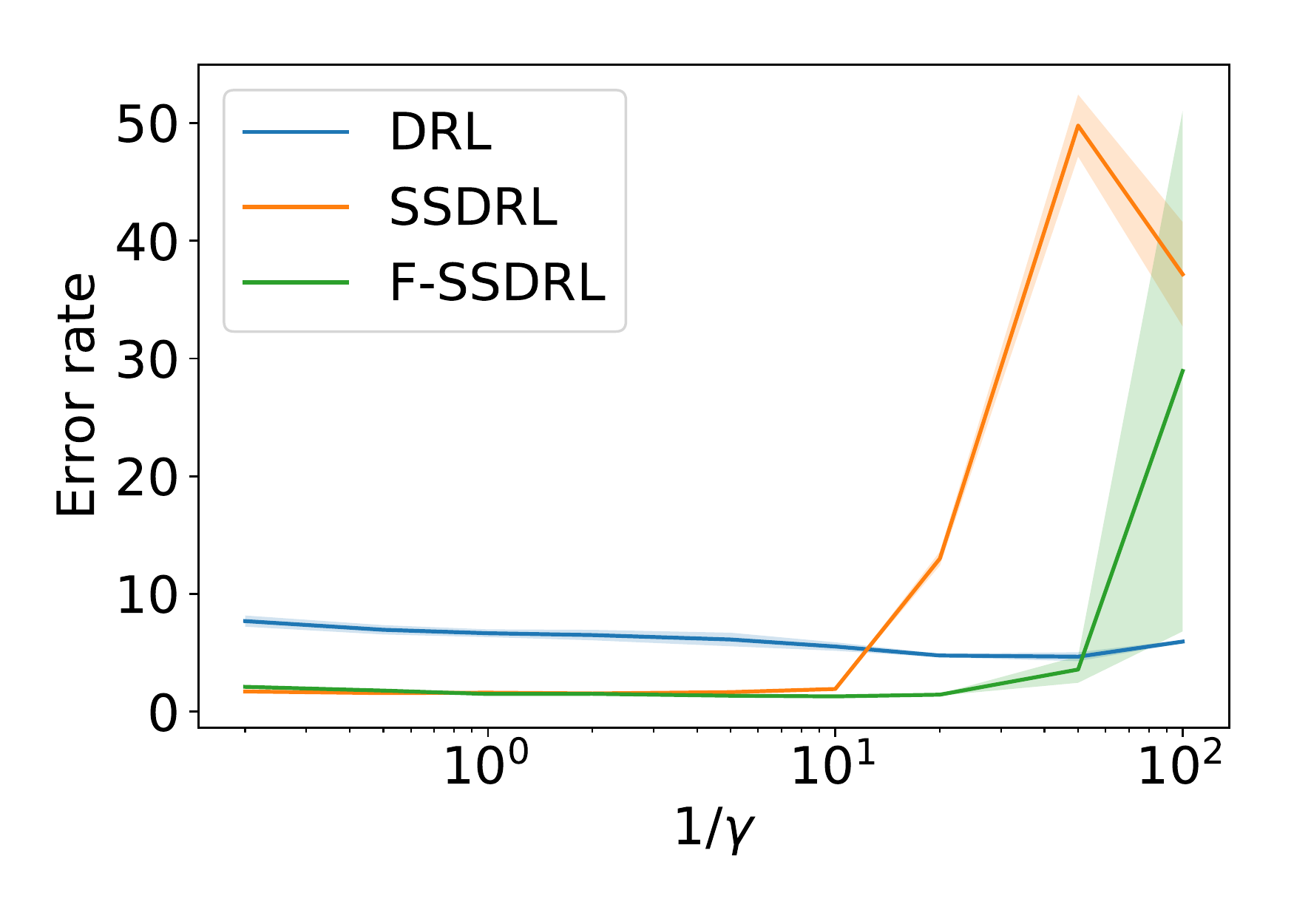}
        \caption{\label{fig:clean_drls_mnist} MNIST}
    \end{subfigure}
    \begin{subfigure}{0.32\textwidth}
		\includegraphics[width=1.0\textwidth]{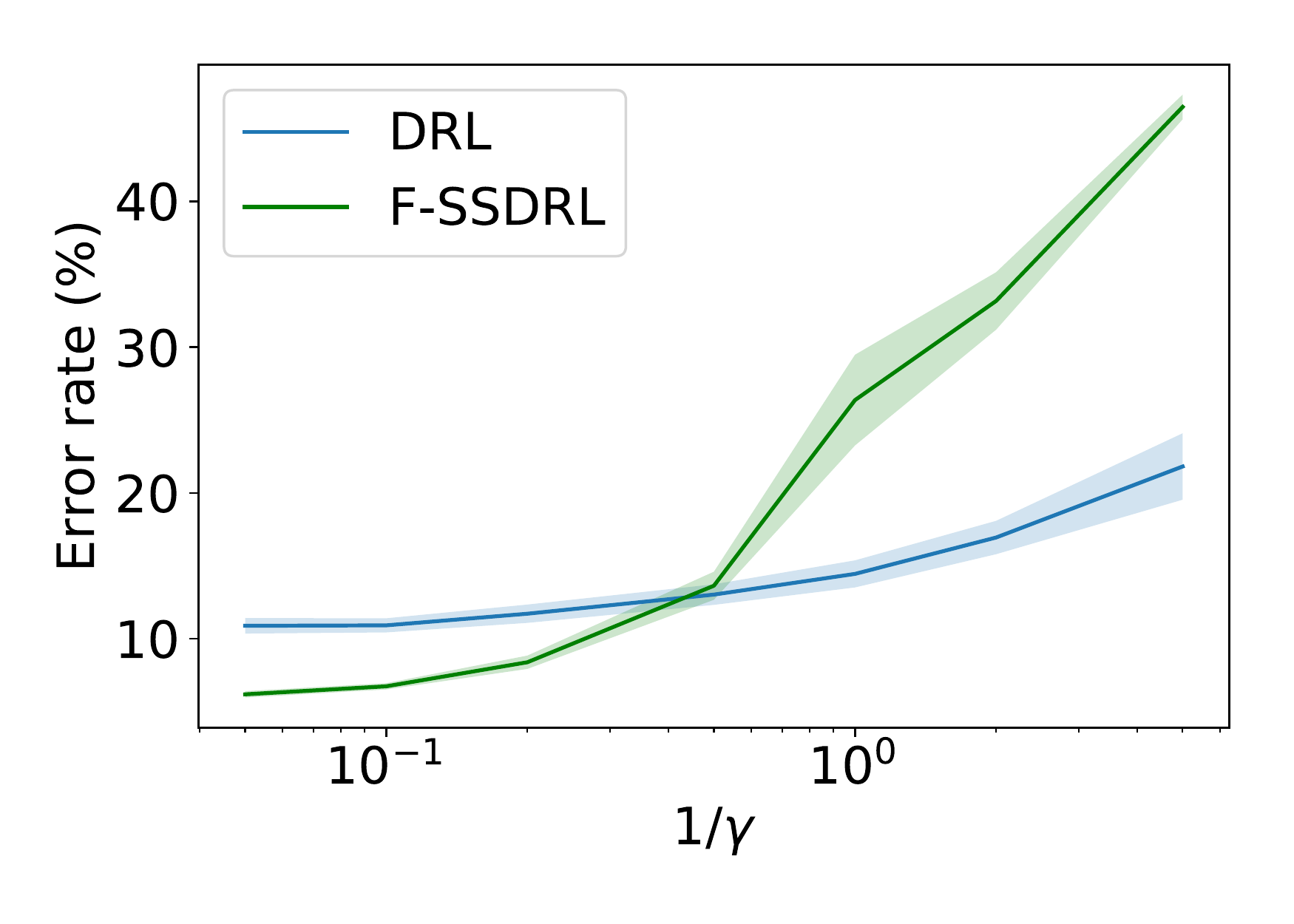}
        \caption{\label{fig:clean_drls_svhn} SVHN}
    \end{subfigure}
    \begin{subfigure}{0.32\textwidth}
		\includegraphics[width=1.0\textwidth]{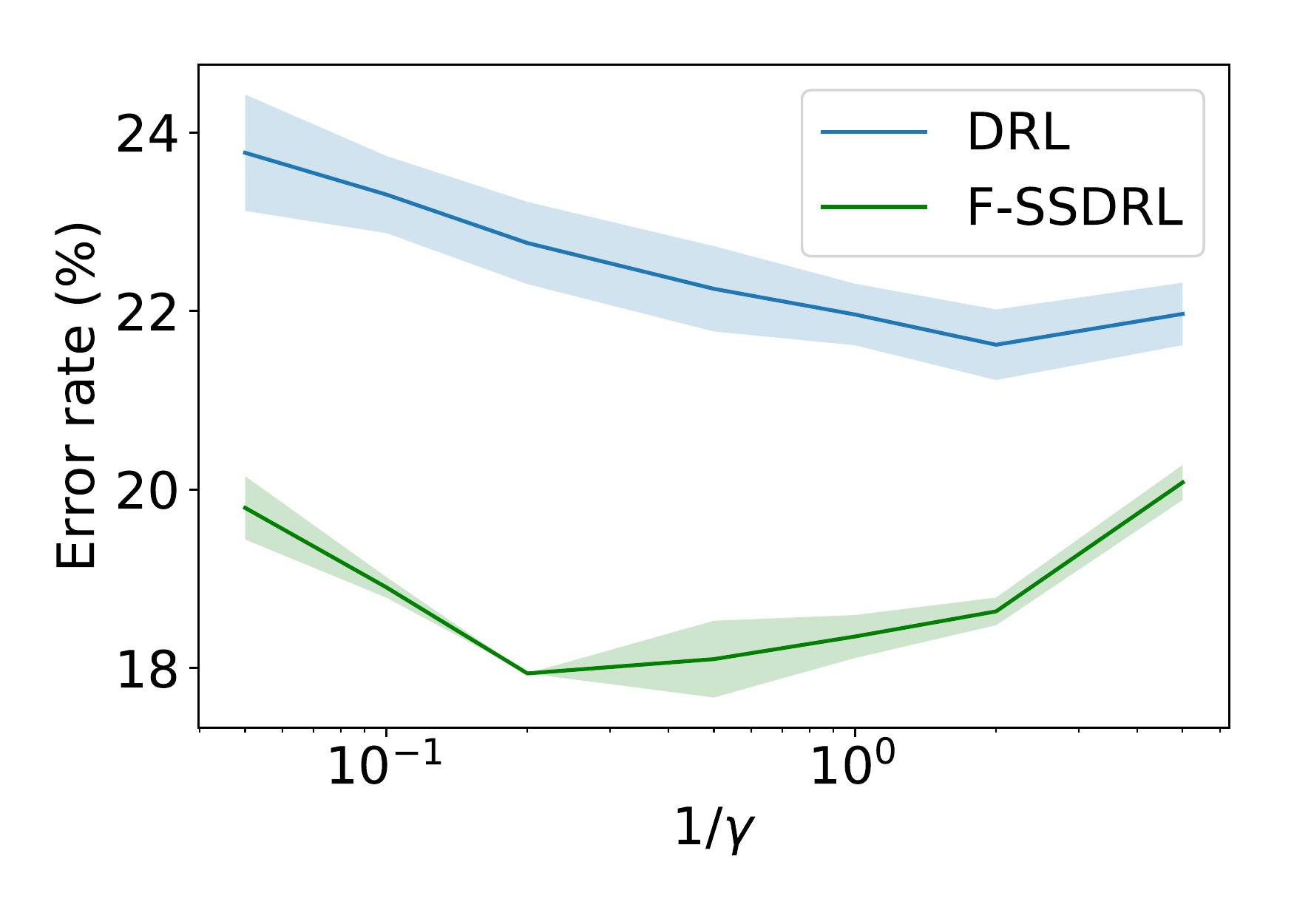}
        \caption{\label{fig:clean_drls_cifar10} CIFAR10}
    \end{subfigure}
    \caption{\label{fig:clean_drls} Test error rates of distributionally robust learning methods on clean examples. The solid lines and shaded regions around them represent the mean and standard deviation of results over multiple random seeds, respectively.}
\end{figure*}

\begin{figure*}[t]
	\centering
    \begin{subfigure}{0.32\textwidth}
		\includegraphics[width=1.0\textwidth]{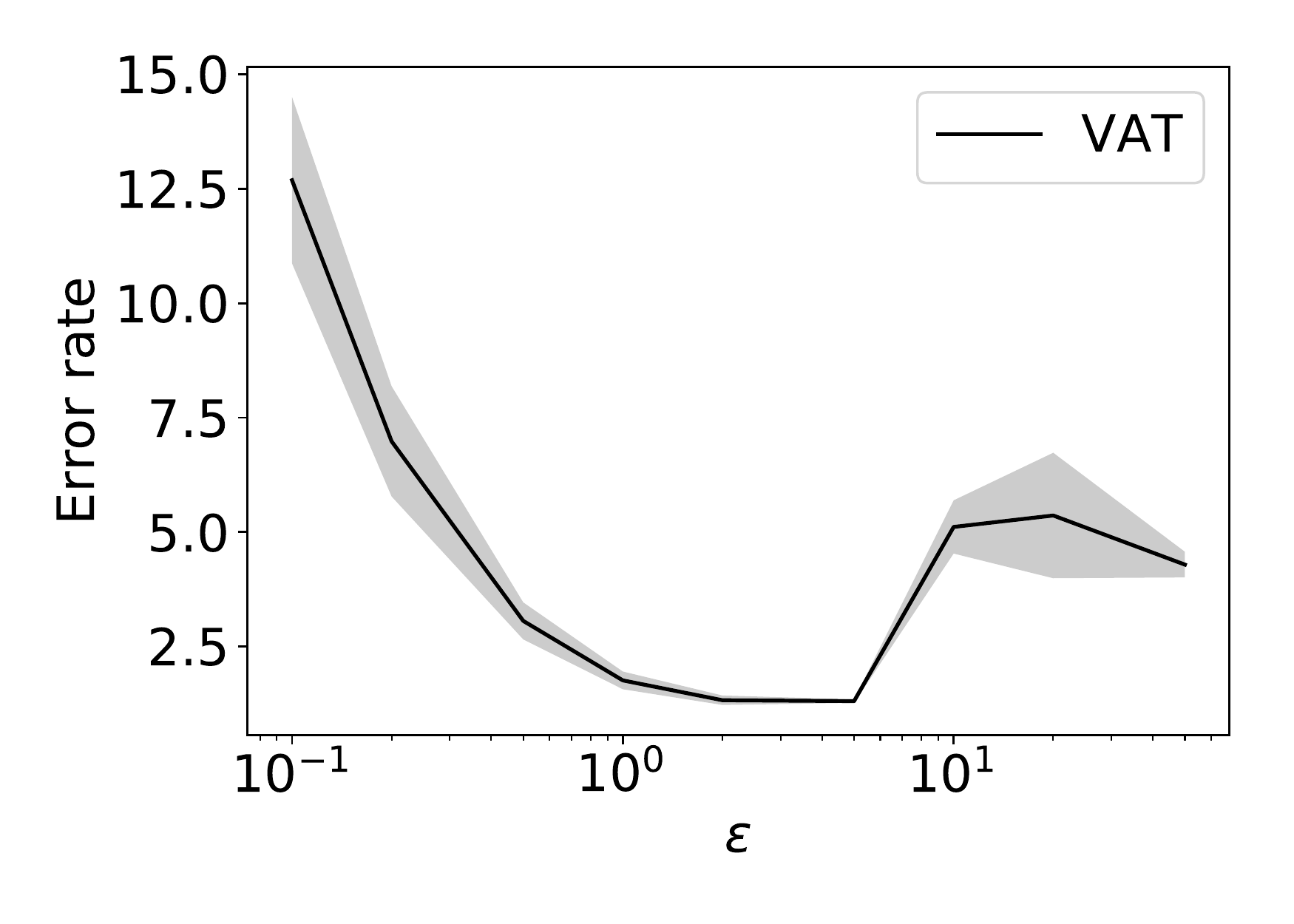}
        \caption{\label{fig:clean_vat_mnist} MNIST}
    \end{subfigure}
    \begin{subfigure}{0.32\textwidth}
		\includegraphics[width=1.0\textwidth]{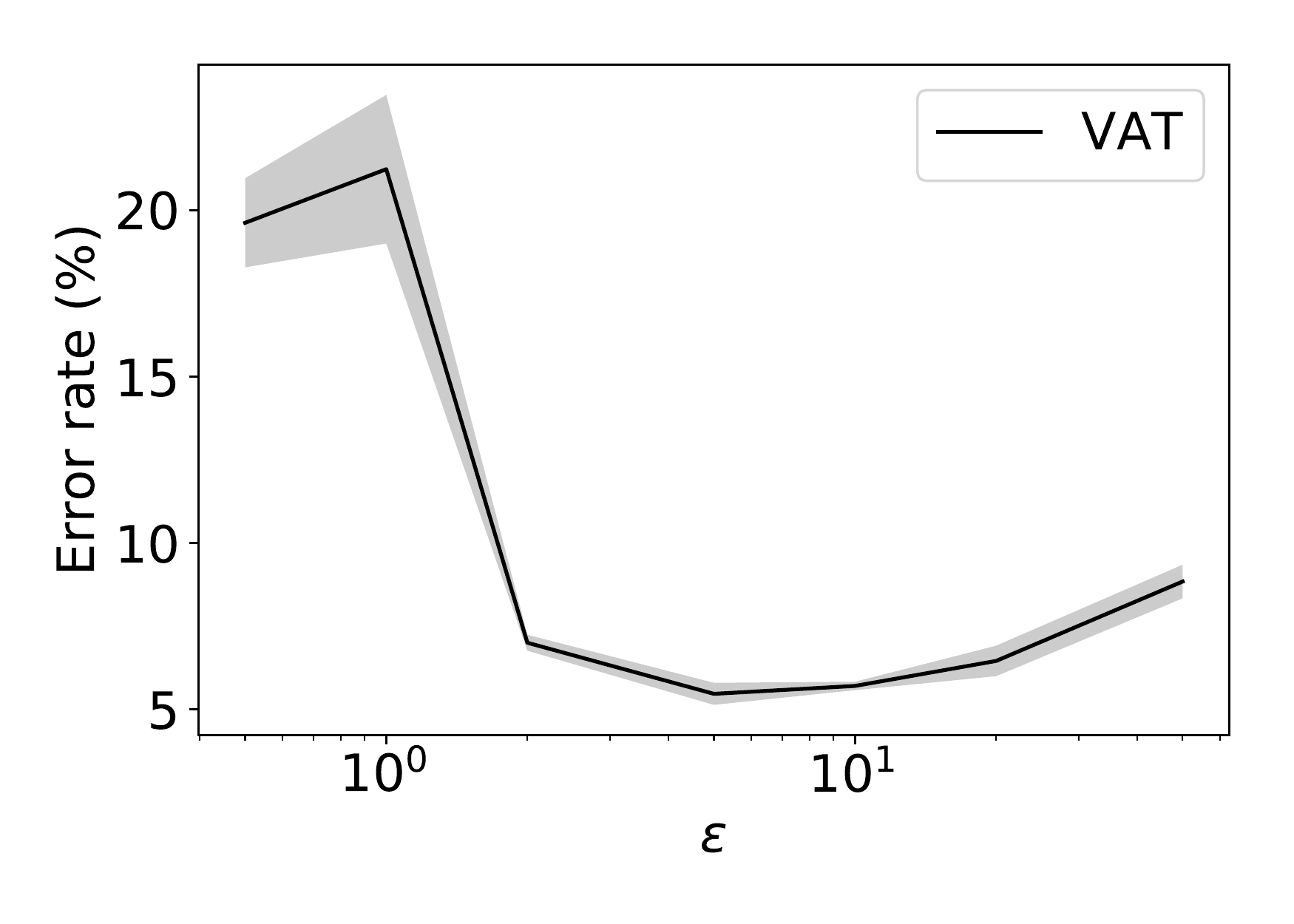}
        \caption{\label{fig:clean_vat_svhn} SVHN}
    \end{subfigure}
    \begin{subfigure}{0.32\textwidth}
		\includegraphics[width=1.0\textwidth]{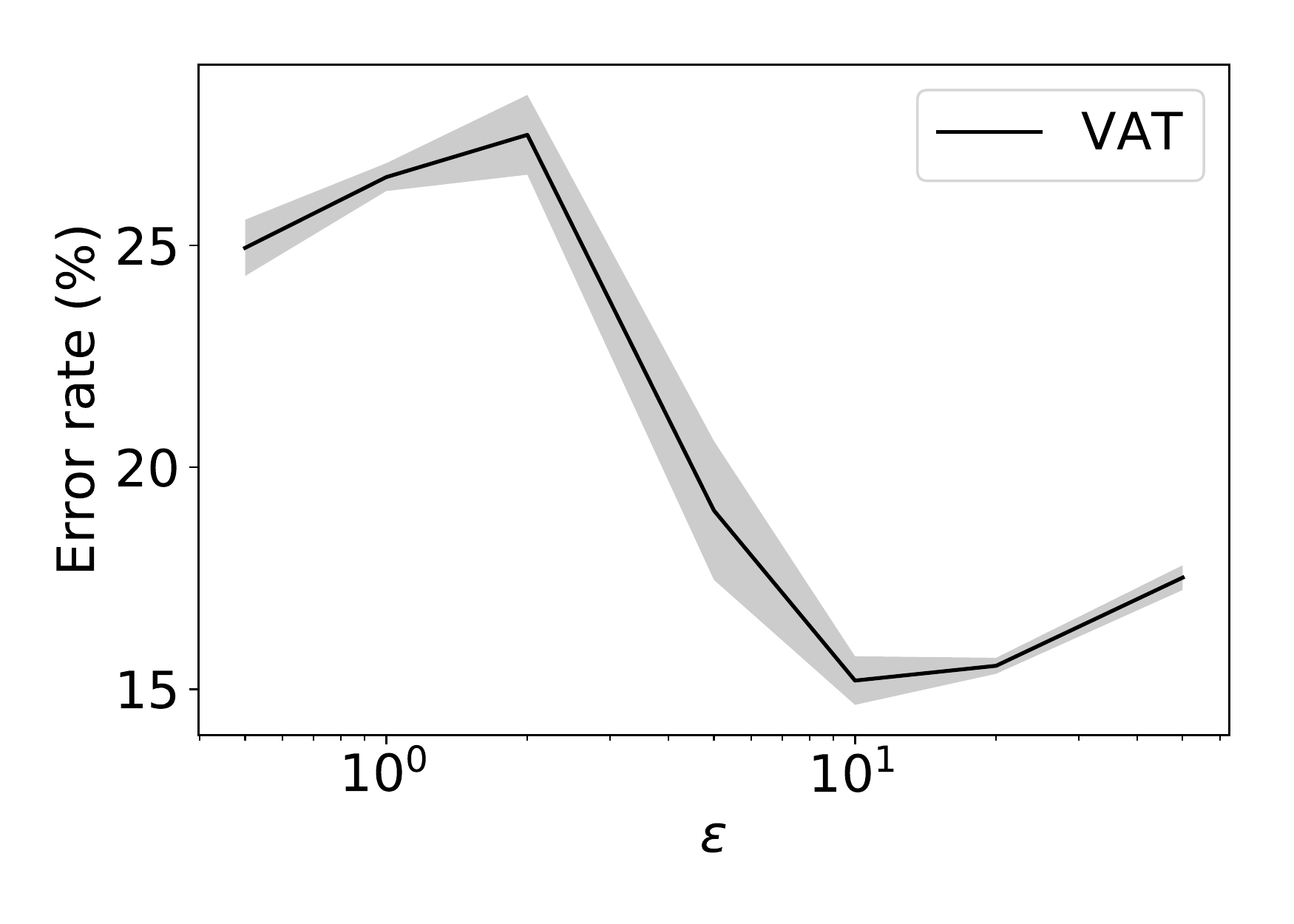}
        \caption{\label{fig:clean_vat_cifar10} CIFAR10}
    \end{subfigure}
    \caption{\label{fig:clean_vat} Test error rates of VAT on clean examples with different $\epsilon$. The solid lines and shaded regions around them represent the mean and standard deviation of results over multiple random seeds, respectively.}
\end{figure*}


\subsection{Experimental Settings}

In this part, we present a detailed description of the experimental settings which have been used for Section \ref{sec:exp}. It should be noted that the majority of the settings used for SVHN and CIFAR-10 datasets follow the same procedure as described in \cite{miyato2018virtual}.

\subsubsection{Real-world Datasets}

Three main datasets have been used during the experiments: MNIST, SVHN  and CIFAR-10. 
\begin{itemize}
\item
The MNIST dataset consists of $28 \times 28$ pixel, gray-scale images of handwritten digits together with their corresponding labels. Each label is a natural number from $0$ to $9$.
The number of training examples and test examples in the dataset are $60,000$ and $10,000$, respectively.
\item
The SVHN dataset consists of $32\times 32 \times 3$ pixel RGB images of street view house numbers with their corresponding labels. Again, labels are natural numbers ranging from $0$ to $9$. The number of training and test samples in the dataset are $73,257$ and $26,032$, respectively. 
\item
CIFAR-10 dataset consists of $32\times 32 \times 3$ pixel RGB images of categorized objects, i.e., cars, trucks, planes, animals, and humans. The number of training examples and test examples in the dataset are $50,000$ and $10,000$, respectively. For CIFAR-10 dataset, we conducted Zero-phase Component Analysis (ZCA) as a pre-processing stage prior to the experiments.
\end{itemize}

\subsubsection{Supervision Ratio and Training Data-points}

In order to create a dataset (training+testing) for the semi-supervised learning task in the paper, we selected a subset of size $1,000$ as the labeled dataset from MNIST and SVHN, while the size goes up to $4,000$ for CIFAR-10. The rest of the samples in the training partition are treated as unlabeled data. We repeated the experiment three times with different choices of labeled and unlabeled data-points on all of the three datasets. For MNIST, a mini-batch of size $64$ is used for both the labeled and unlabeled term, and for SVHN and CIFAR-10, a mini-batch of size $32$ is used for the calculation of the labeled term, while a mini-batch of size $128$ is employed for the unlabeled term during the implementation of each method. We trained each model with $50,000$ updates for MNIST and $48,000$ updates for SVHN and CIFAR10. We have used ADAM optimizer in the training stage. In this regard, the initial learning rate of ADAM is set to $0.001$ and then linearly decayed over the last $10,000$ updates for MNIST, and the last $16,000$ updates for SVHN and CIFAR-10.

As for the transportation cost function $c$, we follow the work presented in \cite{sinha2018certifying} and thus employed the following cost function throughout all our experiments: 
\begin{equation}
\label{eq:actual_cost}
c(\boldsymbol{z}, \boldsymbol{z}') = \Vert\boldsymbol{z}-\boldsymbol{z}'\Vert_2^2 + \infty\cdot \boldsymbol{1}\{y\neq y'\},
\end{equation}
where $\boldsymbol{1}\left(\cdot\right)$ is an indicator function which returns $1$ if its input condition holds and zero, otherwise. It should be noted that this choice is solely for the sake of simplicity, and as described before, every valid lower semi-continuous function is a legitimate choice for $c$.

Also, the {\it {pessimism/optimism~trade-off}} parameter $\lambda$ is always set to $-1$, except when stated otherwise. This option yields certain degrees of optimism during the learning stage, which is motivated by the fact that Deep Neural Networks (DNN) have already proven to work well on all the above-mentioned three datasets. Thus, trusting the learner to assign soft pseudo-labels to the unlabeled data is somehow encouraged which in turn indicates a negative value for $\lambda$.

\subsubsection{Creating Adversarial Examples}

To solve the inner maximization problem in \eqref{eq:phiGammaDef} and \eqref{eq:SSLDROinnermax} for each pair of $\left(\boldsymbol{X},y\right)\in\mathcal{X}\times\mathcal{Y}$, we simply apply {\it {Gradient Ascent}} with the following update rule:
\begin{equation}
\label{eq:update_rule_for_z}
\boldsymbol{X}_{t+1} =  
\boldsymbol{X}_{t} + 
r_t 
\nabla_{\boldsymbol{X}_{t}} \left[\ell((\boldsymbol{X}_{t}, y); \theta) - \gamma c\left(\left(\boldsymbol{X}_{t}, y\right), \left(\boldsymbol{X}, y\right)\right)\right],
\end{equation}
where the initial value $\boldsymbol{X}_0$ is set to $\boldsymbol{X}$, and the ascent rate is defined as $r_t\triangleq\frac{\kappa/\gamma}{(t+1)}$, where $\kappa$ is a hyper-parameter. 
We set $\kappa$ to 1.0 for MNIST and CIFAR-10, and $0.5$ for SVHN. 
During the training, we repeat the update in \eqref{eq:update_rule_for_z} $5$ times for both the DRL and SSDRL method. However, we repeat it $15$ times during the evaluation.

While generating the adversarial examples via the Projected-Gradient Method (PGM), we applied the following update rule which is also used in some previous works in this area \cite{sinha2018certifying,madry2017towards}:
\begin{equation}
\label{eq:pgm}
\boldsymbol{X}_{t+1} =  
\mathrm{Proj}_{\boldsymbol{X},\epsilon}\left(\boldsymbol{X}_{t} + \xi \overline{\nabla_{\boldsymbol{X}_{t}}\ell((\boldsymbol{X}_{t}, y); \theta)}\right),
\end{equation}
where $\mathrm{Proj}_{\boldsymbol{X},\epsilon}$ represents the projection operator to an $\epsilon$-ball (w.r.t. $\ell_2$ norm) centered on $\boldsymbol{X}$. Also, $\bar{\boldsymbol{v}}$ for an arbitrary vector $\boldsymbol{v}$ denotes its normalized version, which is mathematically defined as $\boldsymbol{v}/\Vert\boldsymbol{v}\Vert_2$ under the $\ell_2$-norm constraint.
We have defined the length parameter $\xi$ as $\epsilon / \log (T)$, where $T$ denotes the number of iterations of the update~\eqref{eq:pgm}. Accordingly, we set $T=15$. 

\subsubsection{Architecture of Deep Neural Networks}

A class of Convolutional Neural Networks (CNN) has been used for the loss function set $\mathcal{L}=\left\{\ell\left(\cdot;\theta\right)~,\theta\in\Theta\right\}$. Table~\ref{tab:cnn_models} shows the CNN models used in our experiments. We use ELU~\cite{clevert2015fast} for the activation function in MNIST, and leakyReLU (lReLU)~\cite{maas2013rectifier} for SVHN and CIFAR-10. In the CNNs used for SVHN and CIFAR-10, all the convolutional layers as well as the fully connected (or equivalently dense) layers are followed by batch normalization~\cite{ioffe2015batch}, except for the fully connected layer on CIFAR-10. The slopes of all lReLU in the network are set to $0.1$.

\begin{SCtable}[1][t]
	\centering
	\caption{\label{tab:cnn_models} CNN models used in our experiments. The deep structures that have been used for SVHN and CIFAR-10 datasets are different from the one used for MNIST. The specifications that correspond to each structure are inspired from \cite{miyato2018virtual}.}
    \begin{subtable}{.33\textwidth}
        \centering
        \caption{For MNIST}
		\begin{tabular}{c}
		\toprule
		    28$\times$28 gray-scale image  \\
		\midrule	
            4$\times$4 conv. stride 2, 64 ELU 	\\
            4$\times$4 conv. stride 2, 64 ELU 	\\
            4$\times$4 conv. stride 2, 64 ELU	\\
        \midrule
            global average pool \\
        \midrule   
            dense 64 $\rightarrow$ 10 \\
        \midrule
		    10-way softmax  \\
		\bottomrule
    	\end{tabular}
    \end{subtable}
    \begin{subtable}{.33\textwidth}
        \centering
        \caption{For SVHN and CIFAR-10}
		\begin{tabular}{c}
		\toprule
		    32$\times$32 RGB image  \\
		\midrule	
            3$\times$3 conv. 128 lReLU 	\\
            3$\times$3 conv. 128 lReLU 	\\
            3$\times$3 conv. 128 lReLU	\\
		\midrule
		    2$\times2$ max-pool, stride 2  \\
		    dropout, $p=0.5$  \\
        \midrule   
            3$\times$3 conv. 256 lReLU 	\\
            3$\times$3 conv. 256 lReLU 	\\
            3$\times$3 conv. 256 lReLU 	\\
        \midrule  
		    2$\times2$ max-pool, stride 2 \\
		    dropout, $p=0.5$  \\
        \midrule 
            3$\times$3 conv. 512 lReLU 	\\
            1$\times$1 conv. 256 lReLU 	\\
            1$\times$1 conv. 128 lReLU 	\\
        \midrule
           global average pool \\
        \midrule   
            dense 128$\rightarrow$ 10 \\
        \midrule
		    10-way softmax  \\
		\bottomrule
    	\end{tabular}
    \end{subtable}
\end{SCtable}

\section{Minimum Supervision Ratio: Definition and Implications}
\label{sec:proposed:asymptotic}

In this section, we present some complementary discussions with respect to our generalization bound in Section \ref{sec:proposed:general}. In particular, the mathematical definition and intuitive implications behind one of our proposed complexity measures, i.e. the Minimum Supervision Ratio, are explained in details.

In order to better understand the intuition behind the proposed optimization programs in \eqref{eq:main2} or \eqref{eq:SSLmainMin}, it is necessary to investigate them under the asymptotic regime of $n\rightarrow\infty$. In this regard, this section provides a rigorous mathematical framework to study the semi-supervised learning in general (and its distributionally robust extension in particular), under the specific problem setting of this paper. We then provide conditions on the hypothesis set and data-generating distribution, under which unlabeled data can help the overall learning procedure. Final bounds on the performance improvement through incorporation of unlabeled samples (which is mostly from the generalization aspect), are given with mathematical details in Theorem \ref{thm:generalBound1} and its proof. In order to achieve the above-mentioned goal, first let us make the following definition:
\\[-3mm]
\begin{definition}
For a feature space $\mathcal{X}$ and a finite label set $\mathcal{Y}$,
the conditional composition of a distribution $P\in M\left(\mathcal{X}\times\mathcal{Y}\right)$ with a conditional distribution $\Omega\in M^{\mathcal{X}}\left(\mathcal{Y}\right)$ through a supervision ratio of $0\leq\eta\leq1$, denoted by $\mathrm{comp}\left(P,\Omega,\eta\right)\in M\left(\mathcal{X}\times\mathcal{Y}\right)$, is defined as
\begin{equation}
\mathrm{comp}\left(P,\Omega,\eta\right)\left(\boldsymbol{X},y\right)
\triangleq
\eta P\left(\boldsymbol{X},y\right)+
\left(1-\eta\right)\Omega\left(y\vert\boldsymbol{X}\right)\left(\sum_{y'\in\mathcal{Y}}P\left(\boldsymbol{X},y'\right)\right).
\end{equation}
\end{definition}
It can be easily verified that the following properties hold for the conditional composition distribution of any two corresponding distributions:
\begin{equation}
\mathrm{comp}\left(P,\Omega,\eta\right)_{\boldsymbol{X}}=P_{\boldsymbol{X}}
\quad,\quad
\mathrm{comp}\left(P,\Omega,\eta\right)_{\vert\boldsymbol{X}}=
\eta P_{\vert\boldsymbol{X}}+
\left(1-\eta\right) \Omega_{\vert\boldsymbol{X}},
\end{equation}
where the first relation means: the marginal of the composition distribution w.r.t. $\boldsymbol{X}$ (which is a measure supported on $\mathcal{X}$) is the same as that of $P$, while the second property states that:  conditional distribution over $\mathcal{Y}$ (given $\boldsymbol{X}\in\mathcal{X}$) is a weighted mixture of conditional distributions $P_{\vert\boldsymbol{X}}$ and $\Omega_{\vert\boldsymbol{X}}$. 

An interesting asymptotic property of a consistent distribution set (see Definition \ref{def:consistent}) is that, given both fully and partially-observed samples in $\boldsymbol{D}$ are i.i.d. samples generated from a single arbitrary distribution $P_0\in M\left(\mathcal{X}\times\mathcal{Y}\right)$, the following relation holds almost surely w.r.t. $P_0$:
\begin{equation}
\lim_{n\rightarrow\infty}\hat{\mathcal{P}}\left(\boldsymbol{D}\right)
\stackrel{a.s.}{=}
\left\{
\mathrm{comp}\left(P_0,\Omega,\eta=\lim_{n\rightarrow\infty}\frac{n_{\mathrm{l}}}{n}\right)
\bigg\vert
\Omega\in M^{\mathcal{X}}\left(\mathcal{Y}\right)
\right\},
\end{equation}
where the asymptotic equality in the above relation corresponds to a member-wise convergence between the two sets. Consequently, rewriting \eqref{eq:SSLmainMin} in the asymptotic regime of $n\rightarrow\infty$ would give us the following equalities:
\begin{align}
\label{eq:BIASdef}
\lim_{n\rightarrow\infty}\hat{R}_{\mathrm{SSAR}}\left(\theta;\boldsymbol{D}\right)
&\stackrel{a.s.}{=}
\mathbb{E}_{P_0}\left\{\hat{R}_{\mathrm{SSAR}}\left(\theta;\boldsymbol{D}\right)\right\}
\\
&=
\eta \mathbb{E}_{\left(\boldsymbol{X},y\right)\sim P_0}\left\{\phi_{\gamma}\left(\boldsymbol{X},y;\theta\right)\right\}+
\left(1-\eta\right)
\mathbb{E}_{\boldsymbol{X}\sim P_{0_{\boldsymbol{X}}}}\left\{
\softmin_{y\in\mathcal{Y}}^{\left(\lambda\right)}\left\{
\phi_{\gamma}\left(\boldsymbol{X},y;\theta\right)
\right\}
\right\}.
\nonumber
\end{align}
The first term in the r.h.s. of \eqref{eq:BIASdef} is proportional to the true risk which we intend to bound. However, the second term models the asymptotic effect of unlabeled data for a fixed supervision ratio $\eta$. The main question that we try to answer in this section can be intuitively stated as: under what conditions, the second term becomes {\it {approximately proportional}} to the true risk as well?

Before investigating the above question in more theoretical details, a closer look at the semi-supervised adversarial risk $\hat{R}_{\mathrm{SSAR}}$ reveals that
\begin{equation}
\frac{\partial}{\partial\lambda}\hat{R}_{\mathrm{SSAR}}\left(\theta;\boldsymbol{D}\right)\ge0.
\end{equation}
This fact implies that by decreasing $\lambda$, one can also decrease $\hat{R}_{\mathrm{SSAR}}$ (at least in the majority of non-trivial scenarios). This issue has been previously mentioned in Section \ref{sec:proposed}, which indicates that {\it {optimism}} always results in lower empirical risks. But how does this strategy affect the true expected loss, i.e. 
$\mathbb{E}_{P_0}\left\{\phi_{\gamma}\left(\boldsymbol{Z};\theta\right)\right\}$? On the other hand, moving $\lambda$ toward $+\infty$ guarantees that the learner is minimizing a legitimate upper-bound of the true risk, i.e. extreme pessimism, however, this also increases the empirical risk. Again, one could ask is it really necessary to be so pessimistic? 

In order to answer the above questions, we introduce a new compatibility measure function for a function set ${\Phi}\subset\mathbb{R}^{\mathcal{X}\times\mathcal{Y}}$ and distribution $P_0$, denoted by {\it {minimal supervision ratio}} or $\mathrm{MSR}_{\left(\Phi,P_0\right)}:\mathbb{R}\times\mathbb{R}_{\ge0}\rightarrow\left[0,1\right]$. We then show that as long as a particular inequality holds among parameters such as $n$, $\lambda$ and $\eta$ according to $\mathrm{MSR}_{\left(\Phi,P_0\right)}$, one can guarantee minimizing a valid upper-bound for the true risk, while avoiding the extreme pessimism of \cite{loog2016contrastive} (less harm to the empirical risk minimization). In order to do so, first let us introduce a number of useful additional tools:
\\[-3mm]
\begin{definition}
\label{def:properness}
Assume function class $\Phi\subseteq \mathbb{R}^{\left(\mathcal{X}\times\mathcal{Y}\right)}$ and  distribution $P_0\in M\left(\mathcal{X}\times\mathcal{Y}\right)$ for a finite label-set $\mathcal{Y}$. For the ease of notation, let
$\phi_{\boldsymbol{X}}\triangleq\phi\left(\boldsymbol{X},\cdot\right)\in\mathbb{R}^{\mathcal{Y}}$ for $\forall\boldsymbol{X}\in\mathcal{X}$. Then, $\rho_{\lambda}\left(\phi\right)$ for $\phi\in\Phi$ and $\lambda\in\mathbb{R}\cup\left\{\pm\infty\right\}$ is defined as
\begin{equation}
\rho_{\lambda}\left(\phi\right)
\triangleq
\mathbb{E}_{P_{0_{\boldsymbol{X}}}}\left\{\softmin_{y\in\mathcal{Y}}^{\left(\lambda\right)}
\left\{
\phi_{\boldsymbol{X}}
\right\}
\right\}-
\mathbb{E}_{P_0}\left\{\phi\right\}.
\label{eq:properness}
\end{equation}
\end{definition}
As it becomes evident in the proceeding arguments of this section, the introduced functional in Definition \ref{def:properness}, i.e. $\rho_{\lambda}$, plays an important role in determining the relation of expected (or asymptotic) semi-supervised risk with the true (supervised) one. Mathematically speaking, enforcing $\rho_{\lambda}\left(\phi\right)$ for $\phi=\phi_{\gamma}\left(\cdot;\theta\right)$ to remain non-negative guarantees that $\mathbb{E}_{\boldsymbol{D}\sim P_0}\left\{\hat{R}_{\mathrm{SSAR}}\left(\theta;\boldsymbol{D}\right)\right\}\ge \mathbb{E}_{P_0}\left\{\phi_{\gamma}\left(\cdot;\theta\right)\right\}$ for any $\theta\in\Theta$. This allows us to upper-bound the true risk with the value of $\hat{R}_{\mathrm{SSAR}}$ computed for that particular $\theta$. Surprisingly, this condition can always be satisfied by choosing $\lambda = +\infty$ (extreme pessimism). This configuration, in the special non-robust case, coincides with the framework presented in \cite{loog2016contrastive}.
\\[-3mm]
\begin{lemma}
\label{lemma:propernessLB}
For any function set $\Phi\subseteq \mathbb{R}^{\left(\mathcal{X}\times\mathcal{Y}\right)}$ and distribution $P_0\in M\left(\mathcal{X}\times\mathcal{Y}\right)$, we have $\rho_{\infty}\left(\phi\right)\ge0$ for all $\phi\in\Phi$.
\end{lemma}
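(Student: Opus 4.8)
The claim is that $\rho_{\infty}(\phi) \geq 0$ for all $\phi \in \Phi$, i.e. that
\[
\mathbb{E}_{P_{0_{\boldsymbol{X}}}}\left\{\softmin_{y\in\mathcal{Y}}^{(\infty)}\left\{\phi_{\boldsymbol{X}}\right\}\right\} \geq \mathbb{E}_{P_0}\left\{\phi\right\}.
\]
The plan is to first identify $\softmin^{(\infty)}$ with the $\max$ operator (this is exactly the $\lambda = +\infty$ case noted right after Theorem \ref{thm:sslDual}), so the left-hand side becomes $\mathbb{E}_{P_{0_{\boldsymbol{X}}}}\{\max_{y\in\mathcal{Y}} \phi(\boldsymbol{X},y)\}$. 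Then I would rewrite the right-hand side by conditioning on $\boldsymbol{X}$: since $P_0$ decomposes as $P_{0_{\boldsymbol{X}}}$ together with the conditionals $P_{0_{\vert\boldsymbol{X}}}$, the tower property gives $\mathbb{E}_{P_0}\{\phi\} = \mathbb{E}_{\boldsymbol{X}\sim P_{0_{\boldsymbol{X}}}}\big\{\mathbb{E}_{y\sim P_{0_{\vert\boldsymbol{X}}}}\left\{\phi(\boldsymbol{X},y)\right\}\big\}$.

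The core of the argument is then a pointwise (in $\boldsymbol{X}$) inequality: for each fixed $\boldsymbol{X}$, the average of $\phi(\boldsymbol{X},\cdot)$ under any probability distribution on the finite set $\mathcal{Y}$ — in particular under $P_{0_{\vert\boldsymbol{X}}}$ — is at most the maximum value $\max_{y\in\mathcal{Y}}\phi(\boldsymbol{X},y)$. This is immediate: a weighted average with nonnegative weights summing to one cannot exceed the largest term. Concretely, $\mathbb{E}_{y\sim P_{0_{\vert\boldsymbol{X}}}}\{\phi(\boldsymbol{X},y)\} = \sum_{y} P_{0_{\vert\boldsymbol{X}}}(y)\,\phi(\boldsymbol{X},y) \leq \big(\sum_{y} P_{0_{\vert\boldsymbol{X}}}(y)\big)\max_{y'}\phi(\boldsymbol{X},y') = \max_{y'}\phi(\boldsymbol{X},y')$. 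Taking the expectation over $\boldsymbol{X}\sim P_{0_{\boldsymbol{X}}}$ of both sides and combining with the two rewrites above yields $\rho_{\infty}(\phi)\geq 0$ for every $\phi\in\Phi$.

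There is essentially no hard step here; the only points requiring a word of care are (i) confirming that $\softmin^{(\infty)}$ is indeed $\max$, which one checks by letting $\lambda\to+\infty$ in Definition \ref{def:softmin} and observing that $\frac{1}{\lambda}\log\big(\frac{1}{|\mathcal{Y}|}\sum_y e^{\lambda q_y}\big)\to \max_y q_y$ since the $\frac{1}{\lambda}\log\frac{1}{|\mathcal{Y}|}$ term vanishes and $\frac{1}{\lambda}\log\sum_y e^{\lambda q_y} = \max_y q_y + \frac{1}{\lambda}\log\sum_y e^{\lambda(q_y-\max q)} \to \max_y q_y$, and (ii) the finiteness of $\mathcal{Y}$, which is already assumed throughout the paper and guarantees the maximum is attained and all sums/expectations are well-defined. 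The pointwise-average-bounded-by-maximum step is the "main obstacle" only in the trivial sense that it is the one genuine mathematical content, and it is elementary.
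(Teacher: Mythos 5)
Your proposal is correct and follows essentially the same route as the paper's own proof: identify $\softmin^{\left(\infty\right)}$ with the $\max$ operator, condition on $\boldsymbol{X}$ so that $\mathbb{E}_{P_0}\left\{\phi\right\}$ becomes the $P_{0_{\boldsymbol{X}}}$-expectation of $\left\langle\phi_{\boldsymbol{X}}\big\vert P_{0_{\vert\boldsymbol{X}}}\right\rangle$, and use the pointwise fact that an average under a probability vector on the finite set $\mathcal{Y}$ cannot exceed the maximum. The only difference is cosmetic: you spell out the limit argument showing $\softmin^{\left(\infty\right)}=\max$, which the paper simply asserts.
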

\begin{proof}
$P_{0_{\vert\boldsymbol{X}}}$ is a distribution over $\mathcal{Y}$, thus can be considered as a vector in a simplex, i.e. all components are non-negative and sum up to one. Then, the lemma's argument can be justified by the fact that
\begin{equation}
\left\langle \phi_{\boldsymbol{X}}
\big\vert
P_{0_{\vert\boldsymbol{X}}}\right\rangle
\leq
\max_{y\in\mathcal{Y}}~\phi_{\boldsymbol{X}},
\quad\mathrm{while}\quad
\softmin^{\left(\infty\right)}_{y\in\mathcal{Y}}\left\{
\phi_{\boldsymbol{X}}
\right\}=
\max_{y\in\mathcal{Y}}~\phi_{\boldsymbol{X}},
\end{equation}
where $\langle\cdot\vert\cdot\rangle$ denotes the inner product. More precisely, one can write:
\begin{align*}
\rho_{\infty}\left(\phi\right)&=
\mathbb{E}_{P_{0_{\boldsymbol{X}}}}\left\{
\softmin^{\left(\infty\right)}_{y\in\mathcal{Y}}\left\{
\phi_{\boldsymbol{X}}
\right\}
\right\}
-\mathbb{E}_{P_0}\left\{
\phi
\right\}
\\
&=\mathbb{E}_{P_{0_{\boldsymbol{X}}}}\left\{
\softmin^{\left(\infty\right)}_{y\in\mathcal{Y}}\left\{
\phi_{\boldsymbol{X}}
\right\}-
\left\langle \phi_{\boldsymbol{X}}
\big\vert
P_{0_{\vert\boldsymbol{X}}}\right\rangle
\right\}\ge0.
\end{align*}
The last inequality is a direct result of the fact that inside of the expectation operator is non-negative. This completes the proof. 
\end{proof}
However, we are more interested in those cases where $\lambda$ can be bounded, or even negative, while $\rho_{\lambda}$ is still non-negative in {\it {some regions}} of $\Phi$. The main problem is that the minimizer of \eqref{eq:SSLmainMin} (semi-supervised empirical risk) must fall in {\it {those regions}}, as well. Otherwise one cannot upper-bound the true risk by minimizing \eqref{eq:SSLmainMin}. Mathematically speaking, assume $\Phi\triangleq\left\{\phi_{\gamma}\left(\cdot;\theta\right):\mathcal{Z}\rightarrow\mathbb{R}\vert~\theta\in\Theta\right\}$ as described in \eqref{eq:SSLmainMin}. Then, we are interested to see if there exists a non-empty subset of $\Phi$, say $\psi$, such that:
\begin{equation}
\exists\psi\subseteq\Phi\bigg\vert~
\argmin_{\phi\in\Phi}~\hat{R}_{\mathrm{SSAR}}\left(\phi;\boldsymbol{D}\right)\in\psi
\quad\mathrm{and}\quad
\rho_{\lambda}\left(\phi\right)\ge0,~\forall\phi\in\psi.
\end{equation}
We give a theoretical solution for the non-trivial case of the above-mentioned problem ($\lambda<+\infty$). This way, one can still choose small (or generally negative) values of $\lambda$, which substantially lower the empirical loss and improve the generalization bound. The following definitions provide us with more generalized means to achieve this goal.
\\[-3mm]
\begin{definition}
\label{def:GAP_PI}
Assume the function set $\Phi\subseteq{\mathbb{R}}^{\left(\mathcal{X}\times\mathcal{Y}\right)}$, probability distribution $P_0\in M\left(\mathcal{X}\times\mathcal{Y}\right)$, and let us define $\phi^*=\argmin_{\phi\in\Phi}~\mathbb{E}_{P_0}\left\{\phi\left(\boldsymbol{X},y\right)\right\}$. Let $\psi\subseteq\Phi$ to denote a subset of functions in $\Phi$. Then, the loss gap functional ${\mathrm{GAP}}\left(\psi\right)$, and $\Gamma\left(\psi;\lambda\right)$ for $\lambda\in\mathbb{R}\cup\left\{\pm\infty\right\}$ w.r.t. $P_0$ and $\Phi$ are defined as
\begin{align}
{\mathrm{GAP}}\left(\psi\right)\triangleq
\inf_{\phi\in\Phi-\psi}\mathbb{E}_{P_0}\left\{\phi-\phi^*\right\}
\ge0
\quad,\quad
\Gamma\left(\psi;\lambda\right)
\triangleq
\inf_{\phi\in\Phi-\psi}
~\rho_{\lambda}\left(\phi\right)
-\rho_{\lambda}\left(\phi^*\right).
\end{align}
For the special case of $\psi=\Phi$, we define $\mathrm{GAP}\left(\Phi\right)=\infty$ and $\Gamma\left(\Phi;\lambda\right)=0$, respectively. Also, let us define $\Lambda:2^\Phi\rightarrow\mathbb{R}\cup\left\{\pm\infty\right\}$ as
\begin{align}
\Lambda\left(\psi\right)\triangleq&~\inf_{\lambda\in\mathbb{R}\cup\left\{\pm\infty\right\}}~\lambda
\nonumber\\
&\mathrm{subject~to}\quad
\rho_{\lambda}\left(\phi\right)\ge0,\quad
\forall\phi\in\psi.
\end{align} 
\end{definition}
All the functionals $\mathrm{GAP}$, $\Gamma$ and $\Lambda$ are defined to enable us to capture the properties of a hypothesis set $\Phi$ and a corresponding data distribution $P_0$, inside arbitrary subsets of $\Phi$. Another interesting attribute is that $\mathrm{GAP}$ and $\Lambda$ are not functions of $\lambda$, and correspond to the fundamental features of the pair $\left(\Phi,P_0\right)$ in a fully-supervised sense. Note that due to Lemma \ref{lemma:propernessLB}, $\Lambda\left(\psi\right)$ is always well-defined, since its corresponding feasible set cannot be empty. This way, we can present the most important definition in this section, which is the key to provide the generalization bounds derived in Theorem \ref{thm:generalBound1} for general semi-supervised learning via self-learning.
\\[-3mm]
\begin{definition}[Minimum Supervision Ratio]
\label{def:compatibility}
Assume function set $\Phi\subseteq\mathbb{R}^{\left(\mathcal{X}\times\mathcal{Y}\right)}$ and distribution $P_0\in M\left(\mathcal{X}\times\mathcal{Y}\right)$ for a feature space $\mathcal{X}$ and finite label set $\mathcal{Y}$. Then, the minimum supervision ratio function, $\mathrm{MSR}_{\left(\Phi,P_0\right)}:\mathbb{R}\cup\left\{\pm\infty\right\}\times\mathbb{R}_{\ge0}\rightarrow\left[0,1\right]$, is defined as
\begin{equation}
\mathrm{MSR}_{\left(\Phi,P_0\right)}\left(\lambda,\zeta\right)\triangleq
\inf_{\quad~\psi\subseteq\Phi\vert\Lambda\left(\psi\right)\leq\lambda}~h\left(
1-\frac{\mathrm{GAP}\left(\psi\right)-\zeta}
{u\left(-\Gamma\left(\psi;\lambda\right)\right)}
\right),
\end{equation}
for $\lambda\in\mathbb{R}\cup\left\{\pm\infty\right\}$ and $\zeta\ge0$, where $u:\mathbb{R}\rightarrow\mathbb{R}$ denotes the ramp function, i.e. $u\left(x\right)=x,~x\ge0$ and $0$ otherwise, and $h\left(\cdot\right)\triangleq\min\left\{1,u\left(\cdot\right)\right\}$. Also, let $\mathrm{MSR}_{\left(\Phi,P_0\right)}\left(\lambda,\zeta\right)=1$, in case the feasible set $\Lambda\left(\psi\right)\leq\lambda$ is empty for an input $\lambda$.
\end{definition}
$\mathrm{MSR}_{\left(\Phi,P_0\right)}$ is a learning-theoretic attribute of the pair $\left(\Phi,P_0\right)$, and also a central ingredient of Theorem \ref{thm:generalBound1}. It has the following properties: First,
$\mathrm{MSR}_{\left(\Phi,P_0\right)}\left(\lambda,\zeta\right)$ is an increasing function w.r.t. $\zeta$, and decreasing w.r.t. $\lambda$, for all $\Phi$ and $P_0$.
Second,
for all $\Phi$ and $P_0$, there exist $\lambda\in\mathbb{R}\cup\left\{\pm\infty\right\}$ and $\zeta\ge0$ such that $\mathrm{MSR}_{\left(\Phi,P_0\right)}\left(\lambda,\zeta\right)=0$ (see Lemma \ref{lemma:compExistLemma} below).
\\[-3mm]
\begin{lemma}[Compatibility Guarantee]
\label{lemma:compExistLemma}
For any function set $\Phi$ and a corresponding probability distribution $P_0$, there exist $\lambda\in\mathbb{R}\cup\left\{\pm\infty\right\}$ and $\zeta\ge0$ such that $\mathrm{MSR}_{\left(\Phi,P_0\right)}\left(\lambda,\zeta\right)=0$.
\end{lemma}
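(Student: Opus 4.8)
The plan is to prove the stronger statement that $\mathrm{MSR}_{\left(\Phi,P_0\right)}\left(+\infty,\zeta\right)=0$ for \emph{every} $\zeta\ge0$; picking $\lambda=+\infty$ and any $\zeta\ge0$ then gives the lemma, and simultaneously recovers the claim $\mathrm{MSR}_{\left(\Phi,P_0\right)}\left(+\infty,\mathrm{margin}\right)=0$ quoted in Section~\ref{sec:proposed:general}. The mechanism I would exploit is that, among the subsets $\psi\subseteq\Phi$ over which the infimum in Definition~\ref{def:compatibility} ranges, the full set $\psi=\Phi$ is always admissible once $\lambda=+\infty$, and the term of that infimum attached to $\psi=\Phi$ is forced to vanish by the special-case conventions $\mathrm{GAP}\left(\Phi\right)=\infty$ and $\Gamma\left(\Phi;\lambda\right)=0$ fixed in Definition~\ref{def:GAP_PI}.

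First I would verify admissibility of $\psi=\Phi$. The feasible set in $\mathrm{MSR}_{\left(\Phi,P_0\right)}\left(+\infty,\zeta\right)$ is $\left\{\psi\subseteq\Phi\mid\Lambda\left(\psi\right)\le+\infty\right\}$. By Lemma~\ref{lemma:propernessLB}, $\rho_{\infty}\left(\phi\right)\ge0$ for all $\phi\in\Phi$, so $\lambda=+\infty$ is feasible in the program defining $\Lambda\left(\Phi\right)$; hence $\Lambda\left(\Phi\right)\le+\infty$ and $\psi=\Phi$ lies in the feasible set. In particular this set is non-empty, so $\mathrm{MSR}_{\left(\Phi,P_0\right)}\left(+\infty,\zeta\right)$ is genuinely the stated infimum rather than the default value $1$.

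Next I would evaluate the infimand at $\psi=\Phi$. By Definition~\ref{def:GAP_PI}, $\mathrm{GAP}\left(\Phi\right)=\infty$ and $\Gamma\left(\Phi;+\infty\right)=0$, so $u\left(-\Gamma\left(\Phi;+\infty\right)\right)=u\left(0\right)=0$ while $\mathrm{GAP}\left(\Phi\right)-\zeta=+\infty$ for every finite $\zeta\ge0$; hence the ratio $\frac{\mathrm{GAP}\left(\Phi\right)-\zeta}{u\left(-\Gamma\left(\Phi;+\infty\right)\right)}=+\infty$ (a strictly positive numerator over a vanishing non-negative denominator), and the argument of $h$ equals $1-\left(+\infty\right)=-\infty$. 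Since $u$ vanishes on $\left(-\infty,0\right]$, this gives $h\left(-\infty\right)=\min\left\{1,u\left(-\infty\right)\right\}=0$. Finally, because $u\ge0$, the clip $h=\min\left\{1,u\left(\cdot\right)\right\}$ is non-negative everywhere, so every term in the infimum defining $\mathrm{MSR}_{\left(\Phi,P_0\right)}\left(+\infty,\zeta\right)$ is $\ge0$; combined with the fact that the $\psi=\Phi$ term equals $0$, the infimum is exactly $0$, which is the claim.

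I do not expect a deep obstacle here---this lemma is essentially a well-posedness check on the MSR construction. The only point that needs care is the extended-real bookkeeping: one must interpret $\left(+\infty\right)/0$ as $+\infty$ (legitimate because $-\Gamma\left(\Phi;\lambda\right)\ge0$ and the numerator is strictly positive) and extend $u$ and $h$ to $\overline{\mathbb{R}}$ so that $u\left(-\infty\right)=h\left(-\infty\right)=0$. If one prefers to sidestep the expression $\left(+\infty\right)/0$ altogether, the same conclusion follows by observing directly that $1-\frac{\mathrm{GAP}\left(\Phi\right)-\zeta}{u\left(-\Gamma\left(\Phi;+\infty\right)\right)}\le0$ and that $h$ vanishes on $\left(-\infty,0\right]$.
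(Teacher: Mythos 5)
Your proposal is correct, and it takes a more direct route than the paper's, although the two arguments meet in the degenerate case. You fix $\lambda=+\infty$, observe via Lemma~\ref{lemma:propernessLB} that $\Lambda\left(\Phi\right)\leq+\infty$ so that $\psi=\Phi$ is admissible in the infimum, and then let the conventions $\mathrm{GAP}\left(\Phi\right)=\infty$ and $\Gamma\left(\Phi;\lambda\right)=0$ of Definition~\ref{def:GAP_PI} force the $\psi=\Phi$ term to vanish, yielding the stronger fact $\mathrm{MSR}_{\left(\Phi,P_0\right)}\left(+\infty,\zeta\right)=0$ for every $\zeta\ge0$, which is precisely the property asserted alongside Definition~\ref{def:compatibility}. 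The paper instead rewrites the condition $\mathrm{MSR}_{\left(\Phi,P_0\right)}\left(\lambda,\zeta\right)=0$ as $\sup_{\psi\subseteq\Phi\vert\Lambda\left(\psi\right)\leq\lambda}\mathrm{GAP}\left(\psi\right)+\Gamma\left(\psi;\lambda\right)\ge\zeta$ and splits on $\Gamma^*\triangleq\sup_{\lambda}\sup_{\psi}\Gamma\left(\psi;\lambda\right)$: when $\Gamma^*>0$ it claims, rather informally, that a finite $\lambda$ and a proper subset $\psi\subset\Phi$ already suffice, and only in the incompatible case $\Gamma^*=0$ does it fall back on $\lambda=+\infty$ with $\psi=\Phi$ --- i.e.\ your entire argument is the paper's worst case. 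What the paper's dichotomy buys is the practically relevant (if loosely argued) message that for compatible pairs $\left(\Phi,P_0\right)$ one need not resort to extreme pessimism; what your route buys is economy and rigor, since the lemma only asks for existence of some pair $\left(\lambda,\zeta\right)$ and your argument avoids the paper's unstated side assumptions (e.g.\ the existence of $\psi^*$ with $\mathrm{GAP}\left(\psi^*\right)>0$, and the unexplained appearance of a $\left(1-\eta\right)$ factor). The only caveat, which you correctly flag, is that Definition~\ref{def:compatibility} leaves the ratio undefined when $u\left(-\Gamma\left(\psi;\lambda\right)\right)=0$; reading a strictly positive (here infinite) numerator over a vanishing non-negative denominator as $+\infty$, so that $h$ evaluates to $0$, is the same implicit convention the paper uses when it asserts $\mathrm{MSR}_{\left(\mathcal{F},P_0\right)}\left(+\infty,\cdot\right)=0$, so this is extended-real bookkeeping rather than a genuine gap.
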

\begin{proof}
By simple mathematical manipulations, it can be easily verified that 
\begin{align}
&\mathrm{MSR}_{\left(\Phi,P_0\right)}\left(\lambda,\zeta\right)=
\inf_{\psi\subseteq\Phi\vert~\Lambda\left(\psi\right)\leq\lambda}~h\left(
1-\frac{\mathrm{GAP}\left(\psi\right)-\zeta}
{u\left(-\Gamma\left(\psi;\lambda\right)\right)}
\right)=0,
\nonumber\\
\Rightarrow\quad&\exists\lambda\in\mathbb{R}\cup\left\{\pm\infty\right\}\bigg\vert~
\sup_{\psi\subseteq\Phi\vert~\Lambda\left(\psi\right)\leq\lambda}\mathrm{GAP}\left(\psi\right)+\Gamma\left(\psi;\lambda\right)\ge\zeta.
\end{align}
In this regard, in order to prove the lemma one can alternatively try to show that there exists $\zeta\ge0$, such that
\begin{equation}
\sup_{\lambda\in\mathbb{R}\cup\left\{\pm\infty\right\}}~
\sup_{\psi\subset\Phi\vert~\Lambda\left(\psi\right)\leq\lambda}~
{\mathrm{GAP}\left(\psi\right)}+
\Gamma\left(\psi;\lambda\right)\ge\zeta.
\label{eq:totalProperness}
\end{equation}
Note that $\mathrm{GAP}\left(\psi\right)\ge0$ based on the definition, and for all $\psi\subseteq\Phi$. Moreover, according to assumption there exist $\psi^*\subset\Phi$, such that $\mathrm{GAP}\left(\psi^*\right)>0$. Let us define $\Gamma^*$ as
\begin{equation}
\Gamma^*\triangleq
\sup_{\lambda\in\mathbb{R}\cup\left\{\pm\infty\right\}}~
\sup_{\psi\subseteq\Phi\vert~\Lambda\left(\psi\right)\leq\lambda}
\Gamma\left(\psi;\lambda\right).
\end{equation}
It is easy to see that $\Gamma^*\ge0$, since $\psi=\Phi-\phi^*$ and  $\lambda\ge\Lambda\left(\Phi-\phi^*\right)$ lead to $\Gamma\left(\psi,\lambda\right)=0$. The rest of the proof can be divided into two separate parts, based on the assumptions on the value of $\Gamma^*$ w.r.t. function set $\Phi$, and probability distribution $P_0$. First, assume $\Gamma^*>0$. Then, it can be easily checked that there exists $\zeta>0$, $\lambda\in\mathbb{R}$ and $\psi\subset\Phi$ such that for any $\eta\in\left[0,1\right]$:
\begin{equation}
\mathrm{GAP}\left(\psi\right)+\left(1-\eta\right)\Gamma\left(\psi;\lambda\right)-\zeta\ge0.
\end{equation}
In the second regime, we assume $\Gamma^*=0$. This very special case indicates a highly incompatible pair of hypothesis set $\Phi$ and distribution $P_0$. In simple words, it means there are functions such as $\phi_{\mathrm{inc}}\in\Phi$, so $\phi_{\mathrm{inc}}$ is highly correlated with label-conditional distribution $P_{0_{\vert\boldsymbol{X}}}$, in an expected sense. Therefore, it produces large expected loss values, while it can easily fool the learner during the pseudo-labeling procedure (for example, by assigning very small loss values for some irrelevant labels). In this case, $\lambda=+\infty$ (which means $\psi=\Phi$) gives us the desired result and completes the proof.
\end{proof}

Based on Definition \ref{def:compatibility} and previous discussions, the following theorem bounds the true expected adversarial risk, i.e.
$\mathbb{E}_{P_0}\left\{
\phi_{\gamma}\left(\boldsymbol{Z};\theta\right)
\right\}$ based on the expected value of the proposed risk $\mathbb{E}_{P_0}\left\{\hat{R}_{\mathrm{SSAR}}\left(\theta;\boldsymbol{D}\right)\right\}$, for all $\theta$ that happen to be in a neighborhood of its minimizer.
\\[-3mm]
\begin{thm2}[Statistical Consistency]
\label{thm:statGeneral}
Assume the function set $\Phi\triangleq\left\{\phi_{\gamma}\left(\cdot;\theta\right)\vert~\theta\in\Theta\right\}$ of adversarial loss functions $\phi_{\gamma}:\mathcal{Z}\times\Theta\rightarrow\mathbb{R}$ defined in \eqref{eq:phiGammaDef}, for a feature-label space $\mathcal{Z}\triangleq\mathcal{X}\times\mathcal{Y}$, a parameter space $\Theta$ and dual parameter $\gamma\ge0$. Let $P_0\in M\left(\mathcal{X}\times\mathcal{Y}\right)$ to be any distribution. Also, assume $\theta^*$ to be the minimizer of the actual adversarial loss, i.e.
$\theta^*=\argmin_{\theta\in\Theta}~\mathbb{E}_{P_0}\left\{
\phi_{\gamma}\left(\boldsymbol{Z};\theta\right)
\right\}$. Let $\eta\in\left[0,1\right]$ to denote a supervision ratio, and assume $\zeta\ge0$ and $\lambda\in\mathbb{R}\cup\left\{\pm\infty\right\}$ such that the following condition holds:
\begin{equation}
\eta~\ge~
\mathrm{MSR}_{\left(\Phi,P_0\right)}\left(\lambda,\zeta\right).
\end{equation}
Consider a partially labeled dataset $\boldsymbol{D}\triangleq\left\{\left(\boldsymbol{X}_i,y_i\right)\right\}_{i=1}^{n}$ consisting of $n$ i.i.d. samples drawn from $P_0$, where labels can be observed with probability of $\eta$, independently from each other. Then, there exists a neighborhood $\Theta_{\mathrm{local}}$, such that $\theta^*\in\Theta_{\mathrm{local}}\subseteq\Theta$ and all the following relations hold:
\begin{align}
&\argmin_{\theta\in\Theta}~\mathbb{E}_{P_{0}}\left\{
\hat{R}_{\mathrm{SSAR}}\left(\theta;\boldsymbol{D}\right)
\right\}\in\Theta_{\mathrm{local}},
\nonumber\\
&
\mathbb{E}_{P_{0}}\left\{\hat{R}_{\mathrm{SSAR}}\left(\theta;\boldsymbol{D}\right)-
\hat{R}_{\mathrm{SSAR}}\left(\theta^*;\boldsymbol{D}\right)\right\}
\ge\zeta,\quad&\forall\theta\notin\Theta_{\mathrm{local}},
\nonumber\\
\mathrm{{\it {and}}}\quad~
&\mathbb{E}_{P_0}\left\{\phi_{\gamma}\left(\boldsymbol{Z};\theta\right)\right\}
+\gamma\epsilon
\leq
\mathbb{E}_{P_{0}}\left\{
\hat{R}_{\mathrm{SSAR}}\left(\theta;\boldsymbol{D}\right)
\right\},
&\forall\theta\in\Theta_{\mathrm{local}},
\end{align}
where the term $\gamma\epsilon$ appears due to the definition of $\hat{R}_{\mathrm{SSAR}}$ in Theorem \ref{thm:sslDual}.
\end{thm2}
\begin{proof}
Based on the proof of Lemma \ref{lemma:compExistLemma} and definition of $\mathrm{MSR}_{\left(\Phi,P_0\right)}$, it can be easily checked that the condition $\eta\ge\mathrm{MSR}_{\left(\Phi,P_0\right)}\left(\lambda,\zeta\right)$ implies that:
\begin{align}
\exists\psi\subseteq\Phi~\bigg\vert\quad
\frac{\mathrm{GAP}\left(\psi\right)-\zeta}{1-\eta}+
\Gamma\left(\psi;\lambda\right)\ge0
\quad\mathrm{and}\quad
\rho_{\lambda}\left(\phi\right)\ge0,~\forall\phi\in\psi.
\end{align}
Let $\phi^*\triangleq\phi_{\gamma}\left(\cdot;\theta^*\right)$. According to the definition of $\mathrm{GAP}$ and $\Gamma$ in Definition \ref{def:GAP_PI}, the first condition in the above results in the following chain of relations:
\begin{align}
\zeta&\leq
\min_{\phi\in\Phi-\psi}\mathbb{E}_{P_0}\left\{\phi-\phi^*\right\}+\left(1-\eta\right)\min_{\phi\in\Phi-\psi}
\mathbb{E}_{P_{0_{\boldsymbol{X}}}}\left\{
\rho_{\lambda}\left(\phi\right)-\rho_{\lambda}\left(\phi^*\right)
\right\}
\nonumber\\
&\leq
\min_{\phi\in\Phi-\psi}\left\{
\mathbb{E}_{P_0}\left\{\phi\right\}+\left(1-\eta\right)
\rho_{\lambda}\left(\phi\right)
\right\} - 
\left\{\mathbb{E}_{P_0}\left\{\phi^*\right\}-
\left(1-\eta\right)\rho_{\lambda}\left(\phi^*\right)\right\}
\nonumber \\
&=
\min_{\phi\in\Phi-\psi}
\mathbb{E}_{P_0}\left\{
\eta\phi+\left(1-\eta\right)
\softmin^{\left(\lambda\right)}_{y\in\mathcal{Y}}\left\{\phi_{\boldsymbol{X}}\right\}
\right\} - 
\mathbb{E}_{P_0}\left\{
\eta\phi^*
+\left(1-\eta\right)
\softmin^{\left(\lambda\right)}_{y\in\mathcal{Y}}\left\{\phi^*_{\boldsymbol{X}}\right\}
\right\}
\nonumber\\
&=
\min_{\theta\in\Theta-\Theta_{\mathrm{local}}}
\mathbb{E}_{P_0}\left\{
\hat{R}_{\mathrm{SSAR}}\left(\theta;\boldsymbol{D}\right)
\right\}-
\mathbb{E}_{P_0}\left\{
\hat{R}_{\mathrm{SSAR}}\left(\theta^*;\boldsymbol{D}\right)
\right\},
\end{align}
where $\Theta_{\mathrm{local}}$ denotes the subset of parameter space $\Theta$ that corresponds to function subset $\psi$. This proves the first two arguments of the Theorem. Note that the first argument can be directly deduced from the second one, and we have only written it separately for the sake of emphasis and clarity. The third argument can also be directly deduced from the fact that $\Lambda\left(\psi\right)\leq\lambda$. Note that based on Definition \ref{def:GAP_PI} and for all $\phi\in\psi$ (or equivalently $\theta\in\Theta_{\mathrm{local}}$), we have $\rho_{\Lambda\left(\psi\right)}\left(\phi\right)\ge0$. Therefore:
\begin{align}
\left(1-\eta\right)
\rho_{\Lambda\left(\psi\right)}\left(\phi\right)
&=
\left(1-\eta\right)
\mathbb{E}_{P_{0_{\boldsymbol{X}}}}\left\{
\softmin^{\left(\Lambda\left(\psi\right)\right)}_{y\in\mathcal{Y}}\left\{\phi_{\boldsymbol{X}}
\right\}
-\mathbb{E}_{P_{0_{\vert\boldsymbol{X}}}}\left\{
\phi_{\boldsymbol{X}}
\right\}
\right\}
\nonumber\\
&=
\left(1-\eta\right)
\mathbb{E}_{P_{0_{\boldsymbol{X}}}}\left\{
\softmin^{\left(\Lambda\left(\psi\right)\right)}_{y\in\mathcal{Y}}\left\{\phi_{\boldsymbol{X}}
\right\}\right\}
-
\left(1-\eta\right)
\mathbb{E}_{P_0}\left\{\phi\right\}
\nonumber\\
&=
\eta\mathbb{E}_{P_0}\left\{\phi\right\}+
\left(1-\eta\right)
\mathbb{E}_{P_{0_{\boldsymbol{X}}}}\left\{
\softmin^{\left(\Lambda\left(\psi\right)\right)}_{y\in\mathcal{Y}}\left\{\phi_{\boldsymbol{X}}
\right\}\right\}
-
\mathbb{E}_{P_0}\left\{\phi\right\}
\nonumber\\
&=
\mathbb{E}_{P_0}\left\{\hat{R}_{\mathrm{SSAR}}\left(\theta;\boldsymbol{D}\right)\right\}
-\mathbb{E}_{P_0}\left\{\phi_{\gamma}\left(\boldsymbol{Z};\theta\right)\right\}
-\gamma\epsilon
\ge0.
\end{align}
Taking into account the fact that $\softmin^{\left(\lambda\right)}_{y\in\mathcal{Y}}\left(\cdot\right)$ is an increasing function w.r.t. $\lambda$ leads to the third argument, and thus completes the proof.
\end{proof}

Theorem \ref{thm:statGeneral} provides a mathematical foundation for establishing a general learning-theoretic bound on the generalization aspect of self-learning paradigm, that can be applied to our distributionally robust setting as well. Intuitively, it states that for good choices of the pair $\left(\eta,\lambda\right)$, one can guarantee the following two outcomes:

First, the minimizer of the expected proposed loss happens to be in a neighborhood of the true minimizer, i.e. $\argmin_{\theta\in\Theta}\sup_{P\in\mathcal{B}_{\epsilon}\left(P_0\right)}\mathbb{E}_{P}\left\{\ell\left(\boldsymbol{Z};\theta\right)\right\}$. Also, a positive margin $\zeta>0$ can be considered which puts a gap between the minimum value of the proposed expected loss and those that fall outside of this neighborhood. This margin will be extremely helpful when we are dealing with empirical risks instead of the statistical ones (see the proof of Theorem \ref{thm:generalBound1}).

Second, all over the above-mentioned neighborhood, $R_{\mathrm{SSAR}}$ provides an upper-bound on the true expected loss. In this regard, as long as a minimum level of pessimism is considered with respect to the compatibility of the hypothesis set and distribution duo, i.e. $\lambda\ge\Lambda\left(\psi\right)$, it can be guaranteed that the self-learning module does not overfit and assigns meaningful labels to the unlabeled data.

\section{Auxiliary Theorems and Proofs}
\label{sec:appendix:thm}
\begin{proof}[Proof of Theorem \ref{thm:sslDual}]
The proof proceeds by the substitution of original proposed semi-supervised problem in \eqref{eq:main2} by its dual form. This way, we can take advantage of the good mathematical properties that this dual form can provide, specially w.r.t. maximization over $P\in\mathcal{B}_{\epsilon}\left(S\right)$. The following lemma (see Theorem $1$ and Remark $1$ of \cite{blanchet2019quantifying}), formulates the dual form:%
\begin{lemma}[Lagrangian Relaxation and Duality]
Assume $\mathcal{Z}$ to be a sample space and let $\Theta$ to denote the space of parameters. Let loss function $\ell:\mathcal{Z}\times\Theta\rightarrow\mathbb{R}_{\ge0}$ and  function $c:\mathcal{Z}\times\mathcal{Z}\rightarrow\mathbb{R}_{\ge0}$ to be continuous, and further assume $c$ is lower semi-continuous and $c\left(\boldsymbol{z},\boldsymbol{z}\right)=0,~\forall\boldsymbol{z}\in\mathcal{Z}$. Then, for any $\epsilon\ge0$ and any distribution $Q\in M\left(\mathcal{Z}\right)$, the following equality holds for all $\theta\in\Theta$:
\begin{equation}
\sup_{P\in\mathcal{B}_{\epsilon}\left(Q\right)}\mathbb{E}_P\left\{
\ell\left(\boldsymbol{Z};\theta\right)\right\}
=
\inf_{\gamma\ge0}\left\{
\gamma\epsilon+
\mathbb{E}_Q\left\{
\sup_{\boldsymbol{z}'\in\mathcal{Z}}
\ell\left(\boldsymbol{z}';\theta\right) - \gamma c\left(\boldsymbol{z}',\boldsymbol{Z}\right)
\right\}
\right\}.
\label{eq:dual}
\end{equation}
\label{thm:dual}
\end{lemma}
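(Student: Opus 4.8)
The plan is to recognize \eqref{eq:dual} as the standard strong-duality identity of Wasserstein distributionally robust optimization and to reproduce the argument underlying the cited result of \cite{blanchet2019quantifying}. The parameter $\theta$ is inert throughout, so fix it and abbreviate $\ell(\cdot)\triangleq\ell(\cdot;\theta)$. The strategy has three parts: (i) rewrite the inner supremum over the ambiguity ball $\mathcal{B}_{\epsilon}(Q)$ as a linear program over transportation plans; (ii) relax its single budget constraint $\int c\,\mathrm{d}\mu\le\epsilon$ with a scalar multiplier $\gamma\ge0$; and (iii) show that this Lagrangian relaxation has no duality gap.

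For (i)--(ii) I would pass from distributions to couplings. Using Definition \ref{def:wasserstein} and the fact that the infimum defining $W_{c}$ is attained whenever $c$ is lower semi-continuous (weak compactness of the couplings of two fixed marginals plus lower semi-continuity of $\mu\mapsto\int c\,\mathrm{d}\mu$),
\[
\sup_{P\in\mathcal{B}_{\epsilon}(Q)}\mathbb{E}_{P}\{\ell(\boldsymbol{Z})\}
=
\sup_{\substack{\mu\in M(\mathcal{Z}^{2}):\ \mu(\mathcal{Z},\cdot)=Q\\ \int c\,\mathrm{d}\mu\le\epsilon}}\ \int \ell(\boldsymbol{z}')\,\mathrm{d}\mu(\boldsymbol{z}',\boldsymbol{z}),
\]
since the first marginal of any feasible $\mu$ lies in $\mathcal{B}_{\epsilon}(Q)$ and, conversely, any $P$ with $W_{c}(P,Q)\le\epsilon$ is joined to $Q$ by an optimal plan of cost $W_{c}(P,Q)\le\epsilon$. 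Dualizing the budget constraint with $\gamma\ge0$, the right-hand side equals $\sup_{\mu:\ \mu(\mathcal{Z},\cdot)=Q}\ \inf_{\gamma\ge0}\ \{\int\ell\,\mathrm{d}\mu+\gamma(\epsilon-\int c\,\mathrm{d}\mu)\}$, because the inner infimum returns $\int\ell\,\mathrm{d}\mu$ on feasible plans and $-\infty$ otherwise. To reverse the order, fix $\gamma$ and disintegrate $\mu$ over its second marginal $Q$; the conditional law of the first coordinate may be chosen freely and independently at each point, so the inner supremum localizes to a pointwise one,
\[
\sup_{\mu:\ \mu(\mathcal{Z},\cdot)=Q}\int\big(\ell(\boldsymbol{z}')-\gamma c(\boldsymbol{z}',\boldsymbol{z})\big)\,\mathrm{d}\mu
=
\mathbb{E}_{\boldsymbol{Z}\sim Q}\Big\{\sup_{\boldsymbol{z}'\in\mathcal{Z}}\ \ell(\boldsymbol{z}')-\gamma c(\boldsymbol{z}',\boldsymbol{Z})\Big\},
\]
the interchange of supremum and integral being a routine measurable-selection statement ($\ell-\gamma c$ is upper semi-continuous and nearly optimal selectors exist on a Polish $\mathcal{Z}$). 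Hence $\inf_{\gamma\ge0}\sup_{\mu}$ is exactly the right-hand side of \eqref{eq:dual}, and weak duality ($\sup\inf\le\inf\sup$) already establishes that the left-hand side of \eqref{eq:dual} is at most the right-hand side.

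The substance is the reverse inequality -- that the right-hand side is in turn at most the left-hand side, i.e. no duality gap -- and this is where I expect the real work. The coupling problem in the display above is a concave (indeed linear) maximization of $\mu\mapsto\int\ell\,\mathrm{d}\mu$ over the convex set $\{\mu:\mu(\mathcal{Z},\cdot)=Q\}$ subject to the single convex constraint $\mu\mapsto\int c\,\mathrm{d}\mu\le\epsilon$; for $\epsilon>0$ the coupling concentrated on the diagonal (the law of $(\boldsymbol{Z},\boldsymbol{Z})$ with $\boldsymbol{Z}\sim Q$) is feasible with cost $\int c(\boldsymbol{Z},\boldsymbol{Z})\,\mathrm{d}Q=0<\epsilon$, so Slater's condition holds, Lagrangian strong duality closes the gap, and \eqref{eq:dual} becomes an equality; the degenerate case $\epsilon=0$ follows by letting $\epsilon\downarrow0$. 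The main obstacle is making this rigorous in infinite dimensions: one must topologize $M(\mathcal{Z}^{2})$ weakly, check that $\mu\mapsto\int c\,\mathrm{d}\mu$ is lower semi-continuous and that the feasible couplings form a (pre)compact set -- automatic when $c$ has compact sublevel sets, and otherwise forced by the budget constraint together with tightness of $Q$ -- and then apply a minimax theorem (Sion's, say) after truncating $\gamma$ to a bounded interval on which the supremum is unchanged. Since this is precisely the content of Theorem~1 and Remark~1 of \cite{blanchet2019quantifying}, at that point I would simply invoke their result; the paragraphs above are the roadmap of its proof.
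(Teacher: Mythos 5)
Your proposal is correct and matches what the paper actually does: the paper gives no independent proof of this lemma, deferring entirely to Theorem~1 and Remark~1 of \cite{blanchet2019quantifying}, and your roadmap (coupling reformulation, Lagrangian relaxation of the budget constraint, pointwise interchange via measurable selection, then strong duality from the cited result) is a faithful sketch of that reference's argument. The only fragile step in your sketch is dispatching $\epsilon=0$ by letting $\epsilon\downarrow 0$ (right-continuity of the value at $\epsilon=0$ is not automatic), but since you ultimately invoke the cited theorem, which covers all $\epsilon\ge 0$ directly, this does not affect the conclusion.
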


Proof is explained in details in the original reference. Based on the duality equation in Lemma \ref{thm:dual}, the following chain of relations hold:
\begin{align}
\label{eq:thmDROSSLproof}
&\inf_{S\in\hat{\mathcal{P}}\left(\boldsymbol{D}\right)}
\left(
\sup_{P\in\mathcal{B}_{\epsilon}\left(S\right)}
\mathbb{E}_P\left\{
\ell\left(\boldsymbol{X},y;\theta\right)
\right\}
+
\frac{1}{\lambda}
\left(\frac{n_{\mathrm{ul}}}{n}\right)
\hat{\mathbb{E}}_{\boldsymbol{D}_{\mathrm{ul}}}\left\{
\mathbb{H}\left(S_{\vert\boldsymbol{X}}\right)
\right\}
\right)
\\
=~&
\inf_{S\in\hat{\mathcal{P}}\left(\boldsymbol{D}\right)}
\left[
\inf_{\gamma\ge0}\left(
\gamma\epsilon +
\mathbb{E}_S\left\{
\sup_{\boldsymbol{z}'\in\mathcal{Z}}
\ell\left(\boldsymbol{z}';\theta\right)
-\gamma
c\left(\boldsymbol{z}',\boldsymbol{Z}\right)\right\}
\right)
+
\frac{1}{\lambda}
\left(\frac{n_{\mathrm{ul}}}{n}\right)
\hat{\mathbb{E}}_{\boldsymbol{D}_{\mathrm{ul}}}\left\{
\mathbb{H}\left(S_{\vert\boldsymbol{X}}\right)
\right\}
\right]
\nonumber
\\
=~&
\inf_{\gamma\ge0}\left[
\gamma\epsilon +
\inf_{S\in\hat{\mathcal{P}}\left(\boldsymbol{D}\right)}
\left(
\mathbb{E}_S\left\{
\sup_{\boldsymbol{z}'\in\mathcal{Z}}
\ell\left(\boldsymbol{z}';\theta\right)
-\gamma
c\left(\boldsymbol{z}',\left(\boldsymbol{X},y\right)\right)\right\}
+
\frac{1}{\lambda}
\left(\frac{n_{\mathrm{ul}}}{n}\right)
\hat{\mathbb{E}}_{\boldsymbol{D}_{\mathrm{ul}}}\left\{
\mathbb{H}\left(S_{\vert\boldsymbol{X}}\right)
\right\}
\right)
\right]
\nonumber \\
=~&
\inf_{\gamma\ge0}\left[
\gamma\epsilon +
\left(\frac{n_{\mathrm{l}}}{n}\right)
\frac{1}{n_{\mathrm{l}}}\sum_{i\in\mathcal{I}_{\mathrm{l}}}\left(
\sup_{\boldsymbol{z}'\in\mathcal{Z}}
\ell\left(\boldsymbol{z}';\theta\right)
-\gamma
c\left(\boldsymbol{z}',\left(\boldsymbol{X}_i,y_i\right)\right)
\right)
\right.
\nonumber \\
&\left.
\hspace{1.3cm}
+
\left(\frac{n_{\mathrm{ul}}}{n}\right)
\frac{1}{n_{\mathrm{ul}}}
\sum_{i\in\mathcal{I}_{\mathrm{ul}}}\inf_{\Omega\in M\left(\mathcal{Y}\right)}\left(
\mathbb{E}_{\Omega}\left\{
\sup_{\boldsymbol{z}'\in\mathcal{Z}}
\ell\left(\boldsymbol{z}';\theta\right)
-\gamma
c\left(\boldsymbol{z}',\left(\boldsymbol{X}_i,y\right)\right)\right\}
+
\frac{1}{\lambda}
\mathbb{H}\left(\Omega\right)
\right)
\right],
\nonumber
\end{align}
where the last inequality is a direct result of defining $\hat{\mathcal{P}}\left(\boldsymbol{D}\right)$ in Definition \ref{def:consistent}. Therefore, each $S\in\hat{\mathcal{P}}\left(\boldsymbol{D}\right)$ can be regarded as a weighted (with weights $n_{\mathrm{l}}/n$ and $n_{\mathrm{ul}}/n$, respectively) mixture of $\mathbb{P}_{\boldsymbol{D}_{\mathrm{l}}}$, i.e. delta-spikes over the labeled samples, and $\mathbb{P}_{\boldsymbol{D}_{\mathrm{ul}}}\tilde{\Omega}$, i.e. the same for unlabeled feature vectors which are multiplied by arbitrary conditional distributions of the form ${\Omega}\in M^{\mathcal{X}}\left(\mathcal{Y}\right)$. The two summations above which are over labeled and unlabeled samples, respectively, correspond to this bi-mixture formalism. Thus, the chain of relations in \eqref{eq:thmDROSSLproof} can be continued as 
\begin{align}
=~&
\inf_{\gamma\ge0}\left[
\gamma\epsilon+
\frac{1}{n}\sum_{i\in\mathcal{I}_{\mathrm{l}}}\phi_{\gamma}\left(\boldsymbol{X}_i,y_i\vert\theta\right)+
\frac{1}{n}\sum_{i\in\mathcal{I}_{\mathrm{ul}}}
\left(
\inf_{\Omega\in M\left(\mathcal{Y}\right)}
\sum_{y\in\mathcal{Y}}
\Omega_y
\phi_{\gamma}\left(\boldsymbol{X}_i,y;\theta\right)+
\frac{1}{\lambda}\mathbb{H}\left(\Omega\right)
\right)
\right]
\nonumber\\
=~&
\inf_{\gamma\ge0}\left[
\gamma\epsilon+
\frac{1}{n}\sum_{i\in\mathcal{I}_{\mathrm{l}}}\phi_{\gamma}\left(\boldsymbol{X}_i,y_i;\theta\right)+
\frac{1}{n}\sum_{i\in\mathcal{I}_{\mathrm{ul}}}
\softmin_{y\in\mathcal{Y}}^{\left(\lambda\right)}\left\{
\phi_{\gamma}\left(\boldsymbol{X}_i,y;\theta\right)
\right\}
\right]+const,
\end{align}
where $const$ deos not depend on $\gamma$ or $\theta$, and the last equality is due to the following lemma:
\\[-3mm]
\begin{lemma}
Assume an arbitrary vector $\boldsymbol{b}\in\mathbb{R}^d$ for $d\in\mathbb{N}$, and also let $\mathcal{F}\triangleq\left\{1,\ldots,d\right\}$. Then the following relation holds for all $\lambda\in\mathbb{R}\cup\left\{\pm\infty\right\}$:
\begin{equation}
\softmin_{i\in\mathcal{F}}^{\left(\lambda\right)}\left(b_1,\ldots,b_d\right)=
\inf_{\boldsymbol{q}\in M\left(\mathcal{F}\right)}~
\boldsymbol{q}^T\boldsymbol{b}+\frac{1}{\lambda}\mathbb{H}\left(\boldsymbol{q}\right)-
\frac{1}{\lambda}\log d,
\end{equation}
where $\mathbb{H}\left(\cdot\right)$ denotes the Shannon entropy of distribution $\boldsymbol{q}$ over $\mathcal{F}$.
\label{lemma:softmin}
\end{lemma}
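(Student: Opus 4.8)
The plan is to recognize this identity as the finite–dimensional Gibbs variational principle: the quantity $\inf_{\boldsymbol q\in M(\mathcal F)}\bigl(\boldsymbol q^{T}\boldsymbol b+\tfrac1\lambda\mathbb H(\boldsymbol q)\bigr)$ is, up to the additive constant $\tfrac1\lambda\log d$, exactly the log-sum-exp / soft-min of $\boldsymbol b$, because log-sum-exp is the convex conjugate of negative Shannon entropy on the simplex. Concretely, I would fix $\boldsymbol b\in\mathbb R^{d}$ and, for finite $\lambda\neq0$, set $F_{\lambda}(\boldsymbol q)\triangleq\boldsymbol q^{T}\boldsymbol b+\tfrac1\lambda\mathbb H(\boldsymbol q)$ on the simplex $M(\mathcal F)=\{\boldsymbol q\in\mathbb R_{\ge0}^{d}:\sum_i q_i=1\}$. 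Since $\mathbb H$ is strictly concave and $\boldsymbol q\mapsto\boldsymbol q^{T}\boldsymbol b$ is linear, $F_{\lambda}$ is strictly convex for $\lambda<0$; I would carry out the argument in this case --- the one actually invoked in Theorem~\ref{thm:sslDual} --- and then recover the remaining values of $\lambda$ by a limiting argument.

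The core computation is a single Lagrange-multiplier step. Introducing a multiplier $\nu$ for the constraint $\sum_i q_i=1$ and setting $\partial_{q_i}\bigl(F_{\lambda}(\boldsymbol q)-\nu\sum_i q_i\bigr)=0$ gives $b_i-\tfrac1\lambda(\log q_i+1)=\nu$ for every $i$, hence $q_i\propto e^{\lambda b_i}$; normalising yields the Gibbs point $q_i^{\star}=e^{\lambda b_i}\big/\sum_{j}e^{\lambda b_j}$, which lies in the relative interior of the simplex, so the facets $\{q_i=0\}$ need not be examined separately. Strict convexity of $F_{\lambda}$ for $\lambda<0$ makes $\boldsymbol q^{\star}$ the unique global minimiser. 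Substituting $\log q_i^{\star}=\lambda b_i-\log\sum_j e^{\lambda b_j}$ into $F_{\lambda}(\boldsymbol q^{\star})$, the two copies of $\sum_i q_i^{\star}b_i$ cancel and leave $F_{\lambda}(\boldsymbol q^{\star})=\tfrac1\lambda\log\sum_{j}e^{\lambda b_j}$. Writing $\tfrac1\lambda\log\sum_j e^{\lambda b_j}=\tfrac1\lambda\log\bigl(\tfrac1d\sum_j e^{\lambda b_j}\bigr)+\tfrac1\lambda\log d=\softmin^{(\lambda)}_{i\in\mathcal F}(\boldsymbol b)+\tfrac1\lambda\log d$ and rearranging gives the claimed equality for all finite $\lambda<0$.

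Finally I would close out the endpoint and complementary cases. Both sides of the identity are continuous in $\lambda$ on $\mathbb R\setminus\{0\}$, so the values $\lambda\in\{0,-\infty\}$ follow by continuity: using $\sum_j e^{\lambda b_j}=d+\lambda\sum_j b_j+o(\lambda)$ the right-hand side tends to $\tfrac1d\sum_j b_j=\softmin^{(0)}(\boldsymbol b)$ as $\lambda\to0$, and as $\lambda\to-\infty$ the entropy term drops out, $\inf_{\boldsymbol q}\boldsymbol q^{T}\boldsymbol b=\min_j b_j$, so the right-hand side tends to $\softmin^{(-\infty)}(\boldsymbol b)=\min_j b_j$. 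The case $\lambda>0$ follows from the same Lagrange computation, with $F_{\lambda}$ now strictly concave and $\boldsymbol q^{\star}$ its extremal point, and is not used elsewhere, so I would only sketch it. The one genuinely delicate point is bookkeeping: tracking the sign of $\lambda$ when arguing that $\boldsymbol q^{\star}$ realises the relevant extremum, and reading the formula at $\lambda\in\{0,\pm\infty\}$ as the appropriate limit rather than a literal evaluation.
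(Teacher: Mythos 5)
Your argument is, at bottom, the same as the paper's: both proofs identify the Gibbs distribution $q^{\star}_i\propto e^{\lambda b_i}$ as the optimizer and substitute it back to obtain $\frac{1}{\lambda}\log\bigl(\frac{1}{d}\sum_i e^{\lambda b_i}\bigr)$. The only mechanical difference is that the paper reaches the optimizer by an algebraic completion, rewriting $\boldsymbol{q}^{T}\boldsymbol{b}+\frac{1}{\lambda}\mathbb{H}(\boldsymbol{q})-\frac{1}{\lambda}\log d$ as $-\frac{1}{\lambda}\mathcal{D}_{\mathrm{KL}}\bigl(\boldsymbol{q}\,\Vert\,\boldsymbol{q}^{\star}\bigr)+\frac{1}{\lambda}\log\bigl(\frac{1}{d}\sum_i e^{\lambda b_i}\bigr)$ and using nonnegativity of the KL divergence, whereas you use Lagrange stationarity together with strict convexity for $\lambda<0$. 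Your handling of $\lambda<0$ and of the limiting values $\lambda\in\{0,-\infty\}$ is correct, and that is precisely the regime invoked in Theorem \ref{thm:sslDual}.

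The one step that would fail is your closing claim that the case $\lambda>0$ ``follows from the same Lagrange computation, with $F_{\lambda}$ now strictly concave and $\boldsymbol{q}^{\star}$ its extremal point.'' When $\lambda>0$ the objective is strictly concave, so its infimum over the simplex is attained at a vertex and equals $\min_i b_i-\frac{1}{\lambda}\log d$, while the interior stationary point $\boldsymbol{q}^{\star}$ is the \emph{maximizer}; consequently the stated identity, with an infimum, is false for $\lambda>0$ and at $\lambda=+\infty$ (e.g. $d=2$, $\boldsymbol{b}=(0,1)$, $\lambda=1$ gives left-hand side $\log\frac{1+e}{2}>0$ but infimum $-\log 2<0$), and it holds there only with the infimum replaced by a supremum. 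To be fair, the paper's own proof shares this blind spot --- it asserts the Gibbs point solves the problem ``regardless of the sign of $\lambda$'' --- and nothing downstream uses the lemma for $\lambda>0$, but you should not present the concave case as following from the same computation; the honest statement is that your argument proves the lemma for $\lambda\le 0$ (including $-\infty$, and $\lambda=0$ by limits) and that the positive-$\lambda$ version requires flipping $\inf$ to $\sup$.
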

\begin{proof}
The main idea is to replace the term $\boldsymbol{q}^T\boldsymbol{b}$ with
\begin{equation}
\boldsymbol{q}^T\boldsymbol{b}=
\sum_{i\in\mathcal{F}}q_ib_i=\frac{1}{\lambda}\sum_{i\in\mathcal{F}}q_i\log e^{\lambda b_i}.
\end{equation}
Also, note that $\frac{1}{\lambda}\mathbb{H}\left(\boldsymbol{q}\right)-\frac{1}{\lambda}\log d=\frac{-1}{\lambda}\mathcal{D}_{\mathrm{KL}}\left(\boldsymbol{q}\Vert\mathcal{U}\right)$, where
$\mathcal{D}_{\mathrm{KL}}$ is the Kullback–Leibler divergence between two probability measures and
$\mathcal{U}\in M\left(\mathcal{F}\right)$ denotes the uniform measure on $\mathcal{F}$. As a result, the overall objective function can be rewritten as
\begin{align*}
&\boldsymbol{q}^T\boldsymbol{b}-\frac{1}{\lambda}\mathcal{D}_{\mathrm{KL}}\left(\boldsymbol{q}\Vert\mathcal{U}\right)
\\
=~&
-\frac{1}{\lambda}\mathcal{D}_{\mathrm{KL}}\left(\boldsymbol{q}\Vert\mathcal{U}\right)
+\frac{1}{\lambda}\sum_{i\in\mathcal{F}}q_i\log e^{\lambda b_i}
\\
=~&
-\frac{1}{\lambda}\sum_{i\in\mathcal{F}}q_i\log\left(dq_i\right)
+\frac{1}{\lambda}\sum_{i\in\mathcal{F}}q_i\log e^{\lambda b_i}
\\
=~&
-\frac{1}{\lambda}\sum_{i\in\mathcal{F}}q_i\log\left(\frac{q_i}{\frac{1}{d}e^{\lambda b_i}}\right)
=
-\frac{1}{\lambda}\sum_{i\in\mathcal{F}}q_i\log\left(\frac{q_i}{\frac{\alpha}{d}e^{\lambda b_i}}\right)
-\frac{1}{\lambda}\log\alpha,
\end{align*}
for all $\alpha>0$. Then, it can be readily verified that by setting $\alpha^{-1}\triangleq \frac{1}{d}\sum_{i\in\mathcal{F}}e^{\lambda b_i}$, the optimization problem in lemma becomes
\begin{equation}
\inf_{\boldsymbol{q}\in M\left(\mathcal{F}\right)}~-\frac{1}{\lambda}\mathcal{D}_{\mathrm{KL}}\left(
q_i\big\Vert \frac{\alpha}{d}e^{\lambda b_i}
\right)+
\frac{1}{\lambda}\log\left(\frac{1}{d}\sum_{i\in\mathcal{F}}e^{\lambda b_i}\right),
\end{equation}
whose solution always happens to be $q^*_i=\frac{\alpha}{d}e^{-b_i/\lambda}$, regardless of the sign of $\lambda$. Therefore, the solution of the primary optimization problem in lemma would be
\begin{equation}
\frac{1}{\lambda}\log\left(\frac{1}{d}\sum_{i\in\mathcal{F}}e^{\lambda b_i}\right)=
\softmin_{i\in\mathcal{F}}^{\left(\lambda\right)}\left(b_1,\ldots,b_d\right),
\end{equation}
which completes the proof.
\end{proof}


According to the duality relation between $\gamma$ and $\epsilon$, the minimization over $\gamma$ is not necessary in almost all practical situations, where the same methodologies for evaluating a {\it {practically good}} value for $\epsilon$, such as cross-validation, can be used for $\gamma$ as well.
\end{proof}

\begin{proof}[Proof of Theorem \ref{thm:sgdConv}]
The proof is based on a number of techniques used in \cite{ghadimi2012optimal}, and can be considered as a generalization of Theorem $2$ of \cite{sinha2018certifying} for the semi-supervised settings. Similarly, let us define the following set of Lipschitz constants, based on the smoothness constraints assumed in Theorem \ref{thm:sgdConv}:
\begin{align}
\label{eq:lipConstantLoss}
&\left\Vert
\nabla_{\theta}
\ell\left(\boldsymbol{z};\theta\right)-
\nabla_{\theta}
\ell\left(\boldsymbol{z};\theta'\right)
\right\Vert_*\leq
L_{\theta\theta}
\left\Vert
\theta-\theta'
\right\Vert,
~
&\left\Vert
\nabla_{\theta}
\ell\left(\boldsymbol{z};\theta\right)-
\nabla_{\theta}
\ell\left(\boldsymbol{z}';\theta\right)
\right\Vert_*\leq
L_{\theta\boldsymbol{z}}
\left\Vert
\boldsymbol{z}-\boldsymbol{z}'
\right\Vert,
\nonumber\\
&\left\Vert
\nabla_{\boldsymbol{z}}
\ell\left(\boldsymbol{z};\theta\right)-
\nabla_{\boldsymbol{z}}
\ell\left(\boldsymbol{z};\theta'\right)
\right\Vert_*\leq
L_{\boldsymbol{z}\theta}
\left\Vert
\theta-\theta'
\right\Vert,
~
&\left\Vert
\nabla_{\boldsymbol{z}}
\ell\left(\boldsymbol{z};\theta\right)-
\nabla_{\boldsymbol{z}}
\ell\left(\boldsymbol{z}';\theta\right)
\right\Vert_*\leq
L_{\boldsymbol{z}\boldsymbol{z}}
\left\Vert
\boldsymbol{z}-\boldsymbol{z}'
\right\Vert,
\nonumber
\end{align}
where $\left\{L_{\theta\theta},L_{\theta\boldsymbol{z}},L_{\boldsymbol{z}\theta},L_{\boldsymbol{z}\boldsymbol{z}}\right\}$ are a set of Lipschitz constants, $\left\Vert\cdot\right\Vert$ can be any valid norm (generally different norms should be used for $\mathcal{Z}$ and $\Theta$) and $\left\Vert\cdot\right\Vert_*$ denotes the corresponding dual norm(s). Also, the inequalities should hold for all $\boldsymbol{z},\boldsymbol{z}'\in\mathcal{Z}$ and all $\theta,\theta'\in\Theta$. 

In our case, i.e. a semi-supervised setting, one also needs to show that
$\nabla_{\theta}\softmin_{y\in\mathcal{Y}}^{\left(\lambda\right)}\left\{\phi_{\gamma}\left(\boldsymbol{z};\cdot\right)\right\}$ is Lipschitz with respect to $\theta$, for all $\boldsymbol{z}\in\mathcal{Z}$. Before that, Lemma \ref{lemma:supervisedLip} shows that under the above-mentioned constraints on the Lipschitz-ness of gradients of $\ell$, $\phi_{\gamma}\left(\boldsymbol{z};\theta\right)$ also has Lipschitz gradients.
\\[-3mm]
\begin{lemma}
Assume $\ell:\mathcal{Z}\times\Theta\rightarrow\mathbb{R}_{\ge0}$ is smooth and universally differentiable w.r.t. its input arguments. Also assume $\ell$ has Lipschitz gradients with constants $\left\{L_{\theta\theta},L_{\theta\boldsymbol{z}},L_{\boldsymbol{z}\theta},L_{\boldsymbol{z}\boldsymbol{z}}\right\}$, for any fixed norm $\left\Vert\cdot\right\Vert$. 
Also, assume a transportation cost $c$, which has the properties of Lemma \ref{lemma:innerMAxConcave}.
Then, the following Lipschit-ness property holds for gradients of $\phi_{\gamma}\left(\boldsymbol{z};\theta\right)=\sup_{\boldsymbol{z}'\in\mathcal{Z}}\ell\left(\boldsymbol{z}';\theta\right)-\gamma c\left(\boldsymbol{z}',\boldsymbol{z}\right)$:
\begin{equation}
\left\Vert
\nabla_{\theta}\phi_{\gamma}\left(\boldsymbol{z};\theta\right)
-
\nabla_{\theta}\phi_{\gamma}\left(\boldsymbol{z};\theta'\right)
\right\Vert_*
\leq
\left(L_{\theta\theta} + \frac{L_{z\theta}L_{\theta z}}{\gamma-L_{zz}}\right)
\left\Vert
\theta - \theta'
\right\Vert~,\quad
\forall\theta,\theta'\in\Theta,
\end{equation}
for all $\gamma> L_{zz}$.
\label{lemma:supervisedLip}
\end{lemma}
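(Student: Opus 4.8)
The plan is to reduce the claim, via a parametric envelope (Danskin) argument, to a sensitivity estimate for the inner maximizer. By Lemma~\ref{lemma:innerMAxConcave}, for every fixed $\boldsymbol{z}\in\mathcal{Z}$ and $\theta\in\Theta$ the map $\boldsymbol{z}'\mapsto\ell(\boldsymbol{z}';\theta)-\gamma c(\boldsymbol{z}',\boldsymbol{z})$ is $(\gamma-L_{zz})$-strongly concave whenever $\gamma>L_{zz}$, hence it has a unique maximizer, which I denote $\boldsymbol{z}^*(\boldsymbol{z};\theta)$. Applying the modified Danskin theorem for parametric maxima (the same one used for Lemma~\ref{lemma:SSLDROderiv}, cf.\ \cite{bonnans2013perturbation}) gives that $\phi_\gamma(\boldsymbol{z};\cdot)$ is differentiable with $\nabla_\theta\phi_\gamma(\boldsymbol{z};\theta)=\nabla_\theta\ell(\boldsymbol{z}^*(\boldsymbol{z};\theta);\theta)$. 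So it suffices to show that $\theta\mapsto\nabla_\theta\ell(\boldsymbol{z}^*(\boldsymbol{z};\theta);\theta)$ is Lipschitz with the advertised constant, for each fixed $\boldsymbol{z}$.

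Next I would split the increment by the triangle inequality, inserting the hybrid point $\nabla_\theta\ell(\boldsymbol{z}^*(\boldsymbol{z};\theta');\theta)$ and using the two mixed-Lipschitz hypotheses on $\nabla_\theta\ell$:
\[
\bigl\|\nabla_\theta\ell(\boldsymbol{z}^*(\boldsymbol{z};\theta);\theta)-\nabla_\theta\ell(\boldsymbol{z}^*(\boldsymbol{z};\theta');\theta')\bigr\|_*
\le L_{\theta z}\,\bigl\|\boldsymbol{z}^*(\boldsymbol{z};\theta)-\boldsymbol{z}^*(\boldsymbol{z};\theta')\bigr\|+L_{\theta\theta}\,\|\theta-\theta'\|.
\]
It then only remains to bound $\|\boldsymbol{z}^*(\boldsymbol{z};\theta)-\boldsymbol{z}^*(\boldsymbol{z};\theta')\|$ by $\frac{L_{z\theta}}{\gamma-L_{zz}}\|\theta-\theta'\|$, which plugged into the display above produces exactly $L_{\theta\theta}+\frac{L_{z\theta}L_{\theta z}}{\gamma-L_{zz}}$.

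The heart of the argument is this Lipschitz bound on the maximizer map. Write $g_\theta(\boldsymbol{z}')\triangleq\ell(\boldsymbol{z}';\theta)-\gamma c(\boldsymbol{z}',\boldsymbol{z})$. Strong $(\gamma-L_{zz})$-concavity of $g_\theta$ yields the monotonicity estimate $\langle\nabla_{\boldsymbol{z}'}g_\theta(\boldsymbol{a})-\nabla_{\boldsymbol{z}'}g_\theta(\boldsymbol{b}),\boldsymbol{a}-\boldsymbol{b}\rangle\le-(\gamma-L_{zz})\|\boldsymbol{a}-\boldsymbol{b}\|^2$. Taking $\boldsymbol{a}=\boldsymbol{z}^*(\boldsymbol{z};\theta)$, $\boldsymbol{b}=\boldsymbol{z}^*(\boldsymbol{z};\theta')$, using the first-order optimality conditions $\nabla_{\boldsymbol{z}'}g_\theta(\boldsymbol{z}^*(\boldsymbol{z};\theta))=0$ and $\nabla_{\boldsymbol{z}'}g_{\theta'}(\boldsymbol{z}^*(\boldsymbol{z};\theta'))=0$, and noting that the $\gamma c$ term cancels in $\nabla_{\boldsymbol{z}'}g_\theta-\nabla_{\boldsymbol{z}'}g_{\theta'}$ so that $\nabla_{\boldsymbol{z}'}g_\theta(\boldsymbol{z}^*(\boldsymbol{z};\theta'))=\nabla_{\boldsymbol{z}'}\ell(\boldsymbol{z}^*(\boldsymbol{z};\theta');\theta)-\nabla_{\boldsymbol{z}'}\ell(\boldsymbol{z}^*(\boldsymbol{z};\theta');\theta')$, one obtains
\[
(\gamma-L_{zz})\bigl\|\boldsymbol{z}^*(\boldsymbol{z};\theta)-\boldsymbol{z}^*(\boldsymbol{z};\theta')\bigr\|^2
\le\bigl\langle\nabla_{\boldsymbol{z}'}g_\theta(\boldsymbol{z}^*(\boldsymbol{z};\theta')),\,\boldsymbol{z}^*(\boldsymbol{z};\theta)-\boldsymbol{z}^*(\boldsymbol{z};\theta')\bigr\rangle
\le L_{z\theta}\,\|\theta-\theta'\|\,\bigl\|\boldsymbol{z}^*(\boldsymbol{z};\theta)-\boldsymbol{z}^*(\boldsymbol{z};\theta')\bigr\|,
\]
where the last step is Cauchy--Schwarz together with the $L_{z\theta}$-Lipschitz hypothesis on $\nabla_{\boldsymbol{z}}\ell$ in $\theta$. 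Dividing through gives the claimed Lipschitz constant for $\boldsymbol{z}^*(\boldsymbol{z};\cdot)$, and combining with the triangle-inequality step finishes the proof.

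The main obstacle I anticipate is purely technical: justifying the first-order optimality condition when $\mathcal{Z}$ is not all of $\mathbb{R}^d$ and the maximizer of $g_\theta$ sits on the boundary. In that case $\nabla_{\boldsymbol{z}'}g_\theta(\boldsymbol{z}^*)=0$ should be replaced by the variational inequality $\langle\nabla_{\boldsymbol{z}'}g_\theta(\boldsymbol{z}^*),\boldsymbol{z}'-\boldsymbol{z}^*\rangle\le0$ for all $\boldsymbol{z}'\in\mathcal{Z}$; adding the two such inequalities at $\boldsymbol{z}^*(\boldsymbol{z};\theta)$ and $\boldsymbol{z}^*(\boldsymbol{z};\theta')$ (tested against each other) reproduces exactly the same chain after the $\gamma c$ terms cancel, so the estimate still closes. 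One should also double-check measurability/continuity of $\theta\mapsto\boldsymbol{z}^*(\boldsymbol{z};\theta)$ before invoking Danskin, but this follows from the uniqueness of the maximizer and a standard argmax-continuity argument under the stated smoothness. Everything else is bookkeeping with the four Lipschitz constants.
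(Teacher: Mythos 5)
Your argument is correct, and it is essentially the same proof as the one the paper relies on: the paper does not prove Lemma \ref{lemma:supervisedLip} itself but defers to Lemma 1 of \cite{sinha2018certifying}, whose proof is exactly your route --- Danskin/envelope differentiability $\nabla_\theta\phi_\gamma(\boldsymbol{z};\theta)=\nabla_\theta\ell(\boldsymbol{z}^*(\boldsymbol{z};\theta);\theta)$, a triangle-inequality split using $L_{\theta\theta}$ and $L_{\theta z}$, and the strong-concavity/optimality (variational-inequality) estimate showing the maximizer map is $\tfrac{L_{z\theta}}{\gamma-L_{zz}}$-Lipschitz in $\theta$. Your handling of boundary maximizers via the variational inequality and the continuity of the argmax are the right technical caveats, so no gap remains.
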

For proof of Lemma \ref{lemma:supervisedLip}, see Lemma $1$ of \cite{sinha2018certifying}. Based on this result, the following lemma provides Lipschitz constants for the $\softmin$ operator over a finite number of $\phi_{\gamma}\left(\cdot;\cdot\right)$ functions, for any $\lambda\in\mathbb{R}$.
\\[-3mm]
\begin{lemma}
For a feature-label space $\mathcal{Z}=\mathcal{X}\times\mathcal{Y}$, assume loss function $\ell:\mathcal{Z}\times\Theta\rightarrow\mathbb{R}_{\ge0}$, transportation cost $c$ and the resulting adversarial loss $\phi_{\gamma}\left(\cdot;\cdot\right):\mathcal{Z}\times\Theta\rightarrow\mathbb{R}$ with $\gamma>L_{zz}$, such that all satisfy the constraints of Lemma \ref{lemma:supervisedLip}. Also, assume there exists $\sigma\ge0$ such that $\left\Vert\nabla_{\theta}\ell\left(\boldsymbol{z};\theta\right)\right\Vert\leq\sigma$ for all $\theta\in\Theta$. Then, for all $\lambda\in\mathbb{R}$, the following Lipschitz-ness property holds:
\begin{equation}
\left\Vert
\nabla_{\theta}\softmin^{\left(\lambda\right)}_{y\in\mathcal{Y}}\left\{
\phi_{\gamma}\left(\boldsymbol{Z};\theta\right)
\right\}
-
\nabla_{\theta}\softmin^{\left(\lambda\right)}_{y\in\mathcal{Y}}\left\{
\phi_{\gamma}\left(\boldsymbol{Z};\theta'\right)
\right\}
\right\Vert_*
\leq
\left(L_{\theta\theta} + \frac{L_{z\theta}L_{\theta z}}{\gamma-L_{zz}}
+{2\sigma^2}{\left\vert\lambda\right\vert}\left\vert\mathcal{Y}\right\vert
\right)
\left\Vert
\theta - \theta'
\right\Vert,
\end{equation}
for all $\boldsymbol{Z}\in\mathcal{Z}$ and $\theta,\theta'\in\Theta$.
\label{lemma:softminLip}
\end{lemma}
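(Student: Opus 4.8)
The plan is to reduce the statement to two ingredients: (i) each $\phi_{\gamma}\left(\boldsymbol{X},y;\cdot\right)$ is continuously differentiable with a gradient that is uniformly bounded by $\sigma$ and $L_{\phi}$-Lipschitz in $\theta$, where $L_{\phi}\triangleq L_{\theta\theta}+\frac{L_{z\theta}L_{\theta z}}{\gamma-L_{zz}}$; and (ii) the softmax weights induced by $\softmin^{\left(\lambda\right)}$ vary Lipschitz-continuously in $\theta$. First I would invoke Lemma \ref{lemma:innerMAxConcave} (using $\gamma>L_{zz}$) to get a unique inner maximizer $\boldsymbol{z}^{*}\left(y;\theta\right)$, then the Danskin/envelope argument already used for Lemma \ref{lemma:SSLDROderiv} to write $\nabla_{\theta}\phi_{\gamma}\left(\boldsymbol{X},y;\theta\right)=\nabla_{\theta}\ell\left(\boldsymbol{z}^{*}\left(y;\theta\right);\theta\right)$, so that $\left\Vert\nabla_{\theta}\phi_{\gamma}\left(\boldsymbol{X},y;\theta\right)\right\Vert_{*}\le\sigma$ and, by Lemma \ref{lemma:supervisedLip}, this gradient is $L_{\phi}$-Lipschitz. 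Since $\softmin^{\left(\lambda\right)}$ is the smooth log--sum--exp map of Definition \ref{def:softmin} for $\left\vert\lambda\right\vert<\infty$, the chain rule then gives $\nabla_{\theta}\softmin^{\left(\lambda\right)}_{y\in\mathcal{Y}}\{\phi_{\gamma}\left(\boldsymbol{Z};\theta\right)\}=\sum_{y\in\mathcal{Y}}p_{y}\left(\theta\right)\nabla_{\theta}\phi_{\gamma}\left(\boldsymbol{X},y;\theta\right)$, where $p_{y}\left(\theta\right)\propto\exp\left(\lambda\phi_{\gamma}\left(\boldsymbol{X},y;\theta\right)\right)$ is a probability vector over $\mathcal{Y}$ (only the feature part of $\boldsymbol{Z}$ enters, since the label is minimized out).

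Next I would split the increment of this gradient between $\theta$ and $\theta'$ as $\sum_{y}p_{y}\left(\theta\right)\left[\nabla_{\theta}\phi_{\gamma}\left(\cdot;\theta\right)-\nabla_{\theta}\phi_{\gamma}\left(\cdot;\theta'\right)\right]$ plus $\sum_{y}\left[p_{y}\left(\theta\right)-p_{y}\left(\theta'\right)\right]\nabla_{\theta}\phi_{\gamma}\left(\cdot;\theta'\right)$. Because $p\left(\theta\right)$ is a convex-combination vector, the triangle inequality and Lemma \ref{lemma:supervisedLip} bound the first sum by $L_{\phi}\left\Vert\theta-\theta'\right\Vert$. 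For the second sum I would use $\left\Vert\nabla_{\theta}\phi_{\gamma}\left(\cdot;\theta'\right)\right\Vert_{*}\le\sigma$, which reduces the task to bounding the total-variation movement $\sum_{y}\left\vert p_{y}\left(\theta\right)-p_{y}\left(\theta'\right)\right\vert$ of the softmax weights.

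That last estimate is the only genuinely new computation: along the segment $\theta_{t}=\left(1-t\right)\theta'+t\theta$ I would differentiate $p_{y}\left(\theta_{t}\right)$ using $\partial p_{y}/\partial q_{j}=\lambda p_{y}\left(\delta_{yj}-p_{j}\right)$ with $q_{j}=\phi_{\gamma}\left(\boldsymbol{X},j;\theta_{t}\right)$ and $\left\Vert\nabla_{\theta}\phi_{\gamma}\left(\cdot;\theta_{t}\right)\right\Vert_{*}\le\sigma$, giving $\bigl\Vert\tfrac{d}{dt}p_{y}\left(\theta_{t}\right)\bigr\Vert\le 2\sigma\left\vert\lambda\right\vert p_{y}\left(\theta_{t}\right)\left\Vert\theta-\theta'\right\Vert\le 2\sigma\left\vert\lambda\right\vert\left\Vert\theta-\theta'\right\Vert$; integrating in $t$ and summing over $y\in\mathcal{Y}$ yields $\sum_{y}\left\vert p_{y}\left(\theta\right)-p_{y}\left(\theta'\right)\right\vert\le 2\sigma\left\vert\lambda\right\vert\left\vert\mathcal{Y}\right\vert\left\Vert\theta-\theta'\right\Vert$. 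Hence the second sum is at most $2\sigma^{2}\left\vert\lambda\right\vert\left\vert\mathcal{Y}\right\vert\left\Vert\theta-\theta'\right\Vert$, and adding the two bounds reproduces the constant in the statement. (The $\left\vert\mathcal{Y}\right\vert$ factor comes from the crude per-label estimate $p_{y}\le1$ and could be dropped by keeping $\sum_{y}p_{y}=1$ in the last step.)

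I expect the main obstacle to be the differentiability/envelope step rather than the inequalities: one must carefully justify that $\phi_{\gamma}\left(\boldsymbol{X},y;\cdot\right)$ is $C^{1}$ with $\nabla_{\theta}\phi_{\gamma}=\nabla_{\theta}\ell$ evaluated at the (unique, by strong concavity) inner maximizer, so that the chain rule through the smooth $\softmin^{\left(\lambda\right)}$ is legitimate; this rests on Lemma \ref{lemma:innerMAxConcave} together with the joint smoothness of $\ell$ assumed in Lemma \ref{lemma:supervisedLip}, and also requires keeping the norm/dual-norm bookkeeping consistent between the hypothesis $\left\Vert\nabla_{\theta}\ell\right\Vert\le\sigma$ and the Lipschitz constants of Lemma \ref{lemma:supervisedLip}. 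Once that is in place, the splitting argument and the softmax-weight bound are routine.
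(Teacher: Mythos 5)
Your proposal is correct and follows essentially the same route as the paper's proof: the same split of $\nabla_{\theta}\softmin^{(\lambda)}$ into a convex combination of the $\nabla_{\theta}\phi_{\gamma}$'s plus a softmax-weight perturbation term, the same Danskin-based bound $\Vert\nabla_{\theta}\phi_{\gamma}\Vert_{*}\le\sigma$, and the same per-label Lipschitz estimate $2\sigma\vert\lambda\vert$ for the weights (your integration along the segment is just the mean value theorem the paper invokes), yielding the identical constant $L_{\theta\theta}+\frac{L_{z\theta}L_{\theta z}}{\gamma-L_{zz}}+2\sigma^{2}\vert\lambda\vert\vert\mathcal{Y}\vert$. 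Your side remark that the $\vert\mathcal{Y}\vert$ factor is an artifact of the crude $p_{y}\le1$ bound is also accurate, though the paper keeps the same crude estimate.
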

In order to avoid discontinuity in the proof, the proof of Lemma \ref{lemma:softminLip} is presented in Appendix \ref{sec:appendix:lemma} instead of here. Also, let $B\triangleq\frac{1}{2}\left(L_{\theta\theta} + \frac{L_{z\theta}L_{\theta z}}{\gamma-L_{zz}}\right)$, where $B$ represents one of the constants mentioned in Theorem \ref{thm:sgdConv}. 

The last lemma which is needed to finalize the proof of Theorem \ref{thm:sgdConv} aims to bound the maximum discrepancy that one might observe, given that the inner maximization in \eqref{eq:SSLDROinnermax} (corresponds to line $6$ of Algorithm \ref{alg:SSLDROsgd}) is solved up to an approximation error of $\delta>0$.
\\[-3mm]
\begin{lemma}
\label{lemma:sinhaDeltaApprox}
Assume $\hat{\boldsymbol{z}}^*\in\mathcal{Z}$ to be a $\delta$-approximate maximizer of \eqref{eq:SSLDROinnermax} for the input $\boldsymbol{z}_0\in\mathcal{Z}$, loss function $\ell$, and transportation cost $c$. Let the consequent adversarial loss function $\phi_{\gamma}$ to satisfy all the constraints mentioned in Lemma \ref{lemma:supervisedLip} in addition to $\gamma>L_{zz}$. Then, the following upper-bound holds for all $\boldsymbol{z}_0\in\mathcal{Z}$:
\begin{equation}
\left\Vert
\nabla_{\theta}
\phi_{\gamma}\left(\boldsymbol{z}_0;\theta\right)
-
\nabla_{\theta}
\ell\left(\hat{\boldsymbol{z}}^*;\theta\right)
\right\Vert^2_*
\leq
\frac{L_{z\theta}L_{\theta z}}{\gamma-L_{zz}}\delta.
\end{equation}
\end{lemma}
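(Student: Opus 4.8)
The plan is to chain together three ingredients already available: Danskin's theorem (as used in the proof of Theorem~\ref{thm:sgdConv}) to identify $\nabla_\theta\phi_\gamma$, the cross-Lipschitz bound on $\nabla_\theta\ell$, and the strong concavity of the inner maximization from Lemma~\ref{lemma:innerMAxConcave}.

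\textbf{Step 1 (Danskin).} Fix $\boldsymbol{z}_0,\theta$ and write $f(\boldsymbol{z}')\triangleq\ell\left(\boldsymbol{z}';\theta\right)-\gamma c\left(\boldsymbol{z}',\boldsymbol{z}_0\right)$, with $\boldsymbol{z}^*\triangleq\argmax_{\boldsymbol{z}'\in\mathcal{Z}}f\left(\boldsymbol{z}'\right)$. Since $\gamma>L_{zz}$, Lemma~\ref{lemma:innerMAxConcave} makes $f$ $\left(\gamma-L_{zz}\right)$-strongly concave, so $\boldsymbol{z}^*$ is the unique maximizer; the (minimax) Danskin theorem then gives $\nabla_\theta\phi_\gamma\left(\boldsymbol{z}_0;\theta\right)=\nabla_\theta\ell\left(\boldsymbol{z}^*;\theta\right)$. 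Note the strong concavity is uniform in $\left(\boldsymbol{z}_0,\theta\right)$ because the Lipschitz constants are assumed global on $\mathcal{Z}\times\Theta$.

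\textbf{Step 2 (reduce to a distance bound).} Consequently $\nabla_\theta\phi_\gamma\left(\boldsymbol{z}_0;\theta\right)-\nabla_\theta\ell\left(\hat{\boldsymbol{z}}^*;\theta\right)=\nabla_\theta\ell\left(\boldsymbol{z}^*;\theta\right)-\nabla_\theta\ell\left(\hat{\boldsymbol{z}}^*;\theta\right)$, so the cross-Lipschitz inequality $\left\Vert\nabla_\theta\ell\left(\boldsymbol{z};\theta\right)-\nabla_\theta\ell\left(\boldsymbol{z}';\theta\right)\right\Vert_*\leq L_{\theta z}\left\Vert\boldsymbol{z}-\boldsymbol{z}'\right\Vert$ (from the Lipschitz-constant list in the proof of Theorem~\ref{thm:sgdConv}) yields $\left\Vert\nabla_\theta\phi_\gamma\left(\boldsymbol{z}_0;\theta\right)-\nabla_\theta\ell\left(\hat{\boldsymbol{z}}^*;\theta\right)\right\Vert_*^2\leq L_{\theta z}^2\left\Vert\boldsymbol{z}^*-\hat{\boldsymbol{z}}^*\right\Vert^2$.

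\textbf{Step 3 (strong concavity controls the distance).} Because $\hat{\boldsymbol{z}}^*$ is a $\delta$-approximate maximizer, the objective gap $f\left(\boldsymbol{z}^*\right)-f\left(\hat{\boldsymbol{z}}^*\right)$ is controlled by $\delta$; pairing this with the standard strong-concavity estimate $f\left(\boldsymbol{z}^*\right)-f\left(\hat{\boldsymbol{z}}^*\right)\geq\tfrac{\gamma-L_{zz}}{2}\left\Vert\boldsymbol{z}^*-\hat{\boldsymbol{z}}^*\right\Vert^2$ (valid since $\nabla f\left(\boldsymbol{z}^*\right)=0$) gives $\left\Vert\boldsymbol{z}^*-\hat{\boldsymbol{z}}^*\right\Vert^2\lesssim\delta/\left(\gamma-L_{zz}\right)$, with the constant calibrated by the precise normalization of ``$\delta$-approximate''. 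Substituting into Step~2 and using that $L_{\theta z}$ and $L_{z\theta}$ are the operator norms of the mixed Hessian $\nabla_z\nabla_\theta\ell$ and of its adjoint with respect to dual norms --- hence equal, so $L_{\theta z}^2=L_{z\theta}L_{\theta z}$ --- produces the stated bound $\frac{L_{z\theta}L_{\theta z}}{\gamma-L_{zz}}\delta$.

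\textbf{Main obstacle.} Steps~1 and~2 are routine given the hypotheses; the delicate point is Step~3, namely making the strong-concavity argument land on exactly the claimed constant. This forces one to be careful about the convention adopted for a ``$\delta$-approximate maximizer'' (objective-value gap versus iterate distance) and about the norm bookkeeping that identifies $L_{\theta z}^2$ with $L_{z\theta}L_{\theta z}$; once those are pinned down, the inequality follows by direct substitution.
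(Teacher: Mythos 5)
Your proposal follows essentially the same route as the paper's proof: Danskin's theorem identifies $\nabla_{\theta}\phi_{\gamma}\left(\boldsymbol{z}_0;\theta\right)$ with $\nabla_{\theta}\ell$ evaluated at the exact maximizer, the $L_{\theta z}$-Lipschitzness of $\nabla_{\theta}\ell$ in $\boldsymbol{z}$ reduces the claim to bounding $\left\Vert\hat{\boldsymbol{z}}^*-\boldsymbol{z}^*\right\Vert$, and the $\left(\gamma-L_{zz}\right)$-strong concavity of the inner problem converts the $\delta$-approximation into that distance bound. The constant-calibration caveat you flag in Step~3 (the convention for ``$\delta$-approximate'' and identifying $L_{\theta z}^2$ with $L_{z\theta}L_{\theta z}$) is exactly the point the paper itself treats loosely, so your argument is as complete as the original.
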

Proof of Lemma \ref{lemma:sinhaDeltaApprox} is given in Appendix \ref{sec:appendix:lemma}. Also, Let $C\triangleq\frac{L_{z\theta}L_{\theta z}}{\gamma-L_{zz}}$, recalling $C$ as another constant mentioned in Theorem \ref{thm:sgdConv}.

Algorithm \ref{alg:SSLDROsgd} for a mini-batch size of $k=1$ picks one data-point randomly from $\boldsymbol{D}$ at each iteration. Also, data points at $\boldsymbol{D}$ are assumed to be drawn independently from an unknown but fixed distribution $P_0$. Therefore, one can consider a two-step data generation model in order to analyze the semi-supervised stochastic gradient descent as follows:
\begin{itemize}
\item
$\mathcal{O}$~({\it {Observation step}}): Draw a bi-categorical random variable (denoted as observation variable) $h\in\mathcal{H}\triangleq\left\{\mathrm{l},\mathrm{ul}\right\}$, with probabilities $n_{\mathrm{l}}/n$ and $n_{\mathrm{ul}}/n$ for labeled and unlabeled categories, respectively.  
\item
$\mathcal{G}$~({\it {Generation step}}): Conditioned on $h$, draw a sample from $P_0$ if $h=\mathrm{l}$, and from $P_{0,\boldsymbol{X}}$ if $h=\mathrm{ul}$.
\end{itemize}
Consider a coupled first-order {\it {Markov stochastic process}} defined as $\left(h_0,\theta_0\right),\ldots,\left(h_T,\theta_T\right)$, where $h_i$s denote the observation variables and $\theta_i$s are the consequent outputs of Algorithm \ref{alg:SSLDROsgd} after $T$ iterations. Here, $\theta_0$ can have any initial distribution over $\Theta$. Using the techniques reviewed in \cite{ghadimi2012optimal} (also similar to Theorem $2$ of \cite{sinha2018certifying}), the following result holds for for $1<t\leq T$:
\begin{align}
\mathbb{E}_{\mathcal{G}}\left\{
\hat{R}_{\mathrm{SSAR}}\left(\theta_{t+1};\boldsymbol{D}\right)-
\hat{R}_{\mathrm{SSAR}}\left(\theta_{t};\boldsymbol{D}\right)
\big\vert\theta_t,h_t
\right\}
\leq&
-\alpha\left(\frac{1}{2}-\alpha L_{h_t}\right)\left\Vert
\nabla_{\theta}
\hat{R}_{\mathrm{SSAR}}\left(\theta^t\vert\boldsymbol{D}\right)
\right\Vert^2_2
\nonumber \\
&+
\frac{1}{2}\left(\alpha+5\alpha^2L_{h_t}\right)C\delta+
\frac{1}{2}\alpha^2\sigma^2L_{h_t},
\label{eq:mohem}
\end{align}
where $\mathbb{E}_{\mathcal{G}}$ refers to expectation w.r.t. the randomness of dataset $\boldsymbol{D}$, and given that the information about each sample is labeled or not is known. Also, $L_h\in\mathbb{R}^{\mathcal{H}}_{\ge0}$ denotes the Lipschitz constants for the gradients (w.r.t. $\theta\in\Theta$) of the loss summands in \eqref{eq:SSLmainMin}. Based on Lemma \ref{lemma:softminLip}, we have
\begin{equation}
L_h\leq\left\{\begin{array}{ll}
2\left(B+{\sigma^2}{\left\vert\lambda\right\vert}\left\vert\mathcal{Y}\right\vert\right) ~& h=\mathrm{ul}
\\
2B ~& h=\mathrm{l}
\end{array}\right..
\end{equation}
Now, it should be noted that $\mathbb{E}_{\mathrm{total}}\left\{\cdot\right\}=\mathbb{E}_{\mathcal{O}}\left\{\mathbb{E}_{\mathcal{G}}\left\{\cdot\vert h\in\mathcal{H}\right\}\right\}$, where $\mathbb{E}_{\mathrm{total}}$ denotes the total expectation which is w.r.t. the dataset $\boldsymbol{D}$ whose samples are drawn i.i.d. from $P_0$ and also the randomness of SGD used in Algorithm \ref{alg:SSLDROsgd}. Also, due to the independence assumption on observing each label with probability $\eta$, we have
\begin{equation}
\mathbb{E}_\mathcal{O}\left\{L_{h_t}\right\}=2\left(B+{\bar{\eta}\sigma^2}{\left\vert\lambda\right\vert}\left\vert\mathcal{Y}\right\vert\right)~,\forall t.
\end{equation}
Combining the above arguments with \eqref{eq:mohem} directly leads us to the claims in Theorem \ref{thm:sgdConv} and completes the proof.
\end{proof}
\begin{thm2}[Convergence of hard decisions, $\lambda=\pm\infty$]
Consider the setting described in Theorem \ref{thm:sgdConv}, where $\ell$ is twice differentiable w.r.t. $\theta$ all over $\mathcal{Z}\times\Theta$.  Assume one sets $\lambda=+\infty$ or $\lambda=-\infty$. Also, assume step-size $\alpha$ and approximation interval $\delta$ in Algorithm \ref{alg:SSLDROsgd} can change during the iterations. Then, there exist a sequence of step-sizes $\alpha_1,\alpha_2,\ldots$ and a sequence of approximation intervals $\delta_1,\delta_2,\ldots$ for which Algorithm \ref{alg:SSLDROsgd} converges to a local minimizer of $\hat{R}_{\mathrm{SSAR}}\left(\theta;\boldsymbol{D}\right)$, as $T\rightarrow\infty$ where $T$ is the number of iterations. 
\label{thm:hardConv}
\end{thm2}
\begin{proof}
Problem setting for $\lambda=+\infty$ results into a minimax problem, i.e. minimizing over $\theta\in\Theta$ while maximizing over $y_i\in\mathcal{Y},~i\in\mathcal{I}_{\mathrm{ul}}$ for any given $\theta$. Thus, the solution is a local saddle point in $\Theta\times\mathcal{Y}^{\left\vert\mathcal{I}_{\mathrm{ul}}\right\vert}$. Convergence of combinatoric optimization schemes for such problems are already established (see \cite{loog2016contrastive} and \cite{dresher1961games}), and we avoid to repeat them here.

For the case of $\lambda=-\infty$, we show that by choosing sufficiently small values for $\alpha_i$ and $\delta_i$ for $i=1,2,\ldots$, the objective of the optimization always decreases, and thus convergence to a stable point is guaranteed. First, let us define
\begin{equation}
y^*_i\left(\theta\right)\triangleq\argmin_{y\in\mathcal{Y}}\phi_{\gamma}\left(\boldsymbol{X}_i,y;\theta\right),
\end{equation}
for $i\in\mathcal{I}_{\mathrm{ul}}$. Whenever there are more than one minimizers, one of them is chosen at random. Assume iteration steps $t_s$ and $t_f$ (with $t_s\leq t_f$), such that $y^*_i\left(\theta_t\right)$ for $t_s\leq t\leq t_f$ does not change for any $i\in\mathcal{I}_{\mathrm{ul}}$. Then, Algorithm \ref{alg:SSLDROsgd} for this period acts exactly like a fully-supervised Stochastic Gradient Descent method on the dataset $\left\{\left(\boldsymbol{X}_i,y_i\right),~i\in\mathcal{I}_{\mathrm{l}}\right\}\cup\left\{\left(\boldsymbol{X}_i,y^*_i\left(\theta_t\right)\right),~i\in\mathcal{I}_{\mathrm{ul}}\right\}$. Consider the set of Lipschitz constants from Theorem \ref{thm:sgdConv} (refer to its proof in Appendix \ref{sec:appendix:thm}), i.e. $\left\{L_{\theta\theta},L_{\theta\boldsymbol{z}},L_{\boldsymbol{z}\theta},L_{\boldsymbol{z}\boldsymbol{z}}\right\}$. Let
\begin{equation}
\delta_t\leq \frac{\gamma-L_{\boldsymbol{z}\boldsymbol{z}}}{2nL_{\theta\boldsymbol{z}}L_{\boldsymbol{z}\theta}}
\min_{i=1,2,\ldots,n}
\left\Vert
\nabla_{\theta}\phi_{\gamma}\left(\boldsymbol{Z}_i;\theta_{t-1}\right)
\right\Vert_2,
\end{equation}
where $\boldsymbol{Z}_i=\left(\boldsymbol{X}_i,y_i\right),~i\in\mathcal{I}_{\mathrm{l}}$ and $\boldsymbol{Z}_i=\left(\boldsymbol{X}_i,y^*_i\left(\theta_t\right)\right),~i\in\mathcal{I}_{\mathrm{ul}}$.
Also assume
\begin{equation}
\alpha_t\leq 
\min_{i=1,2,\ldots,n}~\inf_{\theta\in\Theta}~
\frac{4}{9}\left\vert\lambda^{-1}_{\max}\left\{
\nabla^2_{\theta\theta}\phi_{\gamma}\left(\boldsymbol{Z}_i;\theta\right)
\right\}
\right\vert,
\end{equation}
where $\nabla^2_{\theta\theta}$ indicates the Hessian matrix operator, and $\lambda_{\max}\left\{\cdot\right\}$ extracts the maximum eigenvalue. Then, it can be easily checked that $\phi_{\gamma}\left(\boldsymbol{Z}_i;\theta_{t}\right)\leq\phi_{\gamma}\left(\boldsymbol{Z}_i;\theta_{t-1}\right)$ for all $i=1,2,\ldots,n$. This result is due to the fact that for any twice differentiable function $f:\mathbb{R}^d\rightarrow\mathbb{R}$, with $\boldsymbol{x},\boldsymbol{v}\in\mathbb{R}^d$ and $d\in\mathbb{N}$, we have
\begin{equation}
f\left(\boldsymbol{x}+\boldsymbol{v}\right)-f\left(\boldsymbol{x}\right)=
\boldsymbol{v}^T\nabla f\left(\boldsymbol{x}\right)+\frac{1}{2}\boldsymbol{v}^T\nabla^2f\left(\tilde{\boldsymbol{x}}\right)\boldsymbol{v},
\label{eq:badihiTaylor}
\end{equation} 
with $\tilde{\boldsymbol{x}}\in \left\{\boldsymbol{x}+\mu\boldsymbol{v}\vert~0\leq\mu\leq1\right\}$. Also, based on Lemma \ref{lemma:sinhaDeltaApprox} and given the condition on $\delta_t$, we have
\begin{equation}
\Delta\triangleq 
\frac{\left\Vert
\hat{\partial}_{\theta}\hat{R}_{\mathrm{SSAR}}\left(\theta_{t-1};\boldsymbol{D}\right)-
\partial^*_{\theta}\hat{R}_{\mathrm{SSAR}}\left(\theta_{t-1};\boldsymbol{D}\right)
\right\Vert_2}
{\left\Vert
\partial^*_{\theta}\hat{R}_{\mathrm{SSAR}}\left(\theta_{t-1};\boldsymbol{D}\right)
\right\Vert_2}
\leq\frac{1}{2},
\end{equation}
where $\hat{\partial}_{\theta}\hat{R}_{\mathrm{SSAR}}\left(\theta_{t-1};\boldsymbol{D}\right)$ represents the sub-gradient of $\hat{R}_{\mathrm{SSAR}}\left(\theta_{t-1};\boldsymbol{D}\right)$ with the inexact solution of \eqref{eq:SSLDROinnermax} (a $\delta_t$-approximate solution), while $\partial^*_{\theta}\hat{R}_{\mathrm{SSAR}}\left(\theta_{t-1};\boldsymbol{D}\right)$ denotes the exact sub-gradient corresponding to the same data point chosen for iteration $t$. This result holds regardless of the randomness of Algorithm \ref{alg:SSLDROsgd} in choosing a sample for computing the sub-gradient. Using \eqref{eq:badihiTaylor},
it can be easily checked that
\begin{align}
&\hat{R}_{\mathrm{SSAR}}\left(\theta_{t};\boldsymbol{D}\right)-
\hat{R}_{\mathrm{SSAR}}\left(\theta_{t-1};\boldsymbol{D}\right)
\leq
\nonumber\\
&{\left\Vert
\partial^*_{\theta}\hat{R}_{\mathrm{SSAR}}\left(\theta_{t-1};\boldsymbol{D}\right)
\right\Vert^2_2}\left(
-\alpha_t\left(1-\Delta\right)+
\frac{\alpha^2_t}{2}
\left\vert\lambda_{\max}\left\{
\nabla^2_{\theta\theta}\phi_{\gamma}\left(\boldsymbol{Z}^{\left(t\right)}_{\mathrm{chosen}};\tilde{\theta}\right)
\right\}
\right\vert
\left(1+\Delta\right)^2
\right),
\end{align}
where $\boldsymbol{Z}^{\left(t\right)}_{\mathrm{chosen}}$ represents that particular $\boldsymbol{Z}_i,~i=1,2,\ldots,n$ that is chosen for computing the sub-gradient at interation $t_s\leq t\leq t_f$. Also, we have $\tilde{\theta}\in\left\{\mu\theta_{t-1}+\left(1-\mu\right)\theta_t\vert~0\leq \mu\leq1\right\}$. It is straightforward to check that due to the mentioned condition on $\alpha_t$, we have
\begin{equation}
\hat{R}_{\mathrm{SSAR}}\left(\theta_{t_f};\boldsymbol{D}\right)
\leq
\hat{R}_{\mathrm{SSAR}}\left(\theta_{t_s};\boldsymbol{D}\right).
\end{equation}
On the other hand, while transitioning from the $t_f$th to $\left(t_f+1\right)$th iteration, where at least one $y^*_i\left(\theta\right)$ changes by assumption, again we have
\begin{equation}
\hat{R}_{\mathrm{SSAR}}\left(\theta_{t_f+1};\boldsymbol{D}\right)
\leq
\hat{R}_{\mathrm{SSAR}}\left(\theta_{t_f};\boldsymbol{D}\right),
\end{equation}
due to the definition of $y^*_i\left(\theta_{t_f+1}\right)$ for $i\in\mathcal{I}_{\mathrm{ul}}$. This way, Algorithm \ref{alg:SSLDROsgd} never increases the optimization objective and convergence to a stable point is guaranteed as $T\rightarrow\infty$.

Obviously, the arguments of Theorem \ref{thm:hardConv} still hold for $\delta=0$. However, it is not practical since \eqref{eq:SSLDROinnermax} cannot be solved with an infinitesimally small error in reality. On the other hand, giving a convergence rate for the two scenarios considered in this theorem, i.e. $\lambda=\pm\infty$, falls out of the scope of this paper. A trivial upper-bound on the number of iterations increases exponentially w.r.t. the number of unlabeled samples $n_{\mathrm{ul}}$, which is based on the worst-case assumption that the combinatoric part of the optimization walks through all the possible labels for the unlabeled data. However, \cite{wu2016convergence} has experimentally shown that the convergence rate (at least for a class of similar problems) is much faster. It should be noted that solving for the exact convergence rate of Theorem \ref{thm:hardConv} is equivalent to assessing the convergence rate of {\it {self-training}}, which (to the best of our knowledge) is still an open area of research.
\end{proof}
\begin{thm2}[Convexity]
\label{corl:Convexity}
Assume the setting of Theorem \ref{thm:sgdConv} with $\Theta\subseteq\mathbb{R}^d$, for some $d\in\mathbb{N}$. Let the loss function $\ell:\mathcal{Z}\times\Theta\rightarrow\mathbb{R}_{\ge0}$ to be twice differentiable and strictly convex with respect to $\theta$, for all $\left(\boldsymbol{z},\theta\right)\in\mathcal{Z}\times\Theta$. Also, assume $\lambda$ satisfies the following property
\begin{equation}
\lambda \ge
-
\inf_{\left(\boldsymbol{z},\theta\right)\in\mathcal{Z}\times\Theta}
\frac{\lambda_{\min}\left\{
\nabla^2_{\theta\theta}\phi_{\gamma}\left(\boldsymbol{z};\theta\right)
\right\}}
{\sigma^2\left(1-\left\vert\mathcal{Y}\right\vert^{-1}\right)},
\label{eq:convexityGuarant}
\end{equation}
where $\nabla^2_{\theta\theta}$ is the Hessian matrix operator w.r.t. $\theta$, and $\lambda_{\min}\left\{\cdot\right\}:\mathbb{R}^{d\times d}\rightarrow\mathbb{R}$ denotes the minimum eigenvalue operator. Then, the optimization programs in \eqref{eq:main2} and \eqref{eq:SSLmainMin} w.r.t. $\theta$ are convex.
\end{thm2} 
\begin{proof}
For $\boldsymbol{z}_0\in\mathcal{Z}$, let us define the function $f_{\boldsymbol{z}_0}\left(\theta,\boldsymbol{z}\right):\Theta\times\mathcal{Z}\rightarrow\mathbb{R}$ as
\begin{equation}
f_{\boldsymbol{z}_0}\left(\theta,\boldsymbol{z}\right)\triangleq
\ell\left(\boldsymbol{z};\theta\right)-\gamma c\left(\boldsymbol{z},\boldsymbol{z}_0\right),
\end{equation}
then we have $\phi_{\gamma}\left(\boldsymbol{z}_0;\theta\right)=\max_{\boldsymbol{z}}f_{\boldsymbol{z}_0}\left(\theta,\boldsymbol{z}\right)$. Since $f$ is twice differentiable and convex w.r.t. $\theta$, $\phi_{\gamma}$ also shares these two properties based on Danskin's theorem \cite{bonnans2013perturbation}. Thus, the $d\times d$ hessian matrix $\nabla^2_{\theta\theta}\phi_{\gamma}$ is well-defined and positive definite for all $\left(\boldsymbol{z}_0,\theta\right)\in\mathcal{Z}\times\Theta$.

By looking at \eqref{eq:SSLmainMin}, the first summation over labeled samples, i.e. $i\in\mathcal{I}_{\mathrm{l}}$, is again a convex function w.r.t. $\theta$. However, the second summand might not be convex due to the usage of $\softmin$. Therefore, it is sufficient to provide conditions under which $\softmin^{\left(\lambda\right)}_{y\in\mathcal{Y}}\left\{\phi_{\gamma}\right\}$ becomes convex for all $\theta\in\Theta$. This will also prove the convexity of \eqref{eq:SSLmainMin}. Obviously, each $\softmin$ summand in the equation is twice differentiable and hence, for any $\boldsymbol{X}\in\mathcal{X}$, we have
\begin{align}
\nabla^2_{\theta\theta}\left(
\softmin^{\left(\lambda\right)}_{y\in\mathcal{Y}}\left\{
\phi_{\gamma}\left(\boldsymbol{X},y;\theta\right)
\right\}\right)&=
\nabla_{\theta}\left(
\sum_{y\in\mathcal{Y}}\beta_y\left(\theta\right)
\nabla_{\theta}\phi_{\gamma}\left(\boldsymbol{X},y;\theta\right)
\right)
\\
&=\sum_{y\in\mathcal{Y}}\left(
\beta_y\left(\theta\right)
\nabla^2_{\theta\theta}\phi_{\gamma}\left(\boldsymbol{X},y;\theta\right)+
\nabla_{\theta}\beta_y\left(\theta\right)
\nabla^T_{\theta}\phi_{\gamma}\left(\boldsymbol{X},y;\theta\right)
\right),
\nonumber
\end{align}  
where $\beta_y\left(\theta\right)$ (with $0\leq\beta_y\left(\theta\right)\leq1$ for $y\in\mathcal{Y}$ and $\theta\in\Theta$) is defined as
\begin{equation}
\beta_y\left(\theta\right)\triangleq
\frac{
e^{\lambda\phi_{\gamma}\left(\boldsymbol{X},y;\theta\right)}
}{
\sum_{y'\in\mathcal{Y}}
e^{\lambda\phi_{\gamma}\left(\boldsymbol{X},y';\theta\right)}
}
\quad,\mathrm{and~we~have}\quad
\sum_{y\in\mathcal{Y}}\beta_y\left(\theta\right)=1.
\end{equation}
Some mathematical simplifications reveal that
\begin{equation}
\nabla_{\theta}\beta_y\left(\theta\right)=
\lambda
\beta_y\left(\theta\right)\left(1-\beta_y\left(\theta\right)\right)
\nabla_{\theta}\phi_{\gamma}\left(\boldsymbol{X},y;\theta\right),
\end{equation}
and as a result we have the following formula for the Hessian matrix of each $\softmin$ summand:
\begin{align}
\label{eq:corlEq111}
\nabla^2_{\theta\theta}\left(
\softmin^{\left(\lambda\right)}_{y\in\mathcal{Y}}\left\{
\phi_{\gamma}\left(\boldsymbol{X},y;\theta\right)
\right\}\right)=&
\sum_{y\in\mathcal{Y}}
\beta_y\left(\theta\right)
\nabla^2_{\theta\theta}\phi_{\gamma}\left(\boldsymbol{X},y;\theta\right)
\\
+&{\lambda}
\sum_{y\in\mathcal{Y}}
\beta_y\left(\theta\right)\left(1-\beta_y\left(\theta\right)\right)
\nabla_{\theta}\phi_{\gamma}\left(\boldsymbol{X},y;\theta\right)
\nabla_{\theta}^T\phi_{\gamma}\left(\boldsymbol{X},y;\theta\right).
\nonumber
\end{align}
Note that for each $y\in\mathcal{Y}$, the $d\times d$ matrix $\nabla_{\theta}\phi_{\gamma}\left(\boldsymbol{X},y;\theta\right)
\nabla_{\theta}^T\phi_{\gamma}\left(\boldsymbol{X},y;\theta\right)$ is rank-one, positive semi-definite and its only non-zero eigenvalue equals to $\left\Vert\nabla_{\theta}\phi_{\gamma}\left(\boldsymbol{X},y;\theta\right)\right\Vert^2_2\leq\sigma^2$. Therefore, the matrix corresponding to the second summand in the r.h.s. of \eqref{eq:corlEq111} is negative semi-definite only if $\lambda<0$. In this case, i.e. having a negative $\lambda$, the following upper-bound holds for the magnitude of its largest eigenvalue:
\begin{equation}
\leq {\sigma^2}{\left\vert\lambda\right\vert}\max_{\boldsymbol{\beta}\in M\left(\mathcal{Y}\right)}\boldsymbol{\beta}^T\left(\boldsymbol{1} - \boldsymbol{\beta}\right)
=
{\sigma^2}{\left\vert\lambda\right\vert}\left(1-\left\vert\mathcal{Y}\right\vert^{-1}\right).
\label{eq:corlUpperB}
\end{equation} 
On the other hand, the first summand in the r.h.s. of \eqref{eq:corlEq111} is always positive definite and (since $\beta_y\left(\theta\right)$s sum up to $1$) its smallest eigenvalue satisfies the following lower-bound:
\begin{equation}
\ge \inf_{\left(\boldsymbol{z},\theta\right)\in\mathcal{Z}\times\Theta}
\lambda_{\min}\left\{
\nabla^2_{\theta\theta}\phi_{\gamma}\left(\boldsymbol{z}\vert\theta\right)
\right\}.
\label{eq:corlLowerB}
\end{equation}
Therefore, as long as i) $\lambda$ is non-negative, or ii) the upper-bound in \eqref{eq:corlUpperB} is strictly smaller than the lower-bound in \eqref{eq:corlLowerB}, which is the condition of the Theorem on $\lambda$, the Hessian of $\softmin^{\left(\lambda\right)}_{y\in\mathcal{Y}}\left\{\phi_{\gamma}\left(\boldsymbol{X},y;\theta\right)\right\}$ remains positive definite for all $\boldsymbol{z}\in\mathcal{Z}$ and $\theta\in\Theta$, and the proof is complete.

Note that due to assuming strict convexity and twice differentiability for $\ell$, $\nabla^2_{\theta\theta}\phi_{\gamma}$ is universally positive-definite and hence, the r.h.s. of \eqref{eq:convexityGuarant} is negative. This argument is a direct consequence of Danskin's theorem. However, there are no general ways to directly relate eigenvalues of $\nabla^2_{\theta\theta}\ell$ to those of $\nabla^2_{\theta\theta}\phi_{\gamma}$, since such relations extremely depend on the properties of function $\ell$.
\end{proof}


\begin{proof}[Proof of Theorem \ref{thm:generalBound1}]
We prove the Theorem in two steps. In the first step, we show that the empirical value of the proposed semi-supervised adversarial risk, i.e.  $\hat{R}_{\mathrm{SSAR}}\left(\theta;\boldsymbol{D}\right)$, {\it {uniformly}} converges to its expected value all over $\Theta$. In the second step, we use the asymptotic results of Theorem \ref{thm:statGeneral} to finalize the bounds. For the first step, a similar technique to the ones used in classical learning theory, e.g. \cite{mohri2012foundations}, is employed. In this regard, let the random variable $J\left(\boldsymbol{D}\right)$ to be defined as
\begin{equation}
J\left(\boldsymbol{D}\right)~\triangleq~
\sup_{\theta\in\Theta}~
\left\vert
\hat{R}_{\mathrm{SSAR}}\left(\theta;\boldsymbol{D}\right)-
\mathbb{E}_{P_0}\left\{
\hat{R}_{\mathrm{SSAR}}\left(\theta;\boldsymbol{D}\right)\right\}
\right\vert.
\end{equation}
On the other hand, we have $\left\vert\phi_{\gamma}\left(\boldsymbol{z};\theta\right)\right\vert\leq B$, for all $\boldsymbol{z}\in\mathcal{Z}$ and $\theta\in\Theta$. This can be deduced from the definition of adversarial loss $\phi_{\gamma}$ as follows:
\begin{align}
\phi_{\gamma}\left(\boldsymbol{z};\theta\right)&\triangleq
\sup_{\boldsymbol{z}'\in\mathcal{Z}}\ell\left(\boldsymbol{z}';\theta\right)-\gamma c\left(\boldsymbol{z}',\boldsymbol{z}\right)\leq
\sup_{\boldsymbol{z}'\in\mathcal{Z}}\ell\left(\boldsymbol{z}';\theta\right)\leq B,
\nonumber\\
\phi_{\gamma}\left(\boldsymbol{z};\theta\right)&\ge
\ell\left(\boldsymbol{z};\theta\right)-\gamma c\left(\boldsymbol{z},\boldsymbol{z}\right)=
\ell\left(\boldsymbol{z};\theta\right)\ge -B.
\end{align}
Also, note that
\begin{equation}
\left\vert
\softmin^{\left(\lambda\right)}_{y\in\mathcal{Y}}\left\{\phi_{\gamma}\left(\boldsymbol{X},y;\theta\right)\right\}
\right\vert
\leq B,\quad
\forall \lambda\in\mathbb{R}\cup\left\{\pm\infty\right\},
\end{equation}
for all $\boldsymbol{X}\in\mathcal{X}$ and $\theta\in\Theta$. Now, assume the two partially observed data sets $\boldsymbol{D}$ and $\boldsymbol{D}'$, both with size $n$, where the only difference between them is a single data point. Then, it can be readily deduced that
\begin{equation}
\left\vert
\hat{R}_{\mathrm{SSAR}}\left(\theta;\boldsymbol{D}\right)-
\hat{R}_{\mathrm{SSAR}}\left(\theta;\boldsymbol{D}'\right)
\right\vert
\leq
\frac{2B}{n}~~~\Rightarrow~~~
\left\vert
J\left(\boldsymbol{D}\right)-
J\left(\boldsymbol{D}'\right)
\right\vert
\leq
\frac{2B}{n}.
\end{equation}
In this regard, one can use the McDiarmid's inequality and show that: For all $0<\delta\leq 1$, with probability at least $1-\delta$, the following inequality holds:
\begin{equation}
J\left(\boldsymbol{D}\right)
\leq
\mathbb{E}_{P_0}\left\{J\left(\boldsymbol{D}\right)\right\}+
B\sqrt{\frac{2}{n}\log\frac{1}{\delta}},
\end{equation}
which also implies that the following uniform upper-bound exists for all $\theta\in\Theta$:
\begin{equation}
\left\vert
\hat{R}_{\mathrm{SSAR}}\left(\theta;\boldsymbol{D}\right)-
\mathbb{E}_{P_0}\left\{
\hat{R}_{\mathrm{SSAR}}\left(\theta;\boldsymbol{D}\right)\right\}
\right\vert
\leq
\mathbb{E}_{P_0}\left\{J\left(\boldsymbol{D}\right)\right\}+
B\sqrt{\frac{2}{n}\log\frac{1}{\delta}}.
\end{equation}
The term
$\mathbb{E}_{P_0}\left\{J\left(\boldsymbol{D}\right)\right\}$
does not depend on the randomness of the chosen dataset and is a function of the hypothesis set $\mathcal{L}$ (or more precisely, its adversarial counterpart $\Phi$), and distribution $P_0$. It plays the role of {\it {Rademacher complexity}} in classical learning theory. In order to express this term in a more intuitive formulation, first let us introduce the function $f\left(\boldsymbol{z},h;\theta\right)$ for $\boldsymbol{z}=\left(\boldsymbol{X},y\right)$ and $h\in\mathcal{H}\triangleq\left\{\mathrm{l},\mathrm{ul}\right\}$ as follows:
\begin{equation}
f\left(\boldsymbol{z},h;\theta\right)\triangleq
\left\{\begin{array}{lc}
\phi_{\gamma}\left(\boldsymbol{X},y;\theta\right) & h=\mathrm{l}
\\[2mm]
\softmin^{\left(\lambda\right)}_{y\in\mathcal{Y}}\left\{
\phi_{\gamma}\left(\boldsymbol{X},y;\theta\right)
\right\}
&
h=\mathrm{ul}
\end{array}\right.,
\end{equation}
where the rest of parameters are omitted from the input arguments of $f$ for the sake of simplicity in notation. It should be noted that we can write:
\begin{equation}
\mathbb{E}_{\boldsymbol{D}\sim P_0}\left\{\cdot\right\}
=
\mathbb{E}_{h_1,\ldots,h_n\in\mathcal{H}}\left\{
\mathbb{E}_{\boldsymbol{z}_1,\ldots,\boldsymbol{z}_n\sim P_0}\left\{
\cdot
\right\}\right\}~\Rightarrow~
\mathbb{E}_{P_0}\left\{
\hat{R}_{\mathrm{SSAR}}\left(\theta;\boldsymbol{D}\right)
\right\}=
\mathbb{E}_h\left\{
\mathbb{E}_{\boldsymbol{z}}\left\{
f\left(\boldsymbol{z},h;\theta\right)
\right\}
\right\}
\end{equation}
where $h_1,\ldots,h_n$ are i.i.d. bi-categorical random variables in $\mathcal{H}$, with probabilities of $\eta$ and $1-\eta$ for $h=\mathrm{l}$ and $h=\mathrm{ul}$, respectively. Then, Similar to \cite{mohri2012foundations}, one can write the following set of relations:
\begin{align}
\mathbb{E}_{P_0}\left\{J\left(\boldsymbol{D}\right)\right\}&=
\mathbb{E}_{\boldsymbol{D}\sim P_0}\left\{
\sup_{\theta\in\Theta}\left\vert
\hat{R}_{\mathrm{SSAR}}\left(\theta;\boldsymbol{D}\right)-
\mathbb{E}_{\boldsymbol{D}'\sim P_0}\left\{
\hat{R}_{\mathrm{SSAR}}\left(\theta;\boldsymbol{D}'\right)
\right\}
\right\vert
\right\}
\nonumber\\
&=
\mathbb{E}_{\boldsymbol{D}\sim P_0}\left\{
\sup_{\theta\in\Theta}\left\vert
\mathbb{E}_{\boldsymbol{D}'\sim P_0}\left\{
\hat{R}_{\mathrm{SSAR}}\left(\theta;\boldsymbol{D}\right)-
\hat{R}_{\mathrm{SSAR}}\left(\theta;\boldsymbol{D}'\right)
\right\}
\right\vert
\right\}
\nonumber\\
&\leq
\mathbb{E}_{\boldsymbol{D},\boldsymbol{D}'\sim P_0}\left\{
\sup_{\theta\in\Theta}\left\vert
\hat{R}_{\mathrm{SSAR}}\left(\theta;\boldsymbol{D}\right)-
\hat{R}_{\mathrm{SSAR}}\left(\theta;\boldsymbol{D}'\right)
\right\vert
\right\}
\nonumber\\
&=
\mathbb{E}_{\boldsymbol{h}_{1:n},\boldsymbol{h}'_{1:n}\in\mathcal{H}}\left\{
\mathbb{E}_{\boldsymbol{z}_{1:n},\boldsymbol{z}'_{1:n}\sim P_0}\left\{
\sup_{\theta\in\Theta}\left\vert
\frac{1}{n}\sum_{i=1}^{n}
f\left(\boldsymbol{z}_i,h_i;\theta\right)-
f\left(\boldsymbol{z}'_i,h'_i;\theta\right)
\right\vert
\right\}\right\}
\nonumber\\
&=
\mathbb{E}_{\boldsymbol{h}_{1:n},\boldsymbol{h}'_{1:n}\in\mathcal{H}}\left\{
\mathbb{E}_{\boldsymbol{z}_{1:n},\boldsymbol{z}'_{1:n}\sim P_0,~\boldsymbol{\sigma}}\left\{
\sup_{\theta\in\Theta}\left\vert
\frac{1}{n}\sum_{i=1}^{n}\sigma_i\left(
f\left(\boldsymbol{z}_i,h_i;\theta\right)-
f\left(\boldsymbol{z}'_i,h'_i;\theta\right)
\right)
\right\vert
\right\}\right\}
\nonumber\\
&\leq
2\mathbb{E}_{\boldsymbol{h}_{1:n}\in\mathcal{H}}\left\{
\mathbb{E}_{\boldsymbol{z}_{1:n}\sim P_0,~\boldsymbol{\sigma}}\left\{
\sup_{\theta\in\Theta}\left\vert
\frac{1}{n}\sum_{i=1}^{n}\sigma_i
f\left(\boldsymbol{z}_i,h_i;\theta\right)
\right\vert
\right\}\right\},
\end{align}
where $\boldsymbol{\sigma}\in\left\{-1,+1\right\}^n$ represents a vector of $n$ i.i.d. Rademacher random variables. Based on this result and its preceding discussions, one can write:
\begin{align}
\label{eq:RadBreak}
\frac{1}{2}\mathbb{E}_{P_0}\left\{J\left(\boldsymbol{D}\right)\right\}=~
&\eta\mathbb{E}_{\boldsymbol{z}_{1:n\eta}\sim P_0,~\boldsymbol{\sigma}}\left\{
\sup_{\theta\in\Theta}\frac{1}{n\eta}\sum_{i=1}^{n\eta}
\sigma_i\phi_{\gamma}\left(\boldsymbol{z}_i;\theta\right)
\right\}
\\
+&\left(1-\eta\right)\mathbb{E}_{\boldsymbol{X}_{1:n\left(1-\eta\right)}\sim P_{0_{\boldsymbol{X}}},~\boldsymbol{\sigma}}\left\{
\sup_{\theta\in\Theta}\frac{1}{n\left(1-\eta\right)}\sum_{i=1}^{n\left(1-\eta\right)}
\sigma_i\softmin^{\left(\lambda\right)}_{y\in\mathcal{Y}}
\left\{
\phi_{\gamma}\left(\boldsymbol{X}_i,y;\theta\right)
\right\}
\right\}.
\nonumber
\end{align}
The first term in the r.h.s. of \eqref{eq:RadBreak} can be more analytically investigated. In order to do so, let us define the $\epsilon$-neighborhood around $\boldsymbol{z}_0$ as 
$\mathcal{N}_{\epsilon}\left(\boldsymbol{z}_0\right)\triangleq\left\{\boldsymbol{z}\in\mathcal{Z}\vert c\left(\boldsymbol{z},\boldsymbol{z}_0\right)\leq\epsilon\right\}$, for $\epsilon\ge0$. Then, there exists $\epsilon\ge0$ such that
\begin{align}
\mathbb{E}_{\boldsymbol{z}_{1:n}\sim P_0,~\boldsymbol{\sigma}}\left\{
\sup_{\theta\in\Theta}\frac{1}{n}\sum_{i=1}^{n}
\sigma_i\phi_{\gamma}\left(\boldsymbol{z}_i;\theta\right)
\right\}&=
\mathbb{E}_{\boldsymbol{z}_{1:n}\sim P_0,~\boldsymbol{\sigma}}\left\{
\sup_{\theta\in\Theta}\frac{1}{n}\sum_{i=1}^{n}
\sigma_i
\sup_{\boldsymbol{z}'_i\in\mathcal{Z}}
\ell\left(\boldsymbol{z}'_i;\theta\right)-\gamma c\left(\boldsymbol{z}'_i,\boldsymbol{z}_i\right)
\right\}
\nonumber\\
&=
\mathbb{E}_{\boldsymbol{z}_{1:n}\sim P_0,~\boldsymbol{\sigma}}\left\{
\sup_{\theta\in\Theta}\frac{1}{n}\sum_{i=1}^{n}
\sigma_i
\left[
\sup_{\boldsymbol{z}'_i\in\mathcal{N}_{\epsilon}\left(\boldsymbol{z}_i\right)}
\ell\left(\boldsymbol{z}'_i;\theta\right)-\gamma\epsilon
\right]
\right\}
\nonumber\\
&=
\mathbb{E}_{\boldsymbol{z}_{1:n}\sim P_0,~\boldsymbol{\sigma}}\left\{
\sup_{\theta\in\Theta}\frac{1}{n}\sum_{i=1}^{n}
\sigma_i
\sup_{\boldsymbol{z}'_i\in\mathcal{N}_{\epsilon}\left(\boldsymbol{z}_i\right)}
\ell\left(\boldsymbol{z}'_i;\theta\right)
\right\}
\nonumber\\
&=g_{\mathrm{l}}\left(n\right),
\end{align}
where $g_{\mathrm{l}}\left(n\right)$ can be found in Definition \ref{def:SSM_main}, with the function set $\mathcal{F}$ representing the loss function set $\mathcal{L}$ in the above relations. For the second term on the r.h.s. of \eqref{eq:RadBreak}, the following inequality holds for all $\lambda\in\mathbb{R}\cup\left\{\pm\infty\right\}$:
\begin{align}
&\mathbb{E}_{\boldsymbol{X}_{1:n},\ldots,\boldsymbol{X}_n\sim P_{0_{\boldsymbol{X}}},~\boldsymbol{\sigma}}\left\{
\sup_{\theta\in\Theta}\frac{1}{n}\sum_{i=1}^{n}
\sigma_i\softmin^{\left(\lambda\right)}_{y\in\mathcal{Y}}
\left\{
\phi_{\gamma}\left(\boldsymbol{X}_i,y;\theta\right)
\right\}
\right\}
\nonumber\\
\leq~&
\mathbb{E}_{\boldsymbol{X}_{1:n}\sim P_{0_{\boldsymbol{X}}},~\boldsymbol{\sigma}}\left\{
\left(
\Pi_{y\in\mathcal{Y}}
\sup_{\theta_y\in\Theta}
\right)
\frac{1}{n}\sum_{i=1}^{n}
\sigma_i\softmin^{\left(\lambda\right)}_{y\in\mathcal{Y}}
\left\{
\phi_{\gamma}\left(\boldsymbol{X}_i,y;\theta_y\right)
\right\}
\right\}
\nonumber\\
\leq~&
\sum_{y\in\mathcal{Y}}
\mathbb{E}_{\boldsymbol{z}_{1:n}\sim\left(P_{0_{\boldsymbol{X}}}\delta_y\right),~\boldsymbol{\sigma}}
\left\{
\sup_{\theta\in\Theta}\frac{1}{n}\sum_{i=1}^{n}\sigma_i \sup_{\boldsymbol{z}'_i\in\mathcal{N}_{\epsilon}\left(\boldsymbol{z}_i\right)}\ell\left(\boldsymbol{z}'_i;\theta\right)
\right\}=g_{\mathrm{ul}}\left(n\right).
\end{align}
The last two inequalities above are the results of Lemma \ref{lemma:RadSumSoft} (see below), and Definition \ref{def:SSM_main}, respectively. The following lemma helps us to resolve the presence of $\softmin$ operator in the formulation of $\mathbb{E}_{P_0}\left\{J\left(\boldsymbol{D}\right)\right\}$.
\\[-3mm]
\begin{lemma}
\label{lemma:RadSumSoft}
Assume the function sets
$\mathcal{F}_j\subseteq\mathbb{R}^{\mathcal{Z}},~j=1,\ldots,d$, where $\mathcal{Z}$ denotes a vector domain and $d\in\mathbb{N}$. Also, assume $\boldsymbol{\sigma}=\left(\sigma_1,\ldots,\sigma_d\right)$ to be a vector of i.i.d. Rademacher variables, and $\boldsymbol{Z}=\left\{\boldsymbol{z}_1,\ldots,\boldsymbol{z}_n\right\}$ are i.i.d. generated data points in domain $\mathcal{Z}$, according to some probability measure. Then, the following upper-bound holds for all $\lambda\in\mathbb{R}\cup\left\{\pm\infty\right\}$:
\begin{equation}
\mathbb{E}_{\boldsymbol{Z},\boldsymbol{\sigma}}\left\{
\left(\prod_{j=1}^{d}\sup_{f_j\in\mathcal{F}_j}\right)~\frac{1}{n}\sum_{i=1}^{n}
\sigma_i\softmin^{\left(\lambda\right)}_{j=1,\ldots,d}\left(
f_j\left(\boldsymbol{z}_i;\theta\right)
\right)
\right\}
\leq
\sum_{j=1}^{d}\mathcal{R}_n\left(\mathcal{F}_j\right),
\end{equation}
where $\mathcal{R}_n\left(\cdot\right)$ denotes the $n$-point expected Rademacher complexity w.r.t. to the same distribution that generates the samples in $\boldsymbol{Z}$. 
\end{lemma}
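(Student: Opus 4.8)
The plan is to peel the $d$-argument $\softmin$ down to a two-argument version and bootstrap. First I would discard the harmless normalising constant: writing $\softmin^{(\lambda)}_{j=1,\dots,m}(\boldsymbol{b})=L_\lambda(\boldsymbol{b})-\tfrac{\log m}{\lambda}$ with $L_\lambda(\boldsymbol{b})\triangleq\tfrac1\lambda\log\sum_{j}e^{\lambda b_j}$ (and $L_{+\infty}=\max$, $L_{-\infty}=\min$, $L_0=$ arithmetic mean), and using that adding a constant to every member of a function class leaves $\mathbb{E}\sup_f\tfrac1n\sum_i\sigma_i f(\boldsymbol{z}_i)$ unchanged because $\mathbb{E}_{\boldsymbol{\sigma}}\sum_i\sigma_i=0$, the claim becomes equivalent to the same inequality with $L_\lambda$ replacing $\softmin^{(\lambda)}$. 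The case $\lambda=0$ is then trivial because $L_0(\boldsymbol{b})=\tfrac1d\sum_j b_j$ and the supremum of the resulting separable sum factorises over the $f_j$; so assume $\lambda\in(\mathbb{R}\setminus\{0\})\cup\{\pm\infty\}$.

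The heart of the argument is a tight binary contraction: for arbitrary $T,S\subseteq\mathbb{R}^n$,
\begin{equation*}
\mathbb{E}_{\boldsymbol{\sigma}}\sup_{\boldsymbol{t}\in T,\,\boldsymbol{s}\in S}\sum_{i=1}^n\sigma_i\,L_\lambda(t_i,s_i)\ \le\ \mathbb{E}_{\boldsymbol{\sigma}}\sup_{\boldsymbol{t}\in T}\sum_i\sigma_i t_i\ +\ \mathbb{E}_{\boldsymbol{\sigma}}\sup_{\boldsymbol{s}\in S}\sum_i\sigma_i s_i .
\end{equation*}
To obtain it I would use the symmetric splitting $L_\lambda(t,s)=\tfrac{t+s}{2}+r_\lambda(t-s)+c_\lambda$, valid by translation invariance of $L_\lambda$, where $c_\lambda$ is a constant and $r_\lambda(w)\triangleq L_\lambda(\tfrac w2,-\tfrac w2)-L_\lambda(0,0)$ satisfies $r_\lambda(0)=0$ and $|r_\lambda'|\le\tfrac12$ (its derivative equals $\tfrac12(p_1-p_2)$ for a probability vector $(p_1,p_2)$, hence is at most $\tfrac12$ in absolute value; for $\lambda=\pm\infty$ one simply has $r_\lambda(w)=\pm\tfrac12|w|$). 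Inserting the splitting, dropping $c_\lambda$ (it integrates to zero against $\boldsymbol{\sigma}$), bounding the supremum of the sum by the sum of suprema, and invoking the scalar Rademacher contraction inequality \cite{mohri2012foundations} for the $\tfrac12$-Lipschitz map $r_\lambda$ applied to $w_i=t_i-s_i\in T-S$ gives
\begin{equation*}
\mathbb{E}_{\boldsymbol{\sigma}}\sup_{\boldsymbol{t},\boldsymbol{s}}\sum_i\sigma_i L_\lambda(t_i,s_i)\ \le\ \tfrac12\mathbb{E}_{\boldsymbol{\sigma}}\sup_{\boldsymbol{t}}\sum_i\sigma_i t_i+\tfrac12\mathbb{E}_{\boldsymbol{\sigma}}\sup_{\boldsymbol{s}}\sum_i\sigma_i s_i+\tfrac12\mathbb{E}_{\boldsymbol{\sigma}}\sup_{\boldsymbol{w}\in T-S}\sum_i\sigma_i w_i ,
\end{equation*}
and since $-\boldsymbol{\sigma}$ has the same law as $\boldsymbol{\sigma}$, the last term equals $\tfrac12\big(\mathbb{E}_{\boldsymbol{\sigma}}\sup_{\boldsymbol{t}}\sum_i\sigma_i t_i+\mathbb{E}_{\boldsymbol{\sigma}}\sup_{\boldsymbol{s}}\sum_i\sigma_i s_i\big)$, so the two halves add to exactly the desired right-hand side. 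Dividing by $n$ and taking $\mathbb{E}_{\boldsymbol{Z}}$, with $T$ and $S$ the coordinate projections of two function classes onto $\{\boldsymbol{z}_i\}_{i=1}^n$, rephrases this in terms of $\mathcal{R}_n$.

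The full statement then follows by induction on $d$. The base case $d=1$ is immediate since $L_\lambda(b_1)=b_1$. For the step I would use the self-similarity $L_\lambda(b_1,\dots,b_d)=L_\lambda\big(L_\lambda(b_1,\dots,b_{d-1}),\,b_d\big)$ — true for finite $\lambda\neq0$ because $e^{\lambda L_\lambda(b_{1:d-1})}=\sum_{j<d}e^{\lambda b_j}$, and for $\lambda=\pm\infty$ by associativity of $\max/\min$. Setting $\widetilde{\mathcal{F}}\triangleq\{\boldsymbol{z}\mapsto L_\lambda(f_1(\boldsymbol{z}),\dots,f_{d-1}(\boldsymbol{z})):f_j\in\mathcal{F}_j\}$, the induction hypothesis (again using that constants do not affect $\mathcal{R}_n$) yields $\mathcal{R}_n(\widetilde{\mathcal{F}})\le\sum_{j<d}\mathcal{R}_n(\mathcal{F}_j)$, and applying the binary contraction to the pair $(\widetilde{\mathcal{F}},\mathcal{F}_d)$ closes the induction with the bound $\sum_{j=1}^d\mathcal{R}_n(\mathcal{F}_j)$.

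The step I expect to be delicate is the binary contraction \emph{with the sharp constant}. A crude estimate using only that $L_\lambda$ is coordinatewise $1$-Lipschitz (bounding $|L_\lambda(t,s)-L_\lambda(t',s')|$ by $|t-t'|+|s-s'|$ and then contracting each coordinate) overshoots and produces roughly $2\,\mathcal{R}(T)+\mathcal{R}(S)$ rather than $\mathcal{R}(T)+\mathcal{R}(S)$; it is precisely the split of $L_\lambda$ into its \emph{symmetric} affine part $\tfrac{t+s}{2}$ — which distributes as $\tfrac12\mathcal{R}(T)+\tfrac12\mathcal{R}(S)$ — plus a univariate function of $t-s$ whose Lipschitz constant is exactly $\tfrac12$ — which contributes another $\tfrac12\mathcal{R}(T)+\tfrac12\mathcal{R}(S)$ through the $T-S$ reduction — that makes the two pieces add up to the optimal $\mathcal{R}(T)+\mathcal{R}(S)$.
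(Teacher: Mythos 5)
Your proof is correct and follows essentially the same route as the paper's: decompose the two-argument soft-min into the symmetric part $\tfrac{t+s}{2}$ plus a Lipschitz function of $t-s$, apply Talagrand's contraction together with the symmetry of the Rademacher variables to get the sharp binary bound $\mathcal{R}_n(T)+\mathcal{R}_n(S)$, and then induct on $d$ through the recursive structure of $\softmin$. The only difference is cosmetic: you strip the normalizing constant and work with the unnormalized log-sum-exp (so the recursion is unweighted), whereas the paper keeps the normalization and uses the weighted combination $H_{\lambda,\frac{d-1}{d},\frac{1}{d}}$ — the two are equivalent since additive constants do not change the Rademacher average.
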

\begin{proof}
Looking at the definition of $\softmin$ in \eqref{eq:phiGammaDef}, first let us consider the following function: For $a,b\in\mathbb{R}$ and non-negative parameters $\alpha$ and $\beta$, with $\alpha+\beta=1$, define
\begin{equation}
H_{\lambda,\alpha,\beta}\left(a,b\right)
\triangleq
\frac{1}{\lambda}
\log\left(
\alpha e^{\lambda a}+
\beta e^{\lambda b}
\right).
\end{equation}
Then, the following relations hold:
\begin{align}
H_{\lambda,\alpha,\beta}\left(a,b\right) &= 
a + \frac{1}{\lambda}\log\left(\alpha+\beta e^{\lambda\left(b-a\right)}\right)
\nonumber\\
&=b + \frac{1}{\lambda}\log\left(\beta+\alpha e^{\lambda\left(a-b\right)}\right)
\end{align}
and, as a result
\begin{align}
H_{\lambda,\alpha,\beta}\left(a,b\right) &= 
{\frac{a}{2}+\frac{b}{2}}+\frac{1}{2\lambda}\left[
\log\left(\alpha + \beta e^{\lambda \left(b-a\right)}\right) +
\log\left(\beta + \alpha e^{\lambda \left(a-b\right)}\right)
\right]
\nonumber \\
&\triangleq {\frac{a+b}{2}}+h_{\lambda,\alpha,\beta}\left(b-a\right),
\end{align}
where the last equality is in fact the definition of  $h_{\lambda,\alpha,\beta}:\mathbb{R}\rightarrow\mathbb{R}$. It should be noted that $h_{\lambda,\alpha,\beta}\left(0\right)=0$. Also, the following holds for the derivative of $h_{\lambda,\alpha,\beta}\left(\cdot\right)$:
\begin{equation}
h'_{\lambda,\alpha,\beta}\left(u\right)=\frac{
\beta^2 e^{\lambda u} - \alpha^2e^{-\lambda u}
}{
2\alpha\beta + \beta^2 e^{\lambda u} + \alpha^2e^{-\lambda u}
}=
\frac{
\beta e^{\left(\lambda u\right)/2} - \alpha e^{\left(-\lambda u\right)/2}
}{
\beta e^{\left(\lambda u\right)/2} + \alpha e^{\left(-\lambda u\right)/2}
},
\end{equation}
which indicates $\left\vert h'_{\lambda,\alpha,\beta}\left(u\right)\right\vert\leq 1$, for all $u\in\mathbb{R}$ and the legitimate set of parameters $\left(\lambda,\alpha,\beta\right)$. Therefore, $h'_{\lambda,\alpha,\beta}$ is a 1-Lipschitz continuous function. In this regard, for any two real-valued function sets $\mathcal{A}$ and $\mathcal{B}$ whose domain is $\mathcal{Z}$, the following relation holds due to the {\it {sum inequality}} of Rademacher complexity:
\begin{align}
\mathbb{E}_{\boldsymbol{Z},\boldsymbol{\sigma}}\left\{
\sup_{a\in\mathcal{A},~b\in\mathcal{B}}~\frac{1}{n}\sum_{i=1}^{n}
\sigma_i H_{\lambda,\alpha,\beta}\left(
a\left(\boldsymbol{z}_i\right),
b\left(\boldsymbol{z}_i\right)\right)
\right\}&=
\mathcal{R}_n\left(\left\{
\frac{a+b}{2}+\frac{1}{2}h_{\lambda,\alpha,\beta}\left(b-a\right)
\bigg\vert
~a\in\mathcal{A},~b\in\mathcal{B}
\right\}
\right)
\nonumber \\
&\leq
\frac{1}{2}\left[\mathcal{R}_n\left(\mathcal{A}\right)+
\mathcal{R}_n\left(\mathcal{B}\right)+
\mathcal{R}_n\left(h_{\lambda,\alpha,\beta}\circ\mathcal{C}\right)
\right],
\end{align}
where $\mathcal{C}\triangleq\left\{a-b\vert~a\in\mathcal{A},~b\in\mathcal{B}\right\}$. It can be readily verified that $\mathcal{R}_n\left(\mathcal{C}\right)\leq\mathcal{R}_n\left(\mathcal{A}\right)+\mathcal{R}_n\left(\mathcal{B}\right)$. Also, {\it {Talagrand's contraction lemma}} in statistical learning theory \cite{mohri2012foundations} states that given the above properties for a 1-Lipschitz function $h_{\lambda,\alpha,\beta}\left(\cdot\right)$, we have $\mathcal{R}_n\left(h_{\lambda,\alpha,\beta}\circ\mathcal{C}\right)\leq\mathcal{R}_n\left(\mathcal{C}\right)$. Therefore, the previous chain of inequalities can be concluded as
\begin{equation}
\mathcal{R}_n\left(
H_{\lambda,\alpha,\beta}\left(a,b\right)\big\vert~
a\in\mathcal{A},b\in\mathcal{B}
\right)
\leq\mathcal{R}_n\left(\mathcal{A}\right)+\mathcal{R}_n\left(\mathcal{B}\right).
\label{eq:Hradineq}
\end{equation}
for all $\lambda\in\mathbb{R}\cup\left\{\pm\infty\right\}$, and all $\alpha,\beta\ge0$ with $\alpha+\beta=1$. For the remainder of the proof, one should consider the following recursive relation for all $f_1,\ldots,f_d$:
\begin{equation}
\softmin^{\left(\lambda\right)}_{j=1,\ldots,d}\left(f_j\right)=
H_{\lambda,\frac{d-1}{d},\frac{1}{d}}\left(
\softmin^{\left(\lambda\right)}_{j=1,\ldots,d-1}\left(f_j\right),
f_d
\right),
\end{equation}
which can be verified through a simple substitution of parameters. By using \eqref{eq:Hradineq}, we have
\begin{align}
\mathcal{R}_n\left(
\softmin^{\left(\lambda\right)}_{j=1,\ldots,d}\left(f_j\right)
\bigg\vert
f_j\in\mathcal{F}_j
\right)
&\leq
\mathcal{R}_n\left(
\softmin^{\left(\lambda\right)}_{j=1,\ldots,d-1}\left(f_j\right)
\bigg\vert
f_j\in\mathcal{F}_j
\right)+
\mathcal{R}_n\left(\mathcal{F}_d\right).
\end{align}
Repeating the above inequality for $d$ consecutive times gives us the desired result and completes the proof.
\end{proof}
According to Definition \ref{def:SSM_main}, the previous upper-bounds can be simplified into the following statement: With probability at least $1-\delta$, and for all $\theta\in\Theta$, we have
\begin{equation}
\left\vert
\hat{R}_{\mathrm{SSAR}}\left(\theta;\boldsymbol{D}\right)-
\mathbb{E}_{P_0}
\left\{\hat{R}_{\mathrm{SSAR}}\left(\theta;\boldsymbol{D}\right)\right\}
\right\vert
\leq
2\left(\mathcal{R}^{\left(\mathrm{SSM}\right)}_{n,\left(\epsilon,\eta\right)}\left(\mathcal{L}\right)+
B\sqrt{\frac{\log\frac{1}{\delta}}{2n}}\right),
\label{eq:combine2}
\end{equation}
where $\epsilon\ge0$ is the dual counterpart of $\gamma$ in \eqref{eq:SSLmainMin}. Therefore, the empirical values of $R_{\mathrm{SSAR}}$ are always close (and asymptotically convergent) to their corresponding expected values. Next, we have to show that the expected value of $R_{\mathrm{SSAR}}$ legitimately upper-bounds the true risk at the solution point, i.e. $\theta^*\in\Theta$.

Let $\theta^*_{\mathrm{true}}$ to represent the true minimizer of the expected adversarial risk, i.e. $\theta^*_{\mathrm{true}}\triangleq\argmin_{\theta\in\Theta}\mathbb{E}_{P_0}\left\{\phi_{\gamma}\left(\boldsymbol{Z};\theta\right)\right\}$. Then, based on Theorem \ref{thm:statGeneral} and for any $\zeta\ge0$, there exists a neighborhood around $\theta^*_{\mathrm{true}}$, denoted by $\Theta_{\mathrm{local}}\subset\Theta$, such that the following gap is guaranteed to exist for all $\theta\notin\Theta_{\mathrm{local}}$:
\begin{equation}
\mathbb{E}_{P_0}\left\{
\hat{R}_{\mathrm{SSAR}}\left(\theta;\boldsymbol{D}\right)-
\hat{R}_{\mathrm{SSAR}}\left(\theta^*_{\mathrm{true}};\boldsymbol{D}\right)
\right\}\ge\zeta,
\end{equation}
given that the condition $\eta\ge\mathrm{MSR}_{\left(\Phi,P_0\right)}\left(\lambda,\zeta\right)$ is satisfied. According to the assumption on $\eta$ in the current theorem, it can be readily deduced that with probability at least $1-\delta$, the following relation holds for all $\theta\notin\Theta_{\mathrm{local}}$:
\begin{equation}
\hat{R}_{\mathrm{SSAR}}\left(\theta;\boldsymbol{D}\right)-
\hat{R}_{\mathrm{SSAR}}\left(\theta^*_{\mathrm{true}};\boldsymbol{D}\right)
>0
~~~\Rightarrow~~~
\theta^*\triangleq\argmin_{\theta\in\Theta}\hat{R}_{\mathrm{SSAR}}\left(\theta;\boldsymbol{D}\right)\in\Theta_{\mathrm{local}},
\label{eq:combine3}
\end{equation}
i.e. the minimizer of $\hat{R}_{\mathrm{SSAR}}\left(\theta;\boldsymbol{D}\right)$ also falls in $\Theta_{\mathrm{local}}$. Also, for all $\theta\in\Theta_{\mathrm{local}}$ and any $\epsilon\ge0$ we have
\begin{equation}
\mathbb{E}_{P_0}\left\{
\hat{R}_{\mathrm{SSAR}}\left(\theta;\boldsymbol{D}\right)
\right\}
~\ge~
\mathbb{E}_{P_0}\left\{
\phi_{\gamma}\left(\boldsymbol{Z};\theta\right)
\right\}+\gamma\epsilon
~\ge~
\sup_{P\in\mathcal{B}_{\epsilon}\left(P_0\right)}\mathbb{E}_{P}\left\{\ell\left(\boldsymbol{Z};\theta\right)\right\}.
\label{eq:combine4}
\end{equation}
Combining relations given in \eqref{eq:combine2}, \eqref{eq:combine3} and \eqref{eq:combine4} gives the desired result and completes the proof.
\end{proof}

\section{Auxiliary Lemmas and Proofs}
\label{sec:appendix:lemma}


\begin{proof}[Proof of Lemma \ref{lemma:softminLip}]
For simplicity, let us consider the following change of notation: for a fixed $\boldsymbol{X}\in\mathcal{X}$ and $\lambda\in\mathbb{R}$, define:
\begin{equation}
f\left(\theta\right)
\triangleq
\softmin_{y\in\mathcal{Y}}^{\left(\lambda\right)}\left\{
\phi_{\gamma}\left(\boldsymbol{X},y;\theta\right)
\right\},
\end{equation}
where $\boldsymbol{X}$ and $\lambda$ are hidden from $f$. Then, based on the definition of $\softmin$, it can be easily verified that we have the following formulation for $\nabla_{\theta}f$:
\begin{equation}
\nabla_{\theta}f=
\sum_{y\in\mathcal{Y}}\beta_{y}\left(\theta\right)
\nabla_{\theta}\phi_{\gamma}\left(\boldsymbol{X},y;\theta\right)
\quad\mathrm{with}\quad
\beta_{y}\left(\theta\right)\triangleq
\frac{
e^{\lambda\phi_{\gamma}\left(\boldsymbol{X},y;\theta\right)}
}{
\sum_{y'\in\mathcal{Y}}
e^{\lambda\phi_{\gamma}\left(\boldsymbol{X},y';\theta\right)}},~
y\in\mathcal{Y},
\end{equation}
where $\sum_{y\in\mathcal{Y}}\beta_y\left(\theta\right)=1$, for all $\theta\in\Theta$. Hence, the following inequalities hold:
\begin{align}
\left\Vert
\nabla_{\theta}f\left(\theta\right)-
\nabla_{\theta}f\left(\theta'\right)
\right\Vert_*&=
\left\Vert
\sum_{y\in\mathcal{Y}}\beta_y\left(\theta\right)
\nabla_{\theta}\phi_{\gamma}\left(\boldsymbol{X},y;\theta\right)-
\sum_{y\in\mathcal{Y}}\beta_y\left(\theta'\right)
\nabla_{\theta}\phi_{\gamma}\left(\boldsymbol{X},y;\theta'\right)
\right\Vert_*
\nonumber \\
&
\leq
\sum_{y\in\mathcal{Y}}\beta_y\left(\theta\right)\left\Vert
\nabla_{\theta}\phi_{\gamma}\left(\boldsymbol{X},y;\theta\right)-\nabla_{\theta}\phi_{\gamma}\left(\boldsymbol{X},y;\theta'\right)
\right\Vert_*
\nonumber\\
&~~~+
\sum_{y\in\mathcal{Y}}\left\Vert
\nabla_{\theta}\phi_{\gamma}\left(\boldsymbol{X},y;\theta'\right)
\right\Vert_*\left\vert
\beta_y\left(\theta\right)-\beta_y\left(\theta'\right)
\right\vert
\nonumber \\
&\leq
\sum_{y\in\mathcal{Y}}\beta_y\left(\theta\right)
\left(L_{\theta\theta}+\frac{L_{\boldsymbol{z}\theta}L_{\theta\boldsymbol{z}}}{\gamma-L_{\boldsymbol{z}\boldsymbol{z}}}\right)\left\Vert
\theta-\theta'
\right\Vert+
\sigma\omega\left\vert\mathcal{Y}\right\vert
\left\Vert
\theta-\theta'
\right\Vert
\nonumber\\
&=\left(L_{\theta\theta}+\frac{L_{\boldsymbol{z}\theta}L_{\theta\boldsymbol{z}}}{\gamma-L_{\boldsymbol{z}\boldsymbol{z}}}+\sigma\omega\left\vert\mathcal{Y}\right\vert\right)\left\Vert
\theta-\theta'
\right\Vert,
\end{align}
where $\omega$ denotes the Lipschitz constant of $\beta_y\left(\theta\right)$ w.r.t. $\theta$, for all $y\in\mathcal{Y}$. The last inequality is a direct consequence of assuming $\left\Vert\nabla_{\theta}\phi_{\gamma}\left(\boldsymbol{X},y;\theta\right)\right\Vert_*\leq\sigma$, which can be validated through the following mathematical argument: There exists $\epsilon\ge0$, such that
\begin{align}
\left\Vert\nabla_{\theta}\phi_{\gamma}\left(\boldsymbol{X},y;\theta\right)\right\Vert_*
&=
\left\Vert\nabla_{\theta}\left(\sup_{\boldsymbol{z}'\in\mathcal{Z}}\ell\left(\boldsymbol{X},y;\theta\right)
-\gamma c\left(\boldsymbol{z}',\left(\boldsymbol{X},y\right)\right)\right)\right\Vert_*
\nonumber \\
&=
\left\Vert\nabla_{\theta}
\ell\left(\argmax_{\boldsymbol{z}'\in\mathcal{Z}}~\ell\left(\boldsymbol{z}';\theta\right)-\gamma c\left(\boldsymbol{z}',\left(\boldsymbol{X},y\right)\right);\theta\right)\right\Vert_*
\leq
\sigma,
\label{eq:infcompact1}
\end{align}
where the last inequality is due to the assumption of Lemma \ref{lemma:supervisedLip} under an appropriate choice of norm. The middle equality in \eqref{eq:infcompact1} is the result of the extended Danskin's theorem which relaxes convexity into {\it {inf-compactness}} of function $\ell$. For proof of inf-compactness of $\ell$ and the consequent properties, see Section $4$ of \cite{bonnans2013perturbation}. In order to assess $\omega$, which is an indicator of smoothness for $\beta_y\left(\theta\right)$, one can take advantage of the {\it {Mean Value Theorem}} \cite{miller1993introduction}, as follows:
\begin{equation}
\left\vert
\beta_y\left(\theta\right)
-\beta_y\left(\theta'\right)
\right\vert
\leq
\max_{y\in\mathcal{Y}}
\sup_{\theta^*\in\mathcal{T}\left(\theta\rightarrow\theta'\right)}
\left\Vert
\nabla_{\theta}\beta_y\left(\theta^*\right)
\right\Vert
\left\Vert
\theta-\theta'
\right\Vert,\quad
\theta,\theta'\in\Theta,
\label{eq:betaLipEq1}
\end{equation}
where $\mathcal{T}\left(\theta\rightarrow\theta'\right)$ is the set of all continuous paths from $\theta$ to $\theta'$ that entirely lie in $\Theta$. It is not hard to verify that the gradient $\nabla_{\theta}\beta_y\left(\theta\right)$ has the following formulation:
\begin{equation}
\nabla_{\theta}\beta_y\left(\theta\right)=
\lambda{\beta_y\left(\theta\right)}
\sum_{y'\in\mathcal{Y}}
\beta_{y'}\left(\theta\right)\left(
\nabla_{\theta}\phi_{\gamma}\left(\boldsymbol{X},y;\theta\right)
-
\nabla_{\theta}\phi_{\gamma}\left(\boldsymbol{X},y';\theta\right)
\right),
\quad
\left(y,\theta\right)\in\mathcal{Y}\times\Theta,
\end{equation}
and hence satisfies the subsequent inequalities:
\begin{align}
\left\Vert\nabla_{\theta}\beta_y\left(\theta\right)\right\Vert
\leq
{2}{\left\vert\lambda\right\vert}\sum_{y'\in\mathcal{Y}}
\beta_{y'}\left(\theta\right)\max_{h\in\left\{y,y'\right\}}\left\{
\left\Vert
\nabla_{\theta}\phi_{\gamma}\left(\boldsymbol{X},h\vert\theta\right)
\right\Vert
\right\}
\leq
{2\sigma}{\left\vert\lambda\right\vert},\quad\forall\theta\in\Theta.
\label{eq:betaLipEq2}
\end{align}
Combining \eqref{eq:betaLipEq1} with \eqref{eq:betaLipEq2} provides us with the safe choice of $\omega=2\sigma\left\vert\lambda\right\vert$. Therefore, $\nabla_{\theta}f$ is $\left(L_{\theta\theta}+\frac{L_{\boldsymbol{z}\theta}L_{\theta\boldsymbol{z}}}{\gamma-L_{\boldsymbol{z}\boldsymbol{z}}}+{2\sigma^2}{\left\vert\lambda\right\vert}\left\vert\mathcal{Y}\right\vert\right)$-Lipschitz w.r.t. $\theta$, and the proof is complete.
\end{proof}
\begin{proof}[Proof of Lemma \ref{lemma:sinhaDeltaApprox}]
The proof is simple and directly results from the assumptions. According to the differentiablility of $\phi_{\gamma}$ w.r.t. $\theta$ which is a consequence of an extended version of Danskin's theorem (see Lemma \ref{lemma:softminLip}), the following relations hold:
\begin{align}
\left\Vert
\nabla_{\theta}
\phi_{\gamma}\left(\boldsymbol{z}_0;\theta\right)
-
\nabla_{\theta}
\ell\left(\hat{\boldsymbol{z}}^*;\theta\right)
\right\Vert_*
&=
\left\Vert
\nabla_{\theta}
\ell\left(\argmax_{\boldsymbol{z}'\in\mathcal{Z}}~\ell\left(\boldsymbol{z}';\theta\right)-
\gamma c\left(\boldsymbol{z}',\boldsymbol{z}_0\right)
;\theta\right)
-
\nabla_{\theta}
\ell\left(\hat{\boldsymbol{z}}^*;\theta\right)
\right\Vert_*
\nonumber\\
&\leq~
L_{\theta z}\left\Vert
\hat{\boldsymbol{z}}^*-
\argmax_{\boldsymbol{z}'\in\mathcal{Z}}\left(\ell\left(\boldsymbol{z}';\theta\right)
-\gamma c\left(\boldsymbol{z}',\boldsymbol{z}_0\right)
\right)
\right\Vert.
\label{eq:lemmaAux1010}
\end{align}
On the other hand, due to $\left(\gamma-L_{zz}\right)$-strict-concavity of \eqref{eq:SSLDROinnermax}, a $\delta$-approximation maximizer, i.e. $\hat{\boldsymbol{z}}^*$, satisfies
\begin{equation}
\left\Vert
\hat{\boldsymbol{z}}^*-
\argmax_{\boldsymbol{z}'\in\mathcal{Z}}\left(\ell\left(\boldsymbol{z}';\theta\right)
-\gamma c\left(\boldsymbol{z}',\boldsymbol{z}_0\right)
\right)
\right\Vert^2\leq
\frac{L_{z\theta}}{L_{\theta z}\left(\gamma-L_{zz}\right)}.
\end{equation} 
Substituting the above into \eqref{eq:lemmaAux1010} completes the proof.
\end{proof}
\begin{lemma}
\label{lemma:distFreeBounds}
Assume a feature-label space $\mathcal{Z}=\mathcal{X}\times\mathcal{Y}$ and a function class $\mathcal{F}\subseteq\mathbb{R}^{\mathcal{Z}}$, for a feature space $\mathcal{X}$ and a finite label set $\mathcal{Y}$. Also, assume there exists $\Delta:\mathbb{N}\rightarrow\mathbb{R}$, such that
$\mathcal{R}_{n}\left(\mathcal{F}\right)\leq \Delta\left(n\right)$, for all $n\in\mathbb{N}$ and any data distribution $P_0\in M\left(\mathcal{Z}\right)$. Then, the following holds:
\begin{equation}
\mathcal{R}_{n,\left(\epsilon,\eta\right)}^{\left(\mathrm{SSM}\right)}\left(
\mathcal{F}
\right)\leq
\eta \Delta\left(\lceil\eta n\rceil\right)
+
\left(1-\eta\right)\left\vert\mathcal{Y}\right\vert
\Delta\left(\lceil\left(1-\eta\right)n\rceil\right)
\end{equation}
for all distributions in $M\left(\mathcal{Z}\right)$, any $\epsilon\ge0$ and $\eta\in\left[0,1\right]$.
\end{lemma}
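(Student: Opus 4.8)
The plan is to peel off the two summands in the definition of $\mathcal{R}^{(\mathrm{SSM})}_{n,(\epsilon,\eta)}$ and reduce everything to a single scalar statement about the \emph{$\epsilon$-robustified} version of $\mathcal{F}$. By Definition \ref{def:SSM_main}, $\mathcal{R}^{(\mathrm{SSM})}_{n,(\epsilon,\eta)}(\mathcal{F}) = \eta\, g_{\mathrm{l}}(\lceil n\eta\rceil) + (1-\eta)\, g_{\mathrm{ul}}(\lceil n(1-\eta)\rceil)$, so it suffices to prove, for every $m\in\mathbb{N}$, that $g_{\mathrm{l}}(m)\le\Delta(m)$ and $g_{\mathrm{ul}}(m)\le|\mathcal{Y}|\,\Delta(m)$. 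For the unlabeled term the reduction is immediate once we observe that $g_{\mathrm{ul}}(m)$ is a sum of $|\mathcal{Y}|$ expressions, one per label $y$, each of which has exactly the shape of $g_{\mathrm{l}}(m)$ but with the $m$ points drawn from the distribution $\mu_y\in M(\mathcal{Z})$ that puts $P_{0_{\boldsymbol{X}}}$ on the first coordinate and $\delta_y$ on the second. Hence the whole lemma follows from the single claim: for every $\mu\in M(\mathcal{Z})$,
\[
\mathbb{E}_{\boldsymbol{W}_{1:m}\sim\mu,\,\boldsymbol{\sigma}}\Big\{\sup_{f\in\mathcal{F}}\frac1m\sum_{i=1}^m\sigma_i\big[\sup_{a\in\mathcal{A}_{\epsilon}}f(a(\boldsymbol{W}_i))\big]\Big\}\ \le\ \Delta(m).
\]

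To prove this, note first that because the constraint defining $\mathcal{A}_{\epsilon}$ is pointwise, $\sup_{a\in\mathcal{A}_{\epsilon}}f(a(\boldsymbol{w})) = \sup_{\boldsymbol{z}'\in\mathcal{N}_{\epsilon}(\boldsymbol{w})}f(\boldsymbol{z}')$ with $\mathcal{N}_{\epsilon}(\boldsymbol{w})\triangleq\{\boldsymbol{z}':c(\boldsymbol{w},\boldsymbol{z}')\le\epsilon\}$; thus the left-hand side above is exactly the Rademacher complexity $\mathcal{R}_m(\widetilde{\mathcal{F}};\mu)$ of the robustified class $\widetilde{\mathcal{F}}\triangleq\{\boldsymbol{w}\mapsto\sup_{\boldsymbol{z}'\in\mathcal{N}_{\epsilon}(\boldsymbol{w})}f(\boldsymbol{z}')\mid f\in\mathcal{F}\}$. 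I would then condition on the sample $\boldsymbol{W}_{1:m}=\boldsymbol{w}_{1:m}$ and bound the empirical Rademacher average $\widehat{\mathcal{R}}_{\boldsymbol{w}_{1:m}}(\widetilde{\mathcal{F}})$. Since (for binary, or more generally bounded, $f$) an empirical Rademacher average depends only on the restriction of the class to $\{\boldsymbol{w}_1,\dots,\boldsymbol{w}_m\}$, i.e.\ on the realizable vectors $\{(\widetilde f(\boldsymbol{w}_1),\dots,\widetilde f(\boldsymbol{w}_m))\mid f\in\mathcal{F}\}$, the key step is to argue that this set is no richer than the corresponding restriction of $\mathcal{F}$: for each fixed $f$, $\widetilde f(\boldsymbol{w}_i)$ is $f$ evaluated at a single maximizing point of $\mathcal{N}_{\epsilon}(\boldsymbol{w}_i)$, so the $m$ balls collapse to $m$ representative evaluation points and the growth function — equivalently, the $L_2$-covering numbers at every scale — of $\widetilde{\mathcal{F}}$ on $m$ points is controlled by that of $\mathcal{F}$ on $m$ points. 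Consequently the same distribution-free chain (Dudley's entropy integral together with Haussler's packing bound, exactly as in the VC example stated just after the lemma) that yields $\mathcal{R}_m(\mathcal{F})\le\Delta(m)$ applies verbatim to $\widetilde{\mathcal{F}}$, giving $\widehat{\mathcal{R}}_{\boldsymbol{w}_{1:m}}(\widetilde{\mathcal{F}})\le\Delta(m)$; taking expectations over $\boldsymbol{w}_{1:m}\sim\mu$ establishes the claim.

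The main obstacle is precisely this collapsing step: the maximizing point in $\mathcal{N}_{\epsilon}(\boldsymbol{w}_i)$ depends on $f$, so ``one representative point per ball'' has to be justified at the level of realizable behavior patterns (via the finite VC dimension — the regime in which a universal $\Delta$ exists — feeding into Massart's finite-class lemma), not by a naive substitution; one must also dispose of sup-attainment and measurable-selection technicalities by replacing each exact maximizer with a $\delta$-approximate one and letting $\delta\to0$, which only costs an additive $\delta$ inside the supremum. Everything after the claim is bookkeeping: substitute $m=\lceil n\eta\rceil$ into the bound for $g_{\mathrm{l}}$ and $m=\lceil n(1-\eta)\rceil$ into the bound for $g_{\mathrm{ul}}$, multiply by $\eta$ and $(1-\eta)$ respectively, and read off $\mathcal{R}^{(\mathrm{SSM})}_{n,(\epsilon,\eta)}(\mathcal{F})\le \eta\,\Delta(\lceil n\eta\rceil)+(1-\eta)|\mathcal{Y}|\,\Delta(\lceil n(1-\eta)\rceil)$, which is the assertion; the final claim that the right-hand side vanishes as $n\to\infty$ is then just the assumed decay of $\Delta$ together with $\lceil n\eta\rceil,\lceil n(1-\eta)\rceil\to\infty$ when $\eta\in(0,1)$ (and a direct check of the two boundary cases $\eta\in\{0,1\}$).
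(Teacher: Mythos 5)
Your overall skeleton is the same as the paper's: split $\mathcal{R}^{(\mathrm{SSM})}_{n,(\epsilon,\eta)}$ into its two summands, treat $g_{\mathrm{ul}}(m)$ as a sum of $\left\vert\mathcal{Y}\right\vert$ labeled-type terms with data drawn from $P_{0_{\boldsymbol{X}}}\otimes\delta_y$, and substitute $m=\lceil n\eta\rceil$, $m=\lceil n(1-\eta)\rceil$ at the end. The divergence, and the genuine gap, is in your ``single claim''. You propose to bound the Rademacher complexity of the robustified class $\widetilde{\mathcal{F}}=\{\boldsymbol{w}\mapsto\sup_{\boldsymbol{z}':c(\boldsymbol{w},\boldsymbol{z}')\le\epsilon}f(\boldsymbol{z}')\mid f\in\mathcal{F}\}$ by arguing that its realizable patterns (equivalently its growth function or covering numbers) on the $m$ sample points are controlled by those of $\mathcal{F}$ on $m$ points, because ``the $m$ balls collapse to $m$ representative evaluation points''. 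This is not a measurability technicality to be patched with $\delta$-approximate maximizers; it is false as stated. Since the maximizing point in each ball depends on $f$, the pattern vector $\bigl(\widetilde f(\boldsymbol{w}_1),\ldots,\widetilde f(\boldsymbol{w}_m)\bigr)$ ranges over a union over \emph{all} perturbed configurations $\boldsymbol{z}'_{1:m}$ of the restrictions of $\mathcal{F}$, not over the restriction to any single $m$-point set. Concretely, robustification can strictly inflate complexity: if $\mathcal{F}$ consists of indicators of single points of $\mathcal{X}\subseteq\mathbb{R}^d$ (VC dimension $1$), then $\widetilde f$ becomes the indicator of an $\epsilon$-ball, and the class of fixed-radius balls has VC dimension growing with $d$; so ``the same Dudley--Haussler chain applies verbatim to $\widetilde{\mathcal{F}}$ and gives $\Delta(m)$'' cannot be right. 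Note also that the lemma's hypothesis is only that \emph{some} distribution-free bound $\Delta$ exists, not that $\Delta$ arises from a VC/entropy argument, so even a correct entropy bound for $\widetilde{\mathcal{F}}$ in terms of its own covering numbers would not return $\Delta(m)$ itself.

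The paper's proof avoids bounding the complexity of $\widetilde{\mathcal{F}}$ altogether. It first reads the distribution-free hypothesis as a bound on the worst-case sample configuration, $\sup_{\boldsymbol{z}_{1:m}}\mathbb{E}_{\boldsymbol{\sigma}}\{\sup_{f\in\mathcal{F}}\frac1m\sum_i\sigma_i f(\boldsymbol{z}_i)\}\le\Delta(m)$, and then bounds each summand pointwise, $\sigma_i\sup_{a\in\mathcal{A}_\epsilon}f(a(\boldsymbol{z}_i))\le\sup_{\boldsymbol{z}'_i:c(\boldsymbol{z}_i,\boldsymbol{z}'_i)\le\epsilon}\sigma_i f(\boldsymbol{z}'_i)$, so that the adversarial Rademacher average is dominated by an ordinary Rademacher average of $\mathcal{F}$ evaluated at some perturbed configuration $\boldsymbol{z}'_{1:m}$, which is then absorbed into the worst-case-configuration bound (the delicate point in that route being the step that moves the perturbed configuration outside the expectation over $\boldsymbol{\sigma}$). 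To make your write-up work you would need either to follow that route, or to prove an actual comparison between adversarial and standard Rademacher complexity; the collapsing argument you sketch does not provide one, and in the generality of the lemma's hypothesis no such comparison is available through growth functions on $m$ points.
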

\begin{proof}
According to the assumption, $\Delta\left(n\right)$ is an upper-bound for Rademacher complexity of $\mathcal{F}$, regardless of the probability measure that generates the data samples. Therefore, one can write
\begin{equation}
\sup_{P_0\in M\left(\mathcal{Z}\right)}
\mathbb{E}_{\boldsymbol{z}_{1:n}\sim P_0,\boldsymbol{\sigma}}\left\{
\sup_{f\in\mathcal{F}}~\frac{1}{n}\sum_{i=1}^{n}\sigma_i f\left(\boldsymbol{z}_i\right)
\right\}
=
\sup_{\boldsymbol{z}_{1:n}\in\mathcal{Z}}
\mathbb{E}_{\boldsymbol{\sigma}}\left\{
\sup_{f\in\mathcal{F}}~\frac{1}{n}\sum_{i=1}^{n}\sigma_i f\left(\boldsymbol{z}_i\right)
\right\}
\leq
\Delta\left(n\right).
\end{equation}
In this regard, the following relations hold for the function $g_{\mathrm{l}}\left(n\right)$ of Definition \ref{def:SSM_main}:
\begin{align}
g_{\mathrm{l}}\left(n\right)&=
\mathbb{E}_{\boldsymbol{z}_{1:n}\sim P_0,\boldsymbol{\sigma}}\left\{
\sup_{f\in\mathcal{F}}~\frac{1}{n}\sum_{i=1}^{n}\sigma_i\left[
\sup_{a\in\mathcal{A}_{\epsilon}}~
f\left(a\left(\boldsymbol{z}_i\right)\right)
\right]
\right\}
\nonumber\\
&\leq
\mathbb{E}_{\boldsymbol{z}_{1:n}\sim P_0,\boldsymbol{\sigma}}\left\{
\sup_{\boldsymbol{z}'_{1:n}\in\mathcal{Z}\vert~c\left(\boldsymbol{z}_i,\boldsymbol{z}'_i\right)\leq\epsilon}~
\sup_{f\in\mathcal{F}}~\frac{1}{n}\sum_{i=1}^{n}\sigma_i\left[
f\left(\boldsymbol{z}'_i\right)
\right]
\right\}
\nonumber\\
&\leq
\sup_{\boldsymbol{z}'_{1:n}\in\mathcal{Z}}
\mathbb{E}_{\boldsymbol{\sigma}}\left\{
\sup_{f\in\mathcal{F}}~\frac{1}{n}\sum_{i=1}^{n}\sigma_i f\left(\boldsymbol{z}'_i\right)
\right\}
\leq
\Delta\left(n\right).
\end{align}
With some very similar mathematical arguments, one can easily show that $g_{\mathrm{ul}}\left(n\right)\leq\left\vert\mathcal{Y}\right\vert\Delta\left(n\right)$. Therefore, for any distribution $P_0$, any $\epsilon\ge0$ and any $\eta\in\left[0,1\right]$, we always have
\begin{align}
\mathcal{R}^{\left(\mathrm{SSM}\right)}_{n,\left(\epsilon,\eta\right)}
&\triangleq
\eta g_{\mathrm{l}}\left(\lceil n\eta\rceil\right)+
\left(1-\eta\right) g_{\mathrm{ul}}
\left(\lceil n\left(1-\eta\right)\rceil\right)
\nonumber\\
&\leq
\Delta\left(\lceil n\eta\rceil\right)+
\left(1-\eta\right)\left\vert\mathcal{Y}\right\vert \Delta
\left(\lceil n\left(1-\eta\right)\rceil\right).
\end{align}
and the proof is complete.

In particular, assume a $0$-$1$ loss function set $\mathcal{L}=\left\{\ell\left(\cdot;\theta\right)\vert~\theta\in\Theta\right\}$, where $\Theta$ denotes the parameter space of a classifier with a finite VC-dimension of $\mathrm{dim}\left(\Theta\right)$. Then, due to Dudley's entropy bound and Haussler's upper-bound \cite{mohri2012foundations}, there exists constant $C$ such that
\begin{equation}
\Delta\left(n\right)=
C\sqrt{\frac{\mathrm{dim}\left(\Theta\right)}{n}}
\end{equation}
is a valid upper-bound on the Rademacher complexity of $\mathcal{F}$ regardless of $P_0$. Then, one can write
\begin{align}
\mathcal{R}^{\left(\mathrm{SSM}\right)}_{n,\left(\epsilon,\eta\right)}
&\leq
\Delta\left(\lceil n\eta\rceil\right)+
\left(1-\eta\right)\left\vert\mathcal{Y}\right\vert \Delta
\left(\lceil n\left(1-\eta\right)\rceil\right)
\nonumber\\
&=
C\left[
\eta \sqrt{\frac{\mathrm{dim}\left(\Theta\right)}{\lceil n\eta\rceil}} +
\left(1-\eta\right)\left\vert\mathcal{Y}\right\vert
\sqrt{\frac{\mathrm{dim}\left(\Theta\right)}{\lceil n\left(1-\eta\right)\rceil}}
\right]
\nonumber\\
&\leq
C\left[
\eta \sqrt{\frac{\mathrm{dim}\left(\Theta\right)}{n\eta}} +
\left(1-\eta\right)\left\vert\mathcal{Y}\right\vert
\sqrt{\frac{\mathrm{dim}\left(\Theta\right)}{n\left(1-\eta\right)}}
\right]
\nonumber\\
&=
C\sqrt{\frac{\mathrm{dim}\left(\Theta\right)}{n}}
\left[
\sqrt{\eta} +
\left\vert\mathcal{Y}\right\vert
\sqrt{1-\eta}
\right].
\end{align}
This will also prove the claim on SSM Rademacher complexity in Section \ref{sec:proposed:general}.
\end{proof}

\end{document}